\newcommand{\etal}{\textit{~et~al.~}}
\pgfplotsset{compat=1.18}
\newtheorem{theorem}{Theorem}[section]
\newtheorem{assumption}[theorem]{Assumption}
\newtheorem{definition}[theorem]{Definition}
\def\b{\mathbf{b}}
\def\f{f}
\def\h{h}
\def\r{\mathbf{r}}
\def\v{\mathbf{v}}
\def\w{\mathbf{w}}
\def\x{x}
\def\y{y}
\def\A{\mathbf{A}}
\def\E{\mathbf{E}}
\def\H{\mathbf{H}}
\def\M{\mathbf{M}}
\def\N{N}
\def\P{\mathbf{P}}
\def\Q{\mathbf{Q}}
\def\R{\mathbf{R}}
\def\W{\mathbf{W}}
\def\X{X}
\def\Y{\mathbf{Y}}
\def\bo{\mathbf{1}}
\def\id{\mathbb{I}}
\title{Neural (Tangent Kernel) Collapse}
\author{%
  Mariia Seleznova$^1$\thanks{Correspondence to: Mariia Seleznova (\texttt{selez@math.lmu.de}).}\hspace{1em}Dana Weitzner$^2$\hspace{0.8em}Raja Giryes$^2$\hspace{0.8em}Gitta Kutyniok$^1$\hspace{0.8em}Hung-Hsu Chou$^1$\\
% Department of Mathematics\\
$^1$Ludwig-Maximilians-Universität München\hspace{1em}$^2$Tel Aviv University 
  % \AND ...
}
\begin{document}

\maketitle

\begin{abstract}

%Decomposing and interpreting the training phases of deep neural networks is a key to establish fundamental understanding of modern machine learning. 
This work bridges two important concepts: the Neural Tangent Kernel (NTK), which captures the evolution of deep neural networks (DNNs) during training, and the Neural Collapse (NC) phenomenon, which refers to the emergence of symmetry and structure in the last-layer features of well-trained classification DNNs. We adopt the natural assumption that the empirical NTK develops a block structure aligned with the class labels, i.e., samples within the same class have stronger correlations than samples from different classes. Under this assumption, we derive the dynamics of DNNs trained with mean squared (MSE) loss and break them into interpretable phases. Moreover, we identify an invariant that captures the essence of the dynamics, and use it to prove the emergence of NC in DNNs with block-structured NTK. We provide large-scale numerical experiments on three common DNN architectures and three benchmark datasets to support our theory.

%We believe that the techniques and connections presented in this work can be used in many other fields as well. 

\end{abstract}

%%%%%%%%%%%%%%%%%%%%%%%%%%%%%%%%%%%%%%%%
\section{Introduction}
%%%%%%%%%%%%%%%%%%%%%%%%%%%%%%%%%%%%%%%%
Deep Neural Networks (DNNs) are advancing the state of the art in many real-life applications, ranging from image classification to machine translation. 
Yet, there is no comprehensive theory that can explain a multitude of empirical phenomena observed in DNNs. In this work, we provide a theoretical connection between two such empirical phenomena, prominent in modern DNNs: \textit{Neural Collapse (NC)} and \textit{Neural Tangent Kernel (NTK) alignment}.

\paragraph{Neural Collapse.}NC \cite{papyan_2020_neural_collapse} emerges while training modern classification DNNs past zero error to further minimize the loss. During NC, the class means of the DNN's last-layer features form a symmetric structure with maximal separation angle, while the features of each individual sample collapse to their class means. This simple structure of the feature vectors appears favourable for generalization and robustness in the literature \cite{cisse2017class_separation,pernici2019class_separation,sun2020class_separation, jiang2018class_separation}. Though NC is common in modern DNNs, explaining the mechanisms behind its emergence is challenging, since the complex non-linear training dynamics of DNNs evade analytical treatment. 

\paragraph{Neural Tangent Kernel.}The NTK \cite{jacot_2018_ntk} describes the gradient descent dynamics of DNNs in the function space, which provides a dual perspective to DNNs' evolution in the parameters space. This perspective allows to study the dynamics of DNNs analytically in the infinite-width limit, where the NTK is constant during training \cite{jacot_2018_ntk}. Hence, theoretical works often rely on the infinite-width NTK to analyze generalization of DNNs \cite{huang2020inf_ntk,adlam2020inf_ntk,geiger2020inf_ntk,tirer2021kernel}. However, multiple authors have argued that the infinite-width limit does not fully reflect the behaviour of realistic DNNs \cite{aitchison_2020_infinite_networks_no_feature_learning,chizat_2019_lazy_training_is_not_realistic, hanin_2019_finite_ntk, huang_2020_ntk_dynamics,seleznova_2022_empirical_finite_ntk,lee_2020_empirical_finite_ntk}, since constant NTK implies that no feature learning occurs during DNNs training. 

\paragraph{NTK Alignment.} While the infinite-width NTK is label-agnostic and does not change during training, the empirical NTK rapidly aligns with the target function in the early stages
of training \cite{atanasov_2022_kernel_alignment_phases, baratin_2021_ntk_feature_alignment_implicit_regularization, seleznova_2022_kernel_alignment_empirical_ntk_structure, shan_2021_kernel_alignment}. 
In the context of classification, this manifests itself as the emergence of a block structure in the kernel matrix, where the correlations between samples from the same class are stronger than between samples from different classes. The NTK alignment implies the so-called local elasticity of DNNs' training dynamics, i.e., samples from one class have little impact on samples from other classes in Stochastic Gradient Descent (SGD) updates \cite{He_2020_loacl_elasticity}. Several recent works have also linked the local elasticity of training dynamics to the emergence of NC \cite{kothapalli2022neural_collapse_review,zhang_2021_locally_elastic_sde}. This brings us to the main question of this paper: \textit{Is there a connection between NTK alignment and neural collapse?}

%%%%%%%%%%%%%%%%%%%%%%%%%%%%%%%%%%%%%%%%
\paragraph{Contribution.}
%%%%%%%%%%%%%%%%%%%%%%%%%%%%%%%%%%%%%%%%
%
In this work, we consider a model of NTK alignment, where the kernel has a \textit{block structure}, i.e., it takes only three distinct values: an inter-class value, an intra-class value and a diagonal value. We describe this model in Section \ref{sec:NTK of locally-elastic NNs}. Within the model, we establish the connection between NTK alignment and NC, and identify the conditions under which NC occurs. Our main contributions are as follows:

\begin{itemize}
    \item We derive and analyze the training dynamics of DNNs with MSE loss and block-structured NTK in Section \ref{sec:Dynamics of locally-elastic NNs}. We identify three distinct convergence rates in the dynamics, which correspond to three components of the training error: error of the global mean, of the class means, and of each individual sample. These components play a key role in the dynamics.
    \item We show that NC emerges in DNNs with block-structured NTK under additional assumptions in Section \ref{sec:NC}. To the best of our knowledge, this is the first work to connect NTK alignment and NC. While previous contributions rely on the unconstrained features models \cite{han_2022_central_path,mixon_2020_unconstrained_features_model, tirer2022extended} or other imitations of DNNs' training dynamics \cite{zhang_2021_locally_elastic_sde} to derive NC (see Appendix \ref{appendix:related_works} for a detailed discussion of related works), we consider standard gradient flow dynamics of DNNs simplified by our assumption on the NTK structure. 
    
    \item We analyze when NC does or does not occur in DNNs with NTK alignment, both theoretically and empirically. In particular, we identify an invariant of the training dynamics that provides a necessary condition for the emergence of NC in Section \ref{sec:invariant}. Since DNNs with block-structured NTK do not always converge to NC, we conclude that NTK alignment is a more widespread phenomenon than NC. 

    \item We support our theory with large-scale numerical experiments in Section \ref{sec:experiments}.
\end{itemize}

%%%%%%%%%%%%%%%%%%%%%%%%%%%%%%%%%%%%%%%%
\section{Preliminaries}
\label{sec:Preliminaries}
%%%%%%%%%%%%%%%%%%%%%%%%%%%%%%%%%%%%%%%%
% 
%%%%%%%%%%%%%%%%%%%%%%%%%%%%%%%%%%%%%%%%
%\subsection{Classification problem}
%%%%%%%%%%%%%%%%%%%%%%%%%%%%%%%%%%%%%%%%
We consider the classification problem with $C\in\mathbb{N}$ classes, where the goal is to build a classifier that returns a class label for any input $x\in\mathcal{X}$. In this work, the classifier is a DNN trained on a dataset $\{(\x_i,\y_i)\}_{i=1}^N$, where $x_i\in\mathcal{X}$ are the inputs and $y_i\in\mathbb{R}^C$ are the one-hot encodings of the class labels. We view the output function of the DNN $f:\mathcal{X} \to \mathbb{R}^C$ as a composition of parametrized last-layer \textit{features} $h: \mathcal{X} \to \mathbb{R}^n$ and a linear \textit{classification} layer parametrized by weights $\W \in \mathbb{R}^{C\times n}$ and biases $\b \in\mathbb{R}^C$. Then the logits of the training data $X=\{x_i\}_{i=1}^N$ can be expressed as follows:
\begin{equation}
    f(\X) = \W\H + \b\bo_N^\top,
\end{equation}
where $\H\in\mathbb{R}^{n\times N}$ are the features of the entire dataset stacked as columns and $\bo_N\in\mathbb{R}^N$ is a vector of ones. Though we omit the notion of the data dependence in the text to follow, i.e. we write $\H$ without the explicit dependence on $X$, we emphasize that the features $\H$ are a function of the data and the DNN's parameters, unlike in the previously studied unconstrained feature models \cite{mixon_2020_unconstrained_features_model, han_2022_central_path, tirer2022extended}.

We assume that the dataset is \textit{balanced}, i.e. there are $m := N / C$ training samples for each class. Without loss of generality, we further assume that the inputs are reordered so that $x_{(c-1)m+1},\ldots,x_{cm}$ belong to class $c$ for all $c\in[C]$. This will make the notation much easier later on. Since the dimension of features $n$ is typically much larger than the number of classes, we also assume $n>C$ in this work. 

%%%%%%%%%%%%%%%%%%%%%%%%%%%%%%%%%%%%%%%%

\subsection{Neural Collapse}\label{sec:NC_definition}
%%%%%%%%%%%%%%%%%%%%%%%%%%%%%%%%%%%%%%%%
% 
Neural Collapse (NC) is an empirical behaviour of classifier DNNs trained past zero error \cite{papyan_2020_neural_collapse}. Let $ \langle h\rangle := N^{-1}\sum_{i=1}^N h(x_i)$ denote the global features mean and $\langle h \rangle_c:= m^{-1}\sum_{x_i\in \text{class } c} h(x_i)$, $c\in [C]$ be the class means. Furthermore, define the matrix of normalized centered class means as $\M :=[{\langle \overline{h} \rangle}_1/\|\langle \overline{h} \rangle_1\|_2,\ldots,\langle \overline{h} \rangle_C/\|\langle \overline{h} \rangle_C\|_2]^\top \in\mathbb{R}^{n\times C}$, where $\langle \overline{h} \rangle_c = \langle h \rangle_c- \langle h \rangle, c\in [C]$.
We say that a DNN exhibits NC if the following four behaviours emerge as the training time $t$ increases:

\begin{itemize}
\item[{\bf (NC1)}] \textbf{Variability collapse:} for all samples $x_i^c$ from class $c\in[C]$, where $i\in[m]$, the penultimate layer features converge to their class means, i.e.  $\| h(\x_i^c) - \langle h \rangle_c\|_2 \to 0$.

\item[{\bf (NC2)}] \textbf{Convergence to Simplex Equiangular Tight Frame (ETF):} for all $c,c'\in[C]$, the class means converge to the following configuration:
    \begin{equation*}
        \| \langle h \rangle_c - \langle h \rangle\|_2 -  \| \langle h \rangle_{c'}- \langle h \rangle\|_2 \to 0, \quad
        \M^\top \M \to \frac{C}{C-1}(\,\id_C - \dfrac{1}{C}\bo_C\bo_C^\top).
    \end{equation*}
\item[{\bf (NC3)}] \textbf{Convergence to self-duality:} the class means $\M$ and the final weights $\W^\top$ converge to each other: $$\bigl\| {\M}/{\|\M\|_F} - {\W^\top}/{\|\W^\top\|_F}\bigr\|_F \to 0.$$
\item[{\bf (NC4)}] \textbf{Simplification to Nearest Class Center (NCC):} the classifier converges to the NCC decision rule behaviour: $$\underset{c}{\mathrm{argmax}} (\W h(\x) + \b)_c \to \underset{c}{\mathrm{argmin}} \| h(\x) - \langle h \rangle_c \|_2.$$
\end{itemize}

Though NC is observed in practice, there is currently no conclusive theory on the mechanisms of its emergence during DNN training. Most theoretical works on NC adopt the unconstrained features model, where features $\H$ are free variables that can be directly optimized \cite{han_2022_central_path,mixon_2020_unconstrained_features_model,tirer2022extended}. Training dynamics of such models do not accurately reflect the dynamics of real DNNs, since they ignore the dependence of the features on the input data and the DNN's trainable parameters. In this work, we make a step towards realistic DNN dynamics by means of the Neural Tangent Kernel (NTK). 

\subsection{Neural Tangent Kernel}

The NTK $\Theta$ of a DNN with the output function $f:\mathcal{X} \to \mathbb{R}^C$ and trainable parameters $\w\in\mathbb{R}^P$ (stretched into a single vector) is given by
\begin{equation}\label{eq:NTK}
    \Theta_{k,s}(x_i,x_j) := \big\langle \nabla_\w f_k(\x_i), \nabla_\w f_s(\x_j) \big\rangle, \quad \x_i, \x_j \in \mathcal{X}, \quad k,s\in [C].
\end{equation}
We also define the last-layer features kernel $\Theta^h$, which is a component of the NTK corresponding to the parameters up to the penultimate layer, as follows:
\begin{equation}\label{eq:teta_h}
    \Theta^h_{k,s}(\x_i,\x_j) := \big\langle \nabla_\w h_k(x_i), \nabla_\w h_s(\x_j) \big\rangle, \quad \x_i, \x_j \in \mathcal{X}, \quad k,s \in [n].
\end{equation}

Intuitively, the NTK captures the correlations between the training samples in the DNN dynamics. While most theoretical works consider the infinite-width limit of DNNs \cite{jacot_2018_ntk,yang2020ntk_any_architecture}, where the NTK can be computed theoretically, empirical studies have also extensively explored the NTK of finite-width networks \cite{fort_2020_empirical_ntk_evolution,lee_2020_empirical_finite_ntk,shan_2021_kernel_alignment,tirer2021kernel}. Unlike the label-agnostic infinite-width NTK, the empirical NTK aligns with the labels during training. We use this observation in our main assumption (Section \ref{sec:NTK of locally-elastic NNs}).

\subsection{Classification with MSE Loss}
We study NC for DNNs with the mean squared error (MSE) loss given by
\begin{equation}
    \mathcal{L}(\W,\H,\b) = \frac{1}{2}\|f(\X) - \Y\|_F^2,
\end{equation}

where $\Y\in\mathbb{R}^{C\times N}$ is a matrix of stacked labels $\y_i$. While NC was originally introduced for the cross-entropy (CE) loss \cite{papyan_2020_neural_collapse}, which is more common in classification problems, the MSE loss is much easier to analyze theoretically. Moreover, empirical observations suggest that DNNs with MSE loss achieve comparable performance to using CE \cite{poggio2020implicit,Demirkaya2020explore,hui2021evaluation}, which motivates the recent line of research on MSE-NC \cite{han_2022_central_path,mixon_2020_unconstrained_features_model,tirer2022extended}.

\begin{figure}
    \centering
    \includegraphics[width=\textwidth]{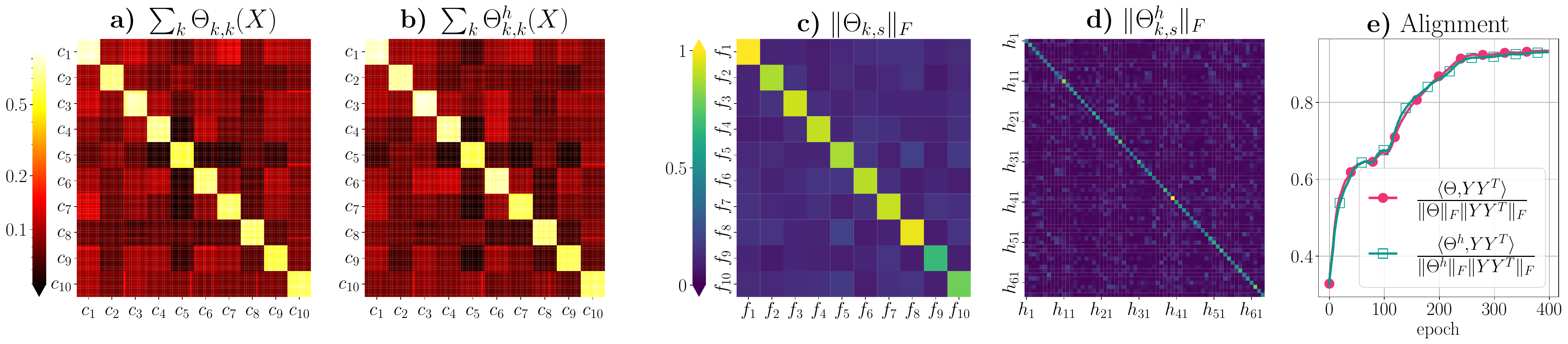}
    \caption{The NTK block structure of ResNet20 trained on MNIST. \textbf{a)} Traced kernel $\sum_{k=1}^C\Theta_{k,k}(X)$ computed on a random data subset with 12 samples from each class. The samples are ordered as described in Section \ref{sec:Preliminaries}, so that the diagonal blocks correspond to pairs of inputs from the same class. \textbf{b)} Traced kernel $\sum_{k=1}^n\Theta^h_{k,k}(X)$ computed on the same subset. \textbf{c)} Norms of the kernels $\Theta_{k,s}(X)$ for all $k,s\in [C]$. \textbf{d)} Norms of the kernels $\Theta^h_{k,s}(X)$ for all $k,s\in [n]$. The color bars show the values in each heatmap as a fraction of the maximal value in the heatmap. \textbf{e)} The alignment of the traced kernels from panes \textbf{a} and \textbf{b} with the class labels.}
    \label{fig:block_structured_ntk}
\end{figure}

% 
% 
%%%%%%%%%%%%%%%%%%%%%%%%%%%%%%%%%%%%%%%%
\section{Block Structure of the NTK}
\label{sec:NTK of locally-elastic NNs}
%%%%%%%%%%%%%%%%%%%%%%%%%%%%%%%%%%%%%%%%

Numerous empirical studies have demonstrated that the NTK becomes aligned with the labels $\Y^\top\Y$ during the training process \cite{baratin_2021_ntk_feature_alignment_implicit_regularization,kopitkov2020ntk_spectrum_alignment,shan_2021_kernel_alignment}. This alignment constitutes feature learning and is associated with better performance of DNNs \cite{chen2020label_aware_ntk, cristianini2001kernel_alignment_principle}. 
For classification problems, this means that the empirical NTK develops an approximate block structure with larger kernel values corresponding to pairs of samples $(x^c_i,x^c_j)$ from the same class \cite{seleznova_2022_kernel_alignment_empirical_ntk_structure}. Figure \ref{fig:block_structured_ntk} shows an example of such a structure emergent in the empirical NTK of ResNet20 trained on MNIST.\footnote{We provide figures illustrating the NTK block structure on other architectures and datasets in Appendix \ref{sec:appendix_experiments}.} Motivated by these observations, we assume that the NTK and the last-layer features kernel exhibit a block structure, defined as follows:
\begin{definition}[Block structure of a kernel]\label{definition:ntk_h}
    We say a kernel $\Theta:\mathcal{X}\times\mathcal{X}\to\mathbb{R}^{K\times K}$ has a block structure associated with $(\lambda_1, \lambda_2, \lambda_3)$, if $\lambda_1 > \lambda_2 > \lambda_3 \geq 0$ and
    \begin{equation}
        \Theta(\x,\x) = \lambda_1 \mathbb{I}_K, \quad \Theta(\x^c_i,\x^c_j) = \lambda_2 \mathbb{I}_K, \quad \Theta(\x^c_i,\x_j^{c'}) = \lambda_3 \mathbb{I}_K,
    \end{equation}
    where $\x^c_i$ and $\x^c_j$ are two distinct inputs from the same class, and $\x_j^{c'}$ is an input from class $c'\neq c$.
\end{definition}
\begin{assumption}\label{assumption:ntk_h}
    The NTK $\Theta:\mathcal{X}\times\mathcal{X}\to\mathbb{R}^{C\times C}$ has a block structure associated with $(\gamma_d,\gamma_c,\gamma_n)$, and the penultimate kernel $\Theta^h:\mathcal{X}\times\mathcal{X}\to\mathbb{R}^{n\times n}$ has a block structure associated with $(\kappa_d,\kappa_c,\kappa_n)$.
\end{assumption}

This assumption means that every kernel $\Theta_{k,k}(X):=[\Theta_{k,k}(x_i,x_j)]_{i,j\in [N]}$ corresponding to an output neuron $f_k, k \in [C]$ and every kernel $\Theta^h_{p,p}(X)$ corresponding to a last-layer neuron $h_p, p\in[n]$ is aligned with $\Y^\top\Y$ (see Figure \ref{fig:block_structured_ntk}, panes a-b). Additionally, 
%Assumption \ref{assumption:ntk_h} implies that 
the "non-diagonal" kernels $\Theta_{k,s}(X)$ and $\Theta^h_{k,s}(X)$, $k\neq s$ are equal to zero (see Figure~\ref{fig:block_structured_ntk}, panes c-d).\footnote{We discuss possible relaxations to our main assumption, where the "non-diagonal" components of the last-layer kernel $\Theta^h_{k,s}$ are allowed to be non-zero, in Appendix \ref{appendix:relax}.}  Moreover, if $\gamma_c\gg \gamma_n$ and $\kappa_c\gg\kappa_n$, Assumption \ref{assumption:ntk_h} can be interpreted as \textit{local elasticity} of DNNs, defined below.

\begin{definition}[Local elasticity \cite{He_2020_loacl_elasticity}]\label{def:local_elasticity}
    A classifier is said to be locally elastic (LE) if its prediction or feature representation on point $x_i^c$ from class $c\in[C]$ is not significantly affected by performing SGD updates on data points from classes $c'\neq c$.
\end{definition}

To see the relation between Assumption \ref{assumption:ntk_h} and this definition, consider a Gradient Descent (GD) step of the output neuron $f_k, k\in [C]$ with step size $\eta$ performed on a single input $x_j^{c'}$ from class $c'\neq c$. By the chain rule, block-structured $\Theta$ implies locally-elastic predictions since
\begin{equation}
    f^{t+1}(\x^c_i) - f^{t}(\x^c_i) = - \eta \Theta(\x_i^c, x^{c'}_{j}) \frac{\partial \mathcal{L}(\x^{c'}_{j})}{\partial f(\x^{c'}_{j})} + O(\eta^2), 
\end{equation}
i.e., the magnitude of the GD step of $f(x^c_i)$ is determined by the value of $\Theta(\x_i^c, x^{c'}_{j})$. Similarly, block-structured kernel $\Theta^h$ implies locally-elastic penultimate layer features because
\begin{equation}
       h^{t+1}(\x^c_i) - h^{t}(\x^c_i) = - \eta  \Theta^h(\x_i^c,\x^{c'}_{j} ) \W^\top \frac{\partial \mathcal{L}(\x^{c'}_{j})}{\partial f(\x^{c'}_{j})} + O(\eta^2).
\end{equation}

This observation provides a connection between our work and recent contributions suggesting a connection between NC and local elasticity \cite{kothapalli2022neural_collapse_review,zhang_2021_locally_elastic_sde}.
% 

% 
%%%%%%%%%%%%%%%%%%%%%%%%%%%%%%%%%%%%%%%%
\section{Dynamics of DNNs with NTK Alignment}
\label{sec:Dynamics of locally-elastic NNs}
%%%%%%%%%%%%%%%%%%%%%%%%%%%%%%%%%%%%%%%%
% 
\subsection{Convergence}\label{sec:convergence}
As a warm up for our main results, we analyze the effects of the NTK block structure on the convergence of DNNs. Consider a GD update of an output neuron $f_k, k\in [C]$ with the step size $\eta$: 
\begin{equation}
    f^{t+1}_k(\X) = f^{t}_k(\X) - \eta \Theta_{k,k}(\X) (f_k^{t}(\X) - \Y_k) + O(\eta^2), \quad k=1,\dots, C.
\end{equation}
%where $\Theta_{k,k}(\X)\in\mathbb{R}^{N\times N}$ has entries $[\Theta_{k,k}(X)]_{j,\ell} = \Theta_{k,k}(\x_j,\x_\ell)$. 
Note that we have taken into account that $\Theta_{k,s}$ is zero for $k\neq s$ by our assumption. Denote the residuals corresponding to $f_k$ as $\r^\top_k:  = f^\top_k(\X) - \Y_k \in\mathbb{R}^N$. Then we have the following dynamics for the residuals vector:
\begin{equation}
    \r^{t + 1}_k = (1 - \eta \Theta_{k,k}(X)) \r^t_k + O(\eta^2).
\end{equation}
The eigendecomposition of the block-structured kernel $\Theta_{k,k}(X)$ provides important insights into this dynamics and is summarized in Table \ref{table:NTK_eigen_decomposition}. We notice that the NTK has three distinct eigenvalues $\lambda_{\textup{global}} \geq \lambda_{\textup{class}} \geq \lambda_{\textup{single}}$, which imply different convergence rates for certain components of the error. Moreover, the eigenvectors associated with each of these eigenvalues reveal the meaning of the error components corresponding to each convergence rate. Indeed, consider the projected dynamics with respect to eigenvector $\v_0$ and eigenvalue $\lambda_{\textup{global}}$ from Table \ref{table:NTK_eigen_decomposition}:
\begin{equation}\label{eq:residual_projection}
    \langle \r^{t + 1}_k, \v_0 \rangle = (1 - \eta \lambda_{\textup{global}}) \langle \r^{t}_k, \v_0 \rangle,
\end{equation}
where we omitted $O(\eta^2)$ for clarity. Now notice that the projection of $\r^t_k$ onto the vector $\v_0$ is in fact proportional to the average residual over the training set:
\begin{equation}\label{eq:residual_projection_global}
    \langle \r^{t}_k, \v_0 \rangle
    = \langle \r^{t}_k, \bo_N \rangle
    = N \langle \r^t_k\rangle
\end{equation}

where $\langle\cdot\rangle$ denotes the average over all the training samples $x_i\in X$. By a similar calculation, for all $c\in[C]$ and $i\in [m]$ we get interpretations of the remaining projections of the residual:
\begin{equation}\label{eq:residual_projection_inter}
    \langle \r^{t}_k, \v_c \rangle
    = \frac{N}{C-1}(\langle \r^t_k\rangle_c -\langle \r^t_k\rangle), \quad     \langle \r^{t}_k, \v_i^c \rangle
    = \frac{m}{m-1} (\r^{t}_k(x_i^c) - \langle \r^t_k\rangle_c),
\end{equation}
We where $\langle\cdot\rangle_c$ denotes the average over samples $x_i^c$ from class $c$, and  $\r^\top_k(x_i^c) $ is the $k$th component of $f^\top(\x_i^c) - \y_i^c$. Combining \eqref{eq:residual_projection}, \eqref{eq:residual_projection_global} and  \eqref{eq:residual_projection_inter}, we have the following convergence rates:
\begin{align}
    \langle \r^{t+1}_k\rangle &= (1-\eta\lambda_{\textup{global}})\langle \r^t_k\rangle,\\
    \langle \r^{t+1}_k\rangle_c - \langle \r^{t+1}_k\rangle &= (1-\eta\lambda_{\textup{class}})(\langle \r^{t}_k\rangle_c - \langle \r^{t}_k\rangle),\label{eq:r-inter-class-dynamics}\\
    \r^{t+1}_k(x_i^c) - \langle \r^{t+1}_k\rangle_c &= (1-\eta\lambda_{\textup{single}})(\r^{t}_k(\x_i^c) - \langle \r^{t}_k\rangle_c).
\end{align}

\begin{table}
    \centering
    \begin{tabular}{c c c}
        Eigenvalue & Eigenvector & Multiplicity \\
        $\lambda_{\textup{single}} = \gamma_d - \gamma_c$ &
        $\v_i^c = \frac{1}{m-1}\big(\underbrace{m-1, -\bo_{m-1}^\top}_{\text{index }i>0\text{, class }c<0}, \underbrace{\mathbf{0}_{N-m}^\top}_{\text{others }=0}\big)^\top$ &
        $N - C$\\
        $\lambda_{\textup{class}} =\lambda_{\textup{single}} + m (\gamma_c - \gamma_n)$ & 
        $\v_{c} = \frac{1}{C-1}\big(\underbrace{(C-1)\bo_m^\top}_{\text{class }c>0}, -\underbrace{\bo_{N-m}^\top}_{\text{others }<0}\big)^\top$ &
        $C - 1$ \\

        $\lambda_{\textup{global}} = \lambda_{\textup{class}} + N \gamma_n$ &  
        $\v_0=\bo_N$ & 
        $1$ \\
    \end{tabular}
    \caption{Eigendecomposition of the block-structured NTK.}
    \label{table:NTK_eigen_decomposition}
\end{table}

Overall, this means that the global mean $\langle \r\rangle$ of the residual converges first, then the class means, and finally the residual of each sample $\r(\x_i^c)$. To simplify the notation, we define the following quantities:
\begin{align}
    \R &= f(\X)-\Y = [\r(\x_1),\ldots,\r(\x_N)],\label{eq:R}\\
    % = WH + \b\mathbf{1}_N^\top - Y
    \R_\textup{class} &= \frac{1}{m}\R\Y^\top \Y = \underbrace{[\langle \r\rangle_1,\ldots,\langle \r\rangle_C]}_{:=\R_1}\otimes\bo_m^\top, \label{eq:R-class-mean}\\
    % \frac{1}{m}WHY^\top Y + \b\mathbf{1}_N^\top - Y\\
    \R_\textup{global} &= \frac{1}{N}\R\mathbf{1}_N\mathbf{1}_N^\top = \langle \r\rangle\otimes\bo_N^\top,\label{eq:R-global-mean}
\end{align}
where $\R\in\mathbb{R}^{C\times N}$ is the matrix of residuals, $\R_\textup{class}\in\mathbb{R}^{C\times N}$ are the residuals averaged over each class and stacked $m$ times, and $\R_\textup{global}\in\mathbb{R}^{C\times N}$ are the residuals averaged over the whole training set stacked $N$ times. According to the previous discussion, $\R_\textup{global}$ converges to zero at the fastest rate, while $\R$ converges at the slowest rate. The last phase, which we call the \textit{end of training}, is when $\R_\textup{class}$ and $\R_\textup{global}$ have nearly vanished and can be treated as zero for the remaining training time. We will use this notion in several remarks, as well as in the proof of Theorem \ref{thm:nc1234}.

%%%%%%%%%%%%%%%%%%%%%%%%%%%%%%%%%%%%%%%%
\subsection{Gradient Flow Dynamics with Block-Structured NTK}
\label{sec:gd_block_ntk}
%%%%%%%%%%%%%%%%%%%%%%%%%%%%%%%%%%%%%%%%
% 
We derive the dynamics of $\H,\W,\b$ under Assumption \ref{assumption:ntk_h} in Theorem \ref{thm:GD_block_NTK}. One can see that the block-structured kernel greatly simplifies the complicated dynamics of DNNs and highlights the role of each of the residual components identified in Section \ref{sec:convergence}. We consider gradient flow, which is close to gradient descent for sufficiently small step size \cite{elkabetz2021_gradient_flow_vs_gd}, to reduce the complications caused by higher order terms. The proof is given in Appendix \ref{Proof of Theorem thm:GD_block_NTK}.

\begin{theorem}\label{thm:GD_block_NTK}
Suppose Assumption \ref{assumption:ntk_h} holds. Then the gradient flow dynamics of a DNN can be written as
\begin{equation}\label{eq:GD_block_NTK}
    \begin{cases}
        \dot{\H} = & - \W^\top[(\kappa_d-\kappa_c) \R + (\kappa_c - \kappa_n)m \R_\textup{class} + \kappa_nN \R_\textup{global}]\\
        \dot{\W} = & - \R \H^\top \\
        \dot{\b} = & - \R_\textup{global}\mathbf{1}_N.
    \end{cases}
\end{equation}
\end{theorem}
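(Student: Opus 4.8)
The plan is to run gradient flow separately on the three groups of trainable parameters — the classification weights $\W$, the biases $\b$, and the remaining (feature-map) parameters of $h$, those ``up to the penultimate layer'' — and to invoke Assumption~\ref{assumption:ntk_h} only where the feature-map parameters are differentiated. Since we work in continuous time, the chain rule along the flow is exact, so no higher-order corrections arise (this is the reason for passing from gradient descent to gradient flow).

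The equations for $\W$ and $\b$ are immediate. Because $f(\X)=\W\H+\b\bo_N^\top$ is linear in $\W$ and $\b$ while $\H$ is independent of them, $\dot\W=-\nabla_\W\mathcal{L}=-\R\H^\top$ and $\dot\b=-\nabla_\b\mathcal{L}=-\R\bo_N$; rewriting $\R\bo_N=\R_{\textup{global}}\bo_N$ via \eqref{eq:R-global-mean} yields the stated form.

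The substance is the $\dot\H$ equation. By the chain rule, $\dot\H=\sum_p\partial_{w_p}h(\X)\,\dot w_p=-\sum_p\partial_{w_p}h(\X)\,\partial_{w_p}\mathcal{L}$, where $p$ ranges over the parameters of $h$. Since $\partial_{w_p}f_k(x_j)=\sum_s\W_{ks}\,\partial_{w_p}h_s(x_j)$ — the bias drops out and $\W$ does not depend on these parameters — we get $\partial_{w_p}\mathcal{L}=\sum_{k,j,s}\R_{kj}\W_{ks}\,\partial_{w_p}h_s(x_j)$. Substituting and recognizing the inner products from \eqref{eq:teta_h} gives, componentwise, $\dot H_s(x_i)=-\sum_{k,j,p}\R_{kj}\W_{kp}\,\Theta^h_{s,p}(x_i,x_j)$. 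By Assumption~\ref{assumption:ntk_h} the off-diagonal kernels vanish, $\Theta^h_{s,p}\equiv 0$ for $s\neq p$, so only the term $p=s$ survives: $\dot H_s(x_i)=-\sum_k\W_{ks}\sum_j\R_{kj}\,\Theta^h_{s,s}(x_i,x_j)$. Finally I split the inner sum according to the three cases of Definition~\ref{definition:ntk_h}: for $x_i$ in class $c$, $\sum_j\R_{kj}\,\Theta^h_{s,s}(x_i,x_j)=(\kappa_d-\kappa_c)\R_{ki}+(\kappa_c-\kappa_n)\sum_{j\in\textup{class }c}\R_{kj}+\kappa_n\sum_j\R_{kj}$, and by \eqref{eq:R}--\eqref{eq:R-global-mean} these three sums are $\R_{ki}$, $m(\R_{\textup{class}})_{ki}$, and $N(\R_{\textup{global}})_{ki}$ — precisely the global/class/single decomposition of Section~\ref{sec:convergence}. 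Collecting terms and reading $\sum_k\W_{ks}(\cdot)_{ki}$ as $(\W^\top(\cdot))_{si}$ produces the claimed matrix identity.

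There is no deep obstacle; the work is careful bookkeeping. The points to watch are: keeping the three parameter groups genuinely decoupled, so that $\W$ and $\b$ are treated as constants when differentiating in the feature-map parameters (and vice versa); collapsing the double sum over $(s,p)$ correctly once the off-diagonal kernels are zeroed out, which is what produces the clean $\W^\top(\cdots)$ factor; and matching the three partial sums to the normalizations of $\R_{\textup{class}}$ and $\R_{\textup{global}}$, the source of the factors $m$ and $N$. Two remarks worth making in the write-up: only the $\Theta^h$-part of Assumption~\ref{assumption:ntk_h} is used here, and it is the assumption's holding along the whole trajectory that makes the reduced system in $(\H,\W,\b)$ closed.
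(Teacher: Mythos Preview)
Your proposal is correct and follows essentially the same route as the paper's proof: compute $\dot h_s(x_i)$ via the chain rule through the feature-map parameters, recognize the resulting inner products as $\Theta^h_{s,s'}$, kill the off-diagonal blocks by Assumption~\ref{assumption:ntk_h}, and then resolve the remaining sum over $j$ according to the three cases of the block structure. The only cosmetic difference is that the paper packages the block-structured kernel as the matrix $\kappa_d\id_N+\kappa_c(\Y^\top\Y-\id_N)+\kappa_n(\bo_N\bo_N^\top-\Y^\top\Y)$ and multiplies, whereas you split the sum over $j$ directly; both yield the same three residual components with the same $m$ and $N$ factors. Your closing observations---that only the $\Theta^h$ half of the assumption is invoked, and that its validity along the trajectory is what closes the system---are accurate and worth keeping.
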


We note that at the end of training, where $\R_\textup{class}$ and $\R_\textup{global}$ are zero, the system \eqref{eq:GD_block_NTK} reduces to
\begin{equation}\label{eq:GD_block_NTK_oet}
    \dot{\H}= -(\kappa_d-\kappa_c)\nabla_{\H}\tilde{\mathcal{L}},\qquad
    \dot{\W}= -\nabla_{\W}\tilde{\mathcal{L}},\qquad
    \tilde{\mathcal{L}}(\W,\H) := \frac{1}{2}\|\W\H + \b\bo_N^\top - \Y\|_F^2,
\end{equation}
and $\dot{\b}=0$. This system differs from the unconstrained features dynamics only by a factor of $\kappa_d-\kappa_c$ before $\H$. Moreover, such a form of the loss function also appears in the literature of implicit regularization \cite{arora2019implicit,Bah2021learning,chou2020implicit}, where the authors show that $\W\H$ converges to a low rank matrix.

%%%%%%%%%%%%%%%%%%%%%%%%%%%%%%%%%%%%%%%%
\section{NTK Alignment Drives Neural Collapse}
\label{sec:NTK alignment drives Neural Collapse}
%%%%%%%%%%%%%%%%%%%%%%%%%%%%%%%%%%%%%%%%
% 
The main goal of this work is to demonstrate how NC results from the NTK block structure. To this end, in Section \ref{subsec:Features decomposition} we further analyze the dynamics presented in Theorem \ref{thm:GD_block_NTK}, in Section \ref{sec:invariant} we derive the invariant of this training dynamics, and in Section \ref{sec:NC} we finally derive NC.

%%%%%%%%%%%%%%%%%%%%%%%%%%%%%%%%%%%%%%%%
\subsection{Features Decomposition}
\label{subsec:Features decomposition}
%%%%%%%%%%%%%%%%%%%%%%%%%%%%%%%%%%%%%%%%
% 
We first decompose the features dynamics presented in Theorem \ref{thm:GD_block_NTK} into two parts: $\H_1$, which lies in the subspace of the labels $\Y$, and $\H_2$, which is orthogonal to the labels and eventually vanishes. To achieve this, note that the SVD of $\Y$ has the following form:
\begin{equation}\label{eq:Y_SVD}
    \P^\top \Y \Q = \bigl[\sqrt{m} \mathbb{I}_C, \mathbb{O} \bigr],
\end{equation}
where $\mathbb{O}\in\mathbb{R}^{C\times (N-C)}$ is a matrix of zeros, and $\P\in\mathbb{R}^{C\times C}$ and $\Q\in\mathbb{R}^{N\times N}$ are orthogonal matrices. Moreover, we can choose $\P$ and $\Q$ such that $\P = \mathbb{I}_C$ and 
\begin{equation}    
    \Q  = \bigl[ \Q_1, \Q_2\bigr], \quad \Q_1  = \frac{1}{\sqrt{m}} \mathbb{I}_C \otimes \mathbf{1}_m \in \mathbb{R}^{N\times C}, \quad \Q_2  = \mathbb{I}_C \otimes \tilde{\Q}_2 \in \mathbb{R}^{N\times(N-C)},
\end{equation}
where $\otimes$ is the Kronecker product. Note that by orthogonality, $\tilde{\Q}_2 \in \mathbb{R}^{m\times(m-1)}$ has full rank and $\mathbf{1}_m^\top \tilde{\Q}_2 = \mathbb{O}$. We can now decompose $\H \Q$ into two components as follows:
\begin{equation}\label{eq:H_decomposition}
    \H \Q = \sqrt{m}[\H_1, \H_2], \quad
    \H_1 = \frac{1}{\sqrt{m}}\H\Q_1,\quad
    \H_2 = \frac{1}{\sqrt{m}}\H\Q_2.
\end{equation}
The following equations reveal the meaning of these two components:
\begin{equation}\label{eq:h1_h2}
    \H_1 = \bigl[ \langle \h\rangle_1, \dots, \langle \h\rangle_{C} \bigr],\quad \H_2 = \frac{1}{\sqrt{m}}\,\bigl[ \H^{(1)}\tilde{\Q}_2, \ldots, \H^{(C)}\tilde{\Q}_2\bigr],
\end{equation}
where $\langle h\rangle_{c}\in\mathbb{R}^n$ is the mean of $h$ over inputs $x_i^c$ from class $c\in[C]$, and $\H^{(c)}\in\mathbb{R}^{n\times m}$ is the submatrix of $\H$ corresponding to samples of class $c$, i.e., $\H = \bigl[\H^{(1)}, \dots, \H^{(C)} \bigr]$. We see that $\H_1$ is simply the matrix of the last-layer features' class means, which is prominent in the NC literature. We also see that the columns of $\H^{(c)}\tilde{\Q}_2$ are $m-1$ different linear combinations of $m$ vectors $h(x_i^c)$, $i\in[m]$. Moreover, the coefficients of each of these linear combinations sum to zero by the choice of $\tilde\Q_2$. Therefore, $\H_2$ must reduce to zero in case of variability collapse (NC1), when all the feature vectors within the same class become equal. We prove that $\H_2$ indeed vanishes in DNNs with block-structured NTK as part of our main result (Theorem \ref{thm:nc1234}). 

%%%%%%%%%%%%%%%%%%%%%%%%%%%%%%%%%%%%%%%%
\subsection{Invariant}\label{sec:invariant}
%%%%%%%%%%%%%%%%%%%%%%%%%%%%%%%%%%%%%%%%
%  
We now use the former decomposition of the last-layer features to further simplify the dynamics and deduce a training invariant in Theorem \ref{thm:GD_block_NTK_decompose}. The proof is given in Appendix \ref{Proof of Theorem thm:GD_block_NTK_decompose}.
\begin{theorem}\label{thm:GD_block_NTK_decompose}
Suppose Assumption \ref{assumption:ntk_h} holds. Define $\H_1$ and $\H_2$ as in \eqref{eq:H_decomposition}. Then the class-means of the residuals (defined in \eqref{eq:R-class-mean}) are given by $\R_1 = \W\H_1 + \b\mathbf{1}_C^\top  - \mathbb{I}_C$, and the training dynamics of the DNN can be written as 
\begin{equation}\label{eq:GD_block_NTK_decompose}
    \begin{cases}
        \dot{\H}_1 &= - \W^\top \R_1 (\mu_{\textup{class}}\id_C + \kappa_n m \mathbf{1}_C\mathbf{1}_C^\top )\\
        \dot{\H}_2 &= - \mu_{\textup{single}} \W^\top \W\H_2 \\        
        \dot{\W} &= - m(\R_1\H_1^\top  +  \W\H_2\H_2^\top)  \\        
        \dot{\b} &= - m\R_1\mathbf{1}_C,
    \end{cases}
\end{equation}
where $\mu_{\textup{single}} := \kappa_d-\kappa_c$ and $\mu_{\textup{class}} := \mu_{\textup{single}} + m(\kappa_c-\kappa_n)$ are the two smallest eigenvalues of the kernel $\Theta^h_{k,k}(X)$ for any $k\in[n]$. Moreover, the quantity
\begin{equation}\label{eq:invariant}
    \E:= \frac{1}{m}\W^\top \W - \frac{1}{\mu_{\textup{class}}}\H_1(\mathbb{I}_C-\alpha \mathbf{1}_C\mathbf{1}_C^\top )\H_1^\top  - \dfrac{1}{\mu_{\textup{single}}}\H_2\H_2^\top 
\end{equation}
is invariant in time. Here $\alpha := \frac{\kappa_n m}{\mu_{\textup{class}} + C\kappa_n m}$.
\end{theorem}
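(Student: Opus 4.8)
The plan is to derive the dynamics in \eqref{eq:GD_block_NTK_decompose} from Theorem \ref{thm:GD_block_NTK} by projecting the features dynamics onto the label subspace and its orthogonal complement, and then to verify the invariant \eqref{eq:invariant} by differentiating it and showing the derivative vanishes. First I would establish the formula $\R_1 = \W\H_1 + \b\mathbf{1}_C^\top - \mathbb{I}_C$: since $\R_\textup{class} = \frac{1}{m}\R\Y^\top\Y$ and $\R = \W\H + \b\bo_N^\top - \Y$, multiplying on the right by $\Q_1$ and using $\Y\Q_1 = \sqrt{m}\,\mathbb{I}_C$ and $\bo_N^\top\Q_1 = \sqrt{m}\,\bo_C^\top$ together with \eqref{eq:H_decomposition} gives the claim after normalizing; the $\R_\textup{global}$ term similarly collapses to $\frac{1}{C}\R_1\bo_C\bo_C^\top$ worth of content.

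Next I would transform the $\dot{\H}$ equation. The key computation is to right-multiply $\dot\H = -\W^\top[(\kappa_d-\kappa_c)\R + (\kappa_c-\kappa_n)m\R_\textup{class} + \kappa_n N \R_\textup{global}]$ by $\Q_1$ and by $\Q_2$ and divide by $\sqrt m$. For the $\H_1$ part, one uses $\R\Q_1 = \sqrt m \,\R_1$ (from the definition $\R_1 = \frac1m \R\Y^\top\Y\cdot(\text{projection})$ combined with $\R_\textup{class}\Q_1=\sqrt m \R_1$ and $\R_\textup{global}\Q_1 = \frac{\sqrt m}{C}\R_1\bo_C\bo_C^\top$), then collect the three coefficients $\kappa_d-\kappa_c$, $m(\kappa_c-\kappa_n)$, $N\kappa_n = Cm\kappa_n$ into $\mu_\textup{single} + m(\kappa_c-\kappa_n) = \mu_\textup{class}$ times $\id_C$ plus $\kappa_n m \bo_C\bo_C^\top$ (the $C$ folding into the Kronecker count). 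For the $\H_2$ part, note $\R_\textup{class}\Q_2 = 0$ and $\R_\textup{global}\Q_2 = 0$ since the columns of $\Q_2$ sum to zero within each class block, so only the $(\kappa_d-\kappa_c)\R\Q_2 = \sqrt m \mu_\textup{single}\W^\top\W\H_2$ term survives — here one also uses $\R\Q_2 = (\W\H + \b\bo_N^\top - \Y)\Q_2 = \W\H\Q_2 = \sqrt m\,\W\H_2$ because $\Y\Q_2 = 0$ and $\bo_N^\top\Q_2 = 0$. The $\dot\W$ equation follows from $\dot\W = -\R\H^\top = -\R\Q\Q^\top\H^\top = -m(\R_1\H_1^\top + \W\H_2\H_2^\top)$ after substituting $\R\Q_1 = \sqrt m\R_1$, $\R\Q_2 = \sqrt m\W\H_2$, and $\H\Q_i = \sqrt m \H_i$. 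The $\dot\b$ equation is immediate from $\R_\textup{global}\bo_N = N\langle\r\rangle$ rewritten via $\R_1\bo_C$.

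Finally, for the invariant: I would compute $\dot\E = \frac1m(\dot\W^\top\W + \W^\top\dot\W) - \frac{1}{\mu_\textup{class}}(\dot\H_1 M\H_1^\top + \H_1 M\dot\H_1^\top) - \frac{1}{\mu_\textup{single}}(\dot\H_2\H_2^\top + \H_2\dot\H_2^\top)$ where $M := \id_C - \alpha\bo_C\bo_C^\top$, substitute the dynamics \eqref{eq:GD_block_NTK_decompose}, and check cancellation term by term. The $\W^\top\dot\W$ contribution is $-\W^\top\R_1\H_1^\top - \W^\top\W\H_2\H_2^\top$ (times $m/m=1$ after the $\frac1m$ and the $m$ cancel), and $\frac{1}{\mu_\textup{single}}\dot\H_2\H_2^\top = -\W^\top\W\H_2\H_2^\top$, so the $\H_2$ and $\W$-$\H_2$ terms match. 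The $\H_1$ terms require showing $\frac{1}{\mu_\textup{class}}\dot\H_1 M\H_1^\top$ (symmetrized) equals $-\W^\top\R_1\H_1^\top$ (symmetrized); substituting $\dot\H_1 = -\W^\top\R_1(\mu_\textup{class}\id_C + \kappa_n m\bo_C\bo_C^\top)$ gives $\frac{1}{\mu_\textup{class}}\W^\top\R_1(\mu_\textup{class}\id_C + \kappa_n m\bo_C\bo_C^\top)(\id_C-\alpha\bo_C\bo_C^\top)\H_1^\top$, and the point is that $(\mu_\textup{class}\id_C + \kappa_n m\bo_C\bo_C^\top)(\id_C - \alpha\bo_C\bo_C^\top) = \mu_\textup{class}\id_C$ exactly when $\alpha = \frac{\kappa_n m}{\mu_\textup{class} + C\kappa_n m}$ — this is precisely the stated value of $\alpha$, using $\bo_C\bo_C^\top\bo_C\bo_C^\top = C\bo_C\bo_C^\top$. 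I expect the main obstacle to be bookkeeping the Kronecker-product projections cleanly — in particular verifying $\R_\textup{global}\Q_1$, $\R_\textup{class}\Q_1$ and the vanishing of both against $\Q_2$ — and then matching the $\bo_C\bo_C^\top$ coefficients so the algebraic identity pinning down $\alpha$ comes out right; once that identity is in hand, the invariance is a short symmetric-matrix cancellation.
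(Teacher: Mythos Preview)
Your proposal is correct and follows essentially the same route as the paper: right-multiply the dynamics of Theorem~\ref{thm:GD_block_NTK} by $\Q_1$ and $\Q_2$, use $\Y\Q_2=\mathbb{O}$, $\bo_N^\top\Q_2=\mathbb{O}$, $\Y\Q_1=\sqrt{m}\,\mathbb{I}_C$, $\bo_N^\top\Q_1=\sqrt{m}\,\bo_C^\top$ to obtain \eqref{eq:GD_block_NTK_decompose}, and then verify $\dot\E=\mathbb{O}$ by differentiation. Your identification of the key identity $(\mu_\textup{class}\id_C+\kappa_n m\bo_C\bo_C^\top)(\id_C-\alpha\bo_C\bo_C^\top)=\mu_\textup{class}\id_C$ is exactly the paper's observation that $(\id_C+\frac{\kappa_n m}{\mu_\textup{class}}\bo_C\bo_C^\top)^{-1}=\id_C-\alpha\bo_C\bo_C^\top$, and in fact you spell out the invariant cancellation more explicitly than the paper, which simply states that $\dot\E=\mathbb{O}$ follows by direct computation.
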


We note that the invariant $\E$ derived here resembles the conservation laws of \textit{hyperbolic} dynamics that take the form $\E_{\text{hyp}}:=a^2 - b^2 = const$ for time-dependent quantities $a$ and $b$. Such dynamics arise when gradient flow is applied to a loss function of the form $\mathcal{L}(a,b):= (ab - q)^2$ for some $q$. Since the solutions of such minimization problems, given by $ab=q$, exhibit symmetry under scaling $a\to \gamma a, b \to b/\gamma$, the value of the invariant $\E_{\textup{hyp}}$ uniquely specifies the hyperbola followed by the solution. In machine learning theory, hyperbolic dynamics arise as the gradient flow dynamics of linear DNNs \cite{saxe2013hyperbolic_dyn_linear_dnns}, or in matrix factorization problems \cite{arora_2018_zero_invariant_2,du_2018_small_invariant_proof}. Moreover, the end of training dynamics defined in \eqref{eq:GD_block_NTK_oet} has a hyperbolic invariant given by
    \begin{equation}\label{eq:invariant_eot}
        \E_\text{eot} := \W^\top \W  - \dfrac{1}{\mu_{\textup{single}}}\H\H^\top.
    \end{equation}
Therefore, the final phase of training exhibits a typical behavior for the hyperbolic dynamics, which is also characteristic for the unconstrained features models \cite{han_2022_central_path,mixon_2020_unconstrained_features_model}. Namely, "scaling" $\W$ and $\H$ by an invertible matrix does not affect the loss value but changes the dynamic's invariant. On the other hand, minimizing the invariant $\E_{\textup{eot}}$ has the same effect as joint regularization of $\W$ and $\H$ \cite{tirer2022extended}.

However, we also note that our invariant $\E$ provides a new, more comprehensive look at the DNNs' dynamics. While unconstrained features models effectively make assumptions on the end-of-training invariant $\E_{\textup{eot}}$ to derive NC \cite{han_2022_central_path,mixon_2020_unconstrained_features_model,tirer2022extended}, our dynamics control the value of $\E_{\textup{eot}}$ through the more general invariant $\E$. This way we connect the properties of end-of-training hyperbolic dynamics with the previous stages of training.

%%%%%%%%%%%%%%%%%%%%%%%%%%%%%%%%%%%%%%%%
\subsection{Neural Collapse}\label{sec:NC}
%%%%%%%%%%%%%%%%%%%%%%%%%%%%%%%%%%%%%%%%

We are finally ready to state and prove our main result in Theorem \ref{thm:nc1234} about the emergence of NC in DNNs with NTK alignment. We include the proof in Appendix \ref{Proof of Theorem thm:nc1234}.

\begin{theorem}\label{thm:nc1234}
Assume that the NTK has a block structure as defined in Assumption \ref{assumption:ntk_h}. Then the DNN's training dynamics are given by the system of equations in \eqref{eq:GD_block_NTK_decompose}. Assume further that the last-layer features are centralized, i.e $\langle h\rangle=0$, and the dynamics invariant \eqref{eq:invariant} is zero, i.e., $\E=\mathbb{O}$. Then the DNN's dynamics exhibit neural collapse as defined in (NC1)-(NC4). 
\end{theorem}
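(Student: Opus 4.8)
The plan is to exploit the hyperbolic structure identified in Theorem \ref{thm:GD_block_NTK_decompose}, using the twin hypotheses $\langle h\rangle = 0$ and $\E = \mathbb{O}$ to pin down the asymptotic geometry. First I would record the consequences of centralization: if $\langle h\rangle = 0$ then $\H_1 \mathbf{1}_C = 0$, so the columns of $\H_1$ (the class means) already sum to zero, and the term $\b\mathbf{1}_C^\top$ in $\R_1 = \W\H_1 + \b\mathbf{1}_C^\top - \mathbb{I}_C$ must carry the ``mean'' part of the identity; averaging $\R_1$ over classes and using $\dot{\b} = -m\R_1\mathbf{1}_C$ should show $\b$ relaxes to $\tfrac{1}{C}\mathbf{1}_C$ (the global-mean residual vanishing, consistent with Section \ref{sec:convergence}), leaving the effective target for $\W\H_1$ equal to the centered identity $\mathbb{I}_C - \tfrac1C\mathbf{1}_C\mathbf{1}_C^\top$. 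So the problem reduces to analyzing $\dot{\H}_1, \dot{\H}_2, \dot{\W}$ converging to a minimizer of $\|\W\H_1 - (\mathbb{I}_C - \tfrac1C\mathbf{1}_C\mathbf{1}_C^\top)\|_F^2$ (with $\H_2 \to 0$) subject to the invariant constraint.

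Next I would use $\E = \mathbb{O}$. Setting \eqref{eq:invariant} to zero gives $\tfrac{1}{m}\W^\top\W = \tfrac{1}{\mu_{\textup{class}}}\H_1(\mathbb{I}_C - \alpha\mathbf{1}_C\mathbf{1}_C^\top)\H_1^\top + \tfrac{1}{\mu_{\textup{single}}}\H_2\H_2^\top$ for all time. This is the balancedness condition: it forces $\W^\top\W$ and (a weighted combination of) $\H_1\H_1^\top, \H_2\H_2^\top$ to share the same image and to be simultaneously diagonalizable in the appropriate sense. Combined with convergence of the loss to zero error on the class means — which I would argue follows from the gradient-flow nature of \eqref{eq:GD_block_NTK_decompose} restricted to $\H_1, \W$ after the end of training (the reduced system \eqref{eq:GD_block_NTK_oet} is, up to a positive scalar, the unconstrained features / matrix factorization flow, whose zero-balanced solutions are known to converge to the minimum-norm factorization) — I get that at convergence $\W\H_1 = \mathbb{I}_C - \tfrac1C\mathbf{1}_C\mathbf{1}_C^\top =: \Pi$ and $\H_2 = 0$. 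The vanishing of $\H_2$ is exactly (NC1): by \eqref{eq:h1_h2}, $\H_2 = 0$ means every within-class deviation $h(x_i^c) - \langle h\rangle_c$ is zero.

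Then NC2–NC4 become linear algebra. From $\E = \mathbb{O}$ with $\H_2 = 0$ we get $\tfrac1m\W^\top\W = \tfrac{1}{\mu_{\textup{class}}}\H_1(\mathbb{I}_C - \alpha\mathbf{1}_C\mathbf{1}_C^\top)\H_1^\top$, and from $\W\H_1 = \Pi$ (a rank-$(C-1)$ orthogonal projector, symmetric, eigenvalues $1$ with multiplicity $C-1$ and $0$ on $\mathbf{1}_C$). The standard balancedness-plus-rank argument: write the thin SVDs $\W^\top = U\Sigma V^\top$, $\H_1 = U'\Sigma' V'^\top$; $\W\H_1 = \Pi$ symmetric PSD forces (after accounting for the $\alpha$-weighting and that everything lives orthogonal to $\mathbf{1}_C$, where $\mathbb{I}_C - \alpha\mathbf{1}_C\mathbf{1}_C^\top$ acts as the identity) that $\W^\top$ and $\H_1$ have the same left singular vectors and proportional singular values, with the proportionality constant fixed by the invariant to make $\tfrac1m\|\W^\top\|^2$-type norms match $\tfrac{1}{\mu_{\textup{class}}}\|\H_1\|^2$-type norms. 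Hence $\W^\top = \sqrt{m/\mu_{\textup{class}}}\,\H_1$ on the orthogonal complement of $\mathbf{1}_C$, which is (NC3) after normalization (both vanish on $\mathbf{1}_C$). Substituting back, $\W\H_1 = \sqrt{\mu_{\textup{class}}/m}\,\W\W^\top = \Pi$ forces $\W\W^\top = \sqrt{m/\mu_{\textup{class}}}\,\Pi$, so $\H_1^\top\H_1 = \tfrac{m}{\mu_{\textup{class}}}\W\W^\top \cdot (\text{const}) \propto \Pi$ as well; thus $\M^\top\M \propto$ the simplex ETF Gram matrix $\tfrac{C}{C-1}(\mathbb{I}_C - \tfrac1C\mathbf{1}_C\mathbf{1}_C^\top)$ after row-normalization, and all $\|\langle h\rangle_c\|$ are equal since the diagonal of $\Pi$ is constant — this is (NC2). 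Finally (NC4) follows from (NC1)–(NC3): with $h(x) = \langle h\rangle_c$ for a class-$c$ input and $\W \propto \M^\top$, $b \propto \mathbf{1}_C$, the argmax of $\W h(x) + \b$ over logits coincides with the nearest class-center rule because $\langle \M_c, \langle h\rangle_{c'}\rangle$ is maximized at $c' = c$ by the ETF geometry.

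The main obstacle I expect is the convergence argument — showing that the flow \eqref{eq:GD_block_NTK_decompose} actually reaches the zero-loss, zero-$\H_2$ limit rather than stalling at a spurious critical point or escaping to infinity. The invariant $\E = \mathbb{O}$ is the crucial lever here (balancedness rules out the rank-deficient saddles and, as in the linear-network and matrix-factorization literature, promotes convergence to the balanced global minimizer), but making this rigorous requires either a careful Łojasiewicz-type argument on the reduced system or importing a convergence result for balanced matrix-factorization flows and checking its hypotheses apply to \eqref{eq:GD_block_NTK_oet}; I would also need to separately justify that $\H_2 \to 0$ using $\dot{\H}_2 = -\mu_{\textup{single}}\W^\top\W\H_2$ together with a lower bound on the relevant eigenvalues of $\W^\top\W$ along the trajectory (which again the invariant helps supply, since it ties $\W^\top\W$ to $\H_1\H_1^\top$ which is bounded below once the loss is small). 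The rest — extracting NC2–NC4 from $\W\H_1 = \Pi$ plus balancedness — is essentially the same computation that appears in unconstrained-features proofs and should be routine once the limit is established.
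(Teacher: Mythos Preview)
Your plan for NC2--NC4 is essentially the paper's argument: once $\H_2\to\mathbb{O}$ and $\W\H_1=\Pi:=\mathbb{I}_C-\tfrac1C\mathbf{1}_C\mathbf{1}_C^\top$, the zero invariant together with $\H_1\mathbf{1}_C=0$ gives $\tfrac1m\W^\top\W=\tfrac{1}{\mu_{\textup{class}}}\H_1\H_1^\top$, and sandwiching this between $\H_1^\top(\cdot)\H_1$ and $\W(\cdot)\W^\top$ yields $(\H_1^\top\H_1)^2\propto\Pi$ and $(\W\W^\top)^2\propto\Pi$, whence NC2--NC3 by taking p.s.d.\ square roots and NC4 as you describe. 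Your SVD phrasing is equivalent.

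Where you differ is on convergence and NC1. You propose to import balanced matrix-factorization convergence results to show the full $(\H_1,\H_2,\W)$ flow reaches the zero-loss limit, and you flag this as the main obstacle. The paper sidesteps this entirely. The key point you are not using is that Assumption~\ref{assumption:ntk_h} also gives the \emph{output} kernel $\Theta$ a block structure, so by the linear analysis of Section~\ref{sec:convergence} the residuals $\R$ (hence $\R_1$ and $\R_\textup{global}$) converge to zero at the explicit rates $\lambda_{\textup{global}},\lambda_{\textup{class}},\lambda_{\textup{single}}$. Once the dynamics is in the ``end of training'' phase where $\R_1=\mathbb{O}$, the system \eqref{eq:GD_block_NTK_decompose} reduces to $\dot\H_1=\mathbb{O}$, $\dot\b=\mathbb{O}$, and the pure hyperbolic pair $\dot\H_2=-\mu_{\textup{single}}\W^\top\W\H_2$, $\dot\W=-m\W\H_2\H_2^\top$. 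Its invariant $\tilde\E=\mu_{\textup{single}}\W^\top\W-m\H_2\H_2^\top$ equals $m\mu_{\textup{single}}\bigl(\E+\tfrac{1}{\mu_{\textup{class}}}\H_1(\mathbb{I}_C-\alpha\mathbf{1}_C\mathbf{1}_C^\top)\H_1^\top\bigr)$, which is p.s.d.\ when $\E=\mathbb{O}$. Expanding $\H_2\H_2^\top$ in the eigenbasis of $\tilde\E$ gives, for the diagonal coefficients, $\dot\alpha_{kk}=-2c_k\alpha_{kk}-2\sum_j\alpha_{kj}^2\leq 0$, which drives $\H_2\H_2^\top\to\mathbb{O}$ by an elementary monotonicity argument---no \L ojasiewicz inequality, no external factorization theorem, and no lower bound on $\sigma_{\min}(\W^\top\W)$ required. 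Your route would also work if the imported convergence result checks out, but the paper's two-stage argument (first $\R_1\to 0$ via $\Theta$, then $\H_2\to 0$ via the reduced ODE) is both simpler and self-contained.
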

Below we provide several important remarks and discuss the implications of this result:

\textbf{(1) Zero invariant assumption:} We assume that the invariant \eqref{eq:invariant} is zero in Theorem \ref{thm:nc1234} for simplicity and consistency with the literature. Indeed, similar assumptions arise in matrix decomposition papers, where zero invariant guarantees "balance" of the problem \cite{arora_2018_zero_invariant_2,du_2018_small_invariant_proof}. However, our proofs in fact only require a weaker assumption that the invariant terms containing features $\H$ are aligned with the weights $\W^\top \W$, i.e. 
 \begin{equation}\label{eq:inv_alignment}
     \W^\top \W \propto \frac{1}{\mu_{\textup{class}}}\H_1\H_1^\top  - \frac{1}{\mu_{\textup{single}}}\H_2\H_2^\top, 
 \end{equation}
where we have taken into account our assumption on the zero global mean $\langle h \rangle = 0$. 

\textbf{(2) Necessity of the invariant assumption:}
The relaxed assumption on the invariant \eqref{eq:inv_alignment} is necessary for the emergence of NC in DNNs with block-structured NTK. Indeed, NC1 implies $\H_2=\mathbb{O}$, and NC3 implies $\H_1\H_1^\top \propto \W^\top\W$. Therefore, DNNs that do not satisfy this assumption do not display NC. Our numerical experiments described in Section \ref{sec:experiments} strongly support this insight (see Figure~\ref{fig:resnet_mnist}, panes a-e). Thus, we believe that the invariant derived in this work characterizes the difference between models that do and do not exhibit NC. 

\textbf{(3) Zero global mean assumption: }
We note that the zero global mean assumption $\langle h\rangle=0$ in Theorem \ref{thm:nc1234} ensures that the biases are equal to $\b=\frac{1}{C}\mathbf{1}_C$ at the end of training. This assumption is common in the NC literature \cite{han_2022_central_path, mixon_2020_unconstrained_features_model} and is well-supported by our numerical experiments (see figures in Appendix \ref{sec:appendix_experiments}, pane i). Indeed, modern DNNs typically include certain normalization (e.g. through batch normalization layers) to improve numerical stability, and closeness of the global mean to zero is a by-product of such normalization.

\textbf{(4) General biases case:}
Discarding the zero global mean assumption allows the biases $\b$ to take an arbitrary form. In this general case, the following holds for the matrix of weights:
\begin{equation}\label{eq:general_weights}
    (\W\W^\top)^2 = \dfrac{m}{\mu_{\textup{class}}}\Bigl( \mathbb{I}_C - \alpha  \mathbf{1}_C\mathbf{1}_C^\top +(1-\alpha C)(C\b\b^\top - \b\mathbf{1}_C^\top -\mathbf{1}_C\b^\top) \Bigr).
\end{equation}
For optimal biases $\b=\frac{1}{C}\mathbf{1}_C$, this reduces to the ETF structure that emerges in NC. Moreover, if biases are all equal, i.e. $\b=\beta \mathbf{1}_C$ for some $\beta \in \mathbb{R}$, the centralized class means still form an ETF (i.e., NC2 holds), and the weights exhibit a certain symmetric structure given by
\begin{equation}
    \W\W^\top\propto \Bigl( \id_C - \gamma \bo_C\bo^\top_C\Bigr),\quad
    \M^\top \M \propto \Bigl(\id_C - \frac{1}{C}\,\bo_C\bo_C^\top\Bigr),
\end{equation}
where $\gamma := \frac{1}{C}( 1 - |1-\beta C|\sqrt{1-\alpha C}) < \frac{1}{C}$. 
%is a function of $\alpha$ and $\beta$. 
The proof and a discussion of this result are given in Appendix~\ref{proof:general_bias}. In general, the angles of these two frames are different, and thus NC3 does not hold. This insight leads us to believe that normalization is an important factor in the emergence of NC.

\textbf{(5) Partial NC:} Our proofs and the discussion suggest that all the four phenomena that form NC do not have to always coincide. In particular, our proof of NC1 only requires the block-structured NTK and the invariant to be P.S.D, which is much weaker than the total set of assumptions in Theorem~\ref{thm:nc1234}. Therefore, variability collapse can occur in models that do not exhibit the ETF structure of the class-means or the duality of the weights and the class means. Moreover, as shown above, NC2 can occur when NC3 does not, i.e., the ETF structure of the class means does not imply duality.
    
% 
%%%%%%%%%%%%%%%%%%%%%%%%%%%%%%%%%%%%%%%%
\section{Experiments}
\label{sec:experiments}
%%%%%%%%%%%%%%%%%%%%%%%%%%%%%%%%%%%%%%%%
\begin{figure}
    \centering
    \includegraphics[width=\textwidth]{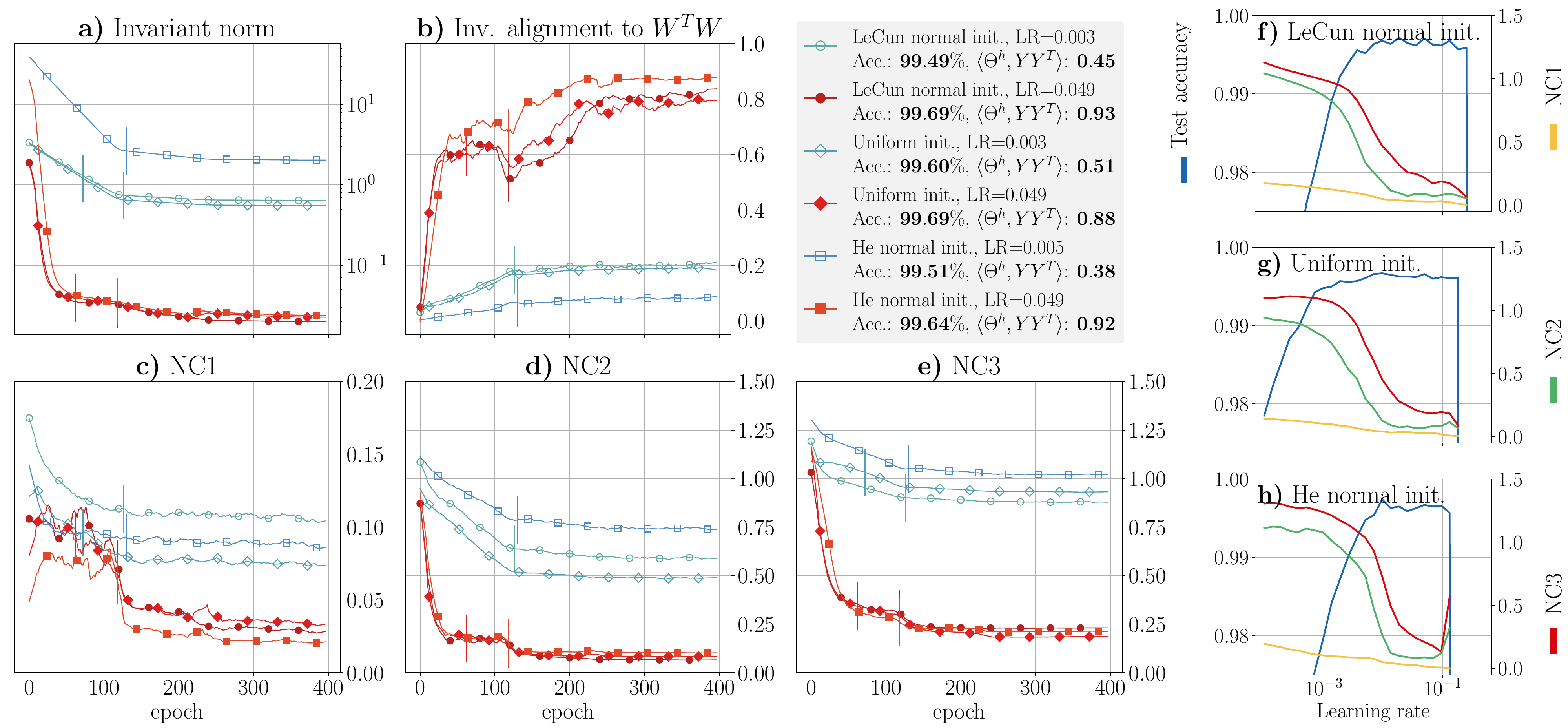}
    \caption{ResNet20 trained on MNIST with three initialization settings and varying learning rates (see Section \ref{sec:experiments} for details). We chose a model that exhibits NC (red lines, filled markers) and a model that does not exhibit NC (blue lines, empty markers) for each initialization. The vertical lines indicate the epoch when the training accuracy reaches 99.9\% (over the last 10 batches). \textbf{a)} Frobenious norm of the invariant $\| \E \|_F$. \textbf{b)} Alignment of the invariant terms as defined in \eqref{eq:inv_alignment}. \textbf{c)} NC1: standard deviation of $h(x_i^c)$ averaged over classes. \textbf{d)} NC2: $\| \M^\top\M/\|\M^\top\M\|_F - \Phi \|_F$, where $\Phi$ is an ETF. \textbf{e)} NC3: $\|\W^\top/\|\W\|_F - \M/\|\M\|_F \|_F$. The legend displays the test accuracy achieved by each model and the last-layer features kernel alignment given by $\langle \Theta^h/\|\Theta^h\|_F, \Y^\top\Y/\|\Y^\top\Y\|_F\rangle_F$. The curves in panes a-e are smoothed by Savitzky–Golay filter with polynomial degree $1$ over window of size $10$. Panes \textbf{f}, \textbf{g} and \textbf{h} show the NC metrics and the test accuracy as functions of the learning rate. }
    \label{fig:resnet_mnist}
\end{figure}
We conducted large-scale numerical experiments to support our theory. While we only showcase our results on a single dataset-architecture pair in the main text (see Figure \ref{fig:resnet_mnist}) and refer the rest to the appendix, the following discussion covers all our experiments.

\paragraph{Datasets and models.} Following the seminal NC paper \cite{papyan_2020_neural_collapse}, we use three canonical DNN architectures: VGG \cite{simonyan2014vgg}, ResNet \cite{he2016resnet} and DenseNet \cite{huang2017densenet}. Our datasets are MNIST \cite{lecun2010mnist}, FashionMNIST \cite{xiao2017fashionmnist} and CIFAR10 \cite{krizhevsky2009cifar10}. We choose VGG11 for MNIST and FashionMNIST, and VGG16 for CIFAR10. We add batch normalization after every layer in the VGG architecture, set dropout to zero and choose the dimensions of the two fully-connected layers on the top of the network as $512$ and $256$. We use ResNet20 architecture described in the original ResNet paper \cite{he2016resnet}, and DenseNet40 with bottleneck layers, growth $k=12$, and zero dropout for all the datasets. 

\paragraph{Optimization and initialization.} We use SGD with Nesterov momentum $0.9$ and weight decay $5\times 10^{-4}$. Every model is trained for 400 epochs with batches of size 120. To be consistent with the theory, we balance the batches exactly. We train every model with a set of initial learning rates spaced logarithmically in the range $\eta \in [10^{-4},10^{0.25}]$. The learning rate is divided by $10$ every 120 epochs. On top of the varying learning rates, we try three different initialization settings for every model: \textbf{(a)} LeCun normal initialization (default in Flax), \textbf{(b)} uniform initialization on $[-\sqrt{k}, \sqrt{k}]$, where $k = 1/n_{\ell-1}$ for a linear layer, and $k = 1/(K n_{\ell-1})$ for a convolutional layer, where $K$ is the convolutional kernel size (default in PyTorch), \textbf{(c)} He normal initialization in \texttt{fan\_out} mode.

\paragraph{Results. } Our experiments confirm the validity of our assumptions and the emergence of NC as their result. Specifically, we make the following observations:

\begin{itemize}
    \item While most of the DNNs that achieve high test performance exhibit NC, we are able to identify DNNs with comparable performance that do not exhibit NC (see Figure \ref{fig:resnet_mnist}, panes f-h). We note that such models still achieve near-zero error on the training set in our setup.
    
    \item Comparing DNNs that do and do not exhibit NC, we find that our assumption on the invariant (see Theorem \ref{thm:nc1234} and \eqref{eq:inv_alignment}) holds only for the models with NC (see Figure \ref{fig:resnet_mnist}, panes a-e). This confirms our reasoning about the necessity of the invariant assumption for NC emergence. 
    
    \item The kernels $\Theta$ and $\Theta^h$ are strongly aligned with the labels $\Y^\top\Y$ in the models with the best performance, which is in agreement with the NTK alignment literature and justifies our assumption on the NTK block structure. 
\end{itemize}
We include the full range of experiments along with the implementation details and the discussion of required computational resources in Appendix \ref{sec:appendix_experiments}. Specifically, we present a figure analogous to Figure~\ref{fig:resnet_mnist} for every considered dataset-architecture pair. 
Additionally, we report the norms of matrices $\H_1\H_1^\top$, $\H_2\H_2^\top$, and $\langle h \rangle \langle h \rangle^\top$, as well as the alignment of both the NTK $\Theta$ and the last-layer features kernel $\Theta^h$ in the end of training, to further justify our assumptions.

%%%%%%%%%%%%%%%%%%%%%%%%%%%%%%%%%%%%%%%%
\section{Conclusions and Broad Impact}
\label{sec:Conclusion and future discussion}
%%%%%%%%%%%%%%%%%%%%%%%%%%%%%%%%%%%%%%%%
% 

This work establishes the connection between NTK alignment and NC, and thus provides a mechanistic explanation for the emergence of NC within realistic DNNs' training dynamics. It also contributes to the underexplored line of research connecting NC and local elasticity of DNNs' training dynamics. 

The primary implication of this research is that it exposes the potential to study NC through the lens of NTK alignment.
Indeed, previous works on NC focus on the top-down approach (layer-peeled models) \cite{fang2021layer_peeled_model,han_2022_central_path,mixon_2020_unconstrained_features_model, tirer2022extended}, and fundamentally cannot explain how NC develops through earlier layers of a DNN and what are the effects of depth. On the other hand, NTK alignment literature focuses on the alignment of individual layers \cite{baratin_2021_ntk_feature_alignment_implicit_regularization}, and recent theoretical results even quantify the role of each hidden layer in the final alignment \cite{lou2022feature}. Therefore, we believe that the connection between NTK alignment and NC established in this work provides a conceptually new method to study NC. 

Moreover, this work introduces a novel approach to facilitate theoretical analysis of DNNs' training dynamics. While most theoretical works consider the NTK in the infinite-width limit to simplify the dynamics \cite{huang2020inf_ntk,adlam2020inf_ntk,geiger2020inf_ntk,tirer2021kernel}, our analysis shows that making reasonable assumptions on the empirical NTK can also lead to tractable dynamics equations and new theoretical results. Thus, we believe that the analysis of DNNs' training dynamics based on the properties of the empirical NTK is a promising approach also beyond NC research.

\section{Limitations and Future Work}
The main limitation of this work is the simplifying Assumption \ref{assumption:ntk_h} on the kernel structure. While the NTK of well-trained DNNs indeed has an approximate block structure (as we discuss in detail in Section \ref{sec:NTK of locally-elastic NNs}), the NTK values also tend to display high variance in real DNNs \cite{hanin_2019_finite_ntk,seleznova_2022_kernel_alignment_empirical_ntk_structure}. Thus, we believe that adding stochasticity to the dynamics considered in this paper is a promising direction for the future work. Moreover, the empirical NTK exhibits so-called specialization, i.e., the kernel matrix corresponding to a certain output neurons aligns more with the labels of the corresponding class~\cite{shan_2021_kernel_alignment}. In block-structured kernels, specialization implies different values in blocks corresponding to different classes. Thus, generalizing our theory to block-structured kernels with specialization is another promising short-term research goal. In addition, our theory relies on the assumption that the dataset (or the training batch) is balanced, i.e., all the classes have the same number of samples. Accounting for the effects of non-balanced datasets within the dynamics of DNNs with block-structured NTK is another possible future work direction. 

More generally, we believe that empirical observations are essential to demistify the DNNs' training dynamics, and there are still many unknown and interesting connections between seemingly unrelated empirical phenomena. Establishing new theoretical connections between such phenomena is an important objective, since it provides a more coherent picture of the deep learning theory as a whole.

\clearpage

\begin{ack}
R. Giryes and G. Kutyniok acknowledge support from the LMU-TAU - International Key Cooperation Tel Aviv University 2023. R. Giryes is also grateful for partial support by ERC-StG SPADE grant no. 757497. G. Kutyniok is grateful for partial support by the Konrad Zuse School of Excellence in Reliable AI (DAAD), the Munich Center for Machine Learning (BMBF) as well as the German Research Foundation under Grants DFG-SPP-2298, KU 1446/31-1 and KU 1446/32-1 and under Grant DFG-SFB/TR 109, Project C09 and the Federal Ministry of Education and Research under Grant MaGriDo.
\end{ack}

\bibliographystyle{plain}
\bibliography{bibtex}  

\begin{thebibliography}{10}

\bibitem{adlam2020inf_ntk}
Ben Adlam and Jeffrey Pennington.
\newblock The neural tangent kernel in high dimensions: Triple descent and a
  multi-scale theory of generalization.
\newblock In {\em Proceedings of the 37th International Conference on Machine
  Learning, {ICML} 2020, 13-18 July 2020, Virtual Event}, volume 119 of {\em
  Proceedings of Machine Learning Research}, pages 74--84. {PMLR}, 2020.

\bibitem{aitchison_2020_infinite_networks_no_feature_learning}
Laurence Aitchison.
\newblock Why bigger is not always better: on finite and infinite neural
  networks.
\newblock In {\em Proceedings of the 37th International Conference on Machine
  Learning, {ICML} 2020, 13-18 July 2020, Virtual Event}, volume 119 of {\em
  Proceedings of Machine Learning Research}, pages 156--164. {PMLR}, 2020.

\bibitem{arora_2018_zero_invariant_2}
Sanjeev Arora, Nadav Cohen, and Elad Hazan.
\newblock On the optimization of deep networks: Implicit acceleration by
  overparameterization.
\newblock In {\em Proceedings of the 35th International Conference on Machine
  Learning, {ICML} 2018, Stockholmsm{\"{a}}ssan, Stockholm, Sweden, July 10-15,
  2018}, volume~80 of {\em Proceedings of Machine Learning Research}, pages
  244--253. {PMLR}, 2018.

\bibitem{arora2019implicit}
Sanjeev Arora, Nadav Cohen, Wei Hu, and Yuping Luo.
\newblock Implicit regularization in deep matrix factorization.
\newblock In {\em Advances in Neural Information Processing Systems 32: Annual
  Conference on Neural Information Processing Systems 2019, NeurIPS 2019,
  December 8-14, 2019, Vancouver, BC, Canada}, pages 7411--7422, 2019.

\bibitem{atanasov_2022_kernel_alignment_phases}
Alexander Atanasov, Blake Bordelon, and Cengiz Pehlevan.
\newblock Neural networks as kernel learners: The silent alignment effect.
\newblock In {\em The Tenth International Conference on Learning
  Representations, {ICLR} 2022, Virtual Event, April 25-29, 2022}.
  OpenReview.net, 2022.

\bibitem{Bah2021learning}
Bubacarr Bah, Holger Rauhut, Ulrich Terstiege, and Michael Westdickenberg.
\newblock {Learning deep linear neural networks: Riemannian gradient flows and
  convergence to global minimizers}.
\newblock {\em Information and Inference: A Journal of the IMA},
  11(1):307--353, 02 2021.

\bibitem{baratin_2021_ntk_feature_alignment_implicit_regularization}
Aristide Baratin, Thomas George, C{\'{e}}sar Laurent, R.~Devon Hjelm, Guillaume
  Lajoie, Pascal Vincent, and Simon Lacoste{-}Julien.
\newblock Implicit regularization via neural feature alignment.
\newblock In {\em The 24th International Conference on Artificial Intelligence
  and Statistics, {AISTATS} 2021, April 13-15, 2021, Virtual Event}, volume 130
  of {\em Proceedings of Machine Learning Research}, pages 2269--2277. {PMLR},
  2021.

\bibitem{jax2018github}
James Bradbury, Roy Frostig, Peter Hawkins, Matthew~James Johnson, Chris Leary,
  Dougal Maclaurin, George Necula, Adam Paszke, Jake Vander{P}las, Skye
  Wanderman-{M}ilne, and Qiao Zhang.
\newblock {JAX}: composable transformations of {P}ython+{N}um{P}y programs,
  2018.

\bibitem{chen2020label_aware_ntk}
Shuxiao Chen, Hangfeng He, and Weijie~J. Su.
\newblock Label-aware neural tangent kernel: Toward better generalization and
  local elasticity.
\newblock In {\em Advances in Neural Information Processing Systems 33: Annual
  Conference on Neural Information Processing Systems 2020, NeurIPS 2020,
  December 6-12, 2020, virtual}, 2020.

\bibitem{chizat_2019_lazy_training_is_not_realistic}
L{\'{e}}na{\"{\i}}c Chizat, Edouard Oyallon, and Francis~R. Bach.
\newblock On lazy training in differentiable programming.
\newblock In {\em Advances in Neural Information Processing Systems 32: Annual
  Conference on Neural Information Processing Systems 2019, NeurIPS 2019,
  December 8-14, 2019, Vancouver, BC, Canada}, pages 2933--2943, 2019.

\bibitem{chou2020implicit}
Hung{-}Hsu Chou, Carsten Gieshoff, Johannes Maly, and Holger Rauhut.
\newblock Gradient descent for deep matrix factorization: Dynamics and implicit
  bias towards low rank.
\newblock {\em arXiv preprint: 2011.13772}, 2020.

\bibitem{cisse2017class_separation}
Moustapha Ciss{\'{e}}, Piotr Bojanowski, Edouard Grave, Yann~N. Dauphin, and
  Nicolas Usunier.
\newblock Parseval networks: Improving robustness to adversarial examples.
\newblock In {\em Proceedings of the 34th International Conference on Machine
  Learning, {ICML} 2017, Sydney, NSW, Australia, 6-11 August 2017}, volume~70
  of {\em Proceedings of Machine Learning Research}, pages 854--863. {PMLR},
  2017.

\bibitem{cristianini2001kernel_alignment_principle}
Nello Cristianini, John Shawe{-}Taylor, Andr{\'{e}} Elisseeff, and Jaz~S.
  Kandola.
\newblock On kernel-target alignment.
\newblock In {\em Advances in Neural Information Processing Systems 14: Natural
  and Synthetic, {NIPS} 2001, December 3-8, 2001, Vancouver, British Columbia,
  Canada}, pages 367--373. {MIT} Press, 2001.

\bibitem{Demirkaya2020explore}
Ahmet Demirkaya, Jiasi Chen, and Samet Oymak.
\newblock Exploring the role of loss functions in multiclass classification.
\newblock In {\em 54th Annual Conference on Information Sciences and Systems,
  {CISS} 2020, Princeton, NJ, USA, March 18-20, 2020}, pages 1--5. {IEEE},
  2020.

\bibitem{du_2018_small_invariant_proof}
Simon~S. Du, Wei Hu, and Jason~D. Lee.
\newblock Algorithmic regularization in learning deep homogeneous models:
  Layers are automatically balanced.
\newblock In {\em Advances in Neural Information Processing Systems 31: Annual
  Conference on Neural Information Processing Systems 2018, NeurIPS 2018,
  December 3-8, 2018, Montr{\'{e}}al, Canada}, pages 382--393, 2018.

\bibitem{elkabetz2021_gradient_flow_vs_gd}
Omer Elkabetz and Nadav Cohen.
\newblock Continuous vs. discrete optimization of deep neural networks.
\newblock In {\em Advances in Neural Information Processing Systems 34: Annual
  Conference on Neural Information Processing Systems 2021, NeurIPS 2021,
  December 6-14, 2021, virtual}, pages 4947--4960, 2021.

\bibitem{ergen2021nc_strong_duality}
Tolga Ergen and Mert Pilanci.
\newblock Revealing the structure of deep neural networks via convex duality.
\newblock In {\em Proceedings of the 38th International Conference on Machine
  Learning, {ICML} 2021, 18-24 July 2021, Virtual Event}, volume 139 of {\em
  Proceedings of Machine Learning Research}, pages 3004--3014. {PMLR}, 2021.

\bibitem{fang2021layer_peeled_model}
C~Fang, H~He, Q~Long, and WJ~Su.
\newblock Exploring deep neural networks via layer-peeled model: Minority
  collapse in imbalanced training.
\newblock {\em Proceedings of the National Academy of Sciences of the United
  States of America}, 118(43), 2021.

\bibitem{fort_2020_empirical_ntk_evolution}
Stanislav Fort, Gintare~Karolina Dziugaite, Mansheej Paul, Sepideh Kharaghani,
  Daniel~M. Roy, and Surya Ganguli.
\newblock Deep learning versus kernel learning: an empirical study of loss
  landscape geometry and the time evolution of the neural tangent kernel.
\newblock In {\em Advances in Neural Information Processing Systems 33: Annual
  Conference on Neural Information Processing Systems 2020, NeurIPS 2020,
  December 6-12, 2020, virtual}, 2020.

\bibitem{geiger2020inf_ntk}
Mario Geiger, Arthur Jacot, Stefano Spigler, Franck Gabriel, Levent Sagun,
  St{\'e}phane d’Ascoli, Giulio Biroli, Cl{\'e}ment Hongler, and Matthieu
  Wyart.
\newblock Scaling description of generalization with number of parameters in
  deep learning.
\newblock {\em Journal of Statistical Mechanics: Theory and Experiment},
  2020(2):023401, 2020.

\bibitem{han_2022_central_path}
X.~Y. Han, Vardan Papyan, and David~L. Donoho.
\newblock Neural collapse under {MSE} loss: Proximity to and dynamics on the
  central path.
\newblock In {\em The Tenth International Conference on Learning
  Representations, {ICLR} 2022, Virtual Event, April 25-29, 2022}.
  OpenReview.net, 2022.

\bibitem{hanin_2019_finite_ntk}
Boris Hanin and Mihai Nica.
\newblock Finite depth and width corrections to the neural tangent kernel.
\newblock In {\em 8th International Conference on Learning Representations,
  {ICLR} 2020, Addis Ababa, Ethiopia, April 26-30, 2020}. OpenReview.net, 2020.

\bibitem{He_2020_loacl_elasticity}
Hangfeng He and Weijie~J. Su.
\newblock The local elasticity of neural networks.
\newblock In {\em 8th International Conference on Learning Representations,
  {ICLR} 2020, Addis Ababa, Ethiopia, April 26-30, 2020}. OpenReview.net, 2020.

\bibitem{he2016resnet}
Kaiming He, Xiangyu Zhang, Shaoqing Ren, and Jian Sun.
\newblock Deep residual learning for image recognition.
\newblock In {\em Proceedings of the {IEEE} Conference on Computer Vision and
  Pattern Recognition, {CVPR} 2016, Las Vegas, NV, USA, June 27-30, 2016},
  pages 770--778. {IEEE} Computer Society, 2016.

\bibitem{flax2020github}
Jonathan Heek, Anselm Levskaya, Avital Oliver, Marvin Ritter, Bertrand
  Rondepierre, Andreas Steiner, and Marc van {Z}ee.
\newblock {F}lax: A neural network library and ecosystem for {JAX}, 2020.

\bibitem{huang2017densenet}
Gao Huang, Zhuang Liu, Laurens van~der Maaten, and Kilian~Q. Weinberger.
\newblock Densely connected convolutional networks.
\newblock In {\em Proceedings of the {IEEE} Conference on Computer Vision and
  Pattern Recognition, {CVPR} 2017, Honolulu, HI, USA, July 21-26, 2017}, pages
  2261--2269. {IEEE} Computer Society, 2017.

\bibitem{huang_2020_ntk_dynamics}
Jiaoyang Huang and Horng{-}Tzer Yau.
\newblock Dynamics of deep neural networks and neural tangent hierarchy.
\newblock In {\em Proceedings of the 37th International Conference on Machine
  Learning, {ICML} 2020, 13-18 July 2020, Virtual Event}, volume 119 of {\em
  Proceedings of Machine Learning Research}, pages 4542--4551. {PMLR}, 2020.

\bibitem{huang2020inf_ntk}
Kaixuan Huang, Yuqing Wang, Molei Tao, and Tuo Zhao.
\newblock Why do deep residual networks generalize better than deep feedforward
  networks? - {A} neural tangent kernel perspective.
\newblock In {\em Advances in Neural Information Processing Systems 33: Annual
  Conference on Neural Information Processing Systems 2020, NeurIPS 2020,
  December 6-12, 2020, virtual}, 2020.

\bibitem{hui2021evaluation}
Like Hui and Mikhail Belkin.
\newblock Evaluation of neural architectures trained with square loss vs
  cross-entropy in classification tasks.
\newblock In {\em 9th International Conference on Learning Representations,
  {ICLR} 2021, Virtual Event, Austria, May 3-7, 2021}. OpenReview.net, 2021.

\bibitem{jacot_2018_ntk}
Arthur Jacot, Cl{\'{e}}ment Hongler, and Franck Gabriel.
\newblock Neural tangent kernel: Convergence and generalization in neural
  networks.
\newblock In {\em Advances in Neural Information Processing Systems 31: Annual
  Conference on Neural Information Processing Systems 2018, NeurIPS 2018,
  December 3-8, 2018, Montr{\'{e}}al, Canada}, pages 8580--8589, 2018.

\bibitem{jiang2018class_separation}
Yiding Jiang, Dilip Krishnan, Hossein Mobahi, and Samy Bengio.
\newblock Predicting the generalization gap in deep networks with margin
  distributions.
\newblock In {\em 7th International Conference on Learning Representations,
  {ICLR} 2019, New Orleans, LA, USA, May 6-9, 2019}. OpenReview.net, 2019.

\bibitem{kopitkov2020ntk_spectrum_alignment}
Dmitry Kopitkov and Vadim Indelman.
\newblock Neural spectrum alignment: Empirical study.
\newblock In {\em Artificial Neural Networks and Machine Learning - {ICANN}
  2020 - 29th International Conference on Artificial Neural Networks,
  Bratislava, Slovakia, September 15-18, 2020, Proceedings, Part {II}}, volume
  12397 of {\em Lecture Notes in Computer Science}, pages 168--179. Springer,
  2020.

\bibitem{kothapalli2022neural_collapse_review}
Vignesh Kothapalli, Ebrahim Rasromani, and Vasudev Awatramani.
\newblock Neural collapse: A review on modelling principles and generalization.
\newblock {\em arXiv preprint arXiv:2206.04041}, 2022.

\bibitem{krizhevsky2009cifar10}
Alex Krizhevsky et~al.
\newblock Learning multiple layers of features from tiny images, 2009.

\bibitem{lecun2010mnist}
Yann LeCun, Corinna Cortes, and CJ~Burges.
\newblock Mnist handwritten digit database.
\newblock {\em ATT Labs [Online]. Available: http://yann.lecun.com/exdb/mnist},
  2, 2010.

\bibitem{lee_2020_empirical_finite_ntk}
Jaehoon Lee, Samuel Schoenholz, Jeffrey Pennington, Ben Adlam, Lechao Xiao,
  Roman Novak, and Jascha Sohl-Dickstein.
\newblock Finite versus infinite neural networks: an empirical study.
\newblock {\em Advances in Neural Information Processing Systems},
  33:15156--15172, 2020.

\bibitem{lou2022feature}
Yizhang Lou, Chris~E Mingard, and Soufiane Hayou.
\newblock Feature learning and signal propagation in deep neural networks.
\newblock In {\em International Conference on Machine Learning}, pages
  14248--14282. PMLR, 2022.

\bibitem{mixon_2020_unconstrained_features_model}
Dustin~G Mixon, Hans Parshall, and Jianzong Pi.
\newblock Neural collapse with unconstrained features.
\newblock {\em arXiv preprint arXiv:2011.11619}, 2020.

\bibitem{papyan_2020_neural_collapse}
Vardan Papyan, XY~Han, and David~L Donoho.
\newblock Prevalence of neural collapse during the terminal phase of deep
  learning training.
\newblock {\em Proceedings of the National Academy of Sciences},
  117(40):24652--24663, 2020.

\bibitem{pernici2019class_separation}
Federico Pernici, Matteo Bruni, Claudio Baecchi, and Alberto Del~Bimbo.
\newblock Fix your features: Stationary and maximally discriminative embeddings
  using regular polytope (fixed classifier) networks.
\newblock {\em arXiv preprint arXiv:1902.10441}, 2019.

\bibitem{poggio2020implicit}
Tomaso~A. Poggio and Qianli Liao.
\newblock Explicit regularization and implicit bias in deep network classifiers
  trained with the square loss.
\newblock {\em arXiv preprint arXiv:2101.00072}, 2021.

\bibitem{saxe2013hyperbolic_dyn_linear_dnns}
Andrew~M Saxe, James~L McClelland, and Surya Ganguli.
\newblock Exact solutions to the nonlinear dynamics of learning in deep linear
  neural networks.
\newblock {\em arXiv preprint arXiv:1312.6120}, 2013.

\bibitem{seleznova_2022_empirical_finite_ntk}
Mariia Seleznova and Gitta Kutyniok.
\newblock Analyzing finite neural networks: Can we trust neural tangent kernel
  theory?
\newblock In {\em Mathematical and Scientific Machine Learning, 16-19 August
  2021, Virtual Conference / Lausanne, Switzerland}, volume 145 of {\em
  Proceedings of Machine Learning Research}, pages 868--895. {PMLR}, 2021.

\bibitem{seleznova_2022_kernel_alignment_empirical_ntk_structure}
Mariia Seleznova and Gitta Kutyniok.
\newblock Neural tangent kernel beyond the infinite-width limit: Effects of
  depth and initialization.
\newblock In {\em International Conference on Machine Learning, {ICML} 2022,
  17-23 July 2022, Baltimore, Maryland, {USA}}, volume 162 of {\em Proceedings
  of Machine Learning Research}, pages 19522--19560. {PMLR}, 2022.

\bibitem{shan_2021_kernel_alignment}
Haozhe Shan and Blake Bordelon.
\newblock A theory of neural tangent kernel alignment and its influence on
  training.
\newblock {\em arXiv preprint arXiv:2105.14301}, 2021.

\bibitem{simonyan2014vgg}
Karen Simonyan and Andrew Zisserman.
\newblock Very deep convolutional networks for large-scale image recognition.
\newblock In {\em 3rd International Conference on Learning Representations,
  {ICLR} 2015, San Diego, CA, USA, May 7-9, 2015, Conference Track
  Proceedings}, 2015.

\bibitem{sun2020class_separation}
Yifan Sun, Changmao Cheng, Yuhan Zhang, Chi Zhang, Liang Zheng, Zhongdao Wang,
  and Yichen Wei.
\newblock Circle loss: A unified perspective of pair similarity optimization.
\newblock In {\em Proceedings of the IEEE/CVF conference on computer vision and
  pattern recognition}, pages 6398--6407, 2020.

\bibitem{tirer2022extended}
Tom Tirer and Joan Bruna.
\newblock Extended unconstrained features model for exploring deep neural
  collapse.
\newblock In {\em Proceedings of the 39th International Conference on Machine
  Learning}, volume 162, pages 21478--21505, 2022.

\bibitem{tirer2021kernel}
Tom Tirer, Joan Bruna, and Raja Giryes.
\newblock Kernel-based smoothness analysis of residual networks.
\newblock In {\em Mathematical and Scientific Machine Learning}, volume 145,
  pages 921--954, 2021.

\bibitem{tirer2022_perturbation_analysis_nc}
Tom Tirer, Haoxiang Huang, and Jonathan Niles-Weed.
\newblock Perturbation analysis of neural collapse.
\newblock {\em arXiv preprint arXiv:2210.16658}, 2022.

\bibitem{xiao2017fashionmnist}
Han Xiao, Kashif Rasul, and Roland Vollgraf.
\newblock Fashion-mnist: a novel image dataset for benchmarking machine
  learning algorithms.
\newblock {\em arXiv preprint arXiv:1708.07747}, 2017.

\bibitem{yang2020ntk_any_architecture}
Greg Yang.
\newblock Tensor programs {II:} neural tangent kernel for any architecture.
\newblock {\em arXiv preprint arXiv:2006.14548}, 2020.

\bibitem{zhang_2021_locally_elastic_sde}
Jiayao Zhang, Hua Wang, and Weijie~J. Su.
\newblock Imitating deep learning dynamics via locally elastic stochastic
  differential equations.
\newblock In {\em Advances in Neural Information Processing Systems 34: Annual
  Conference on Neural Information Processing Systems 2021, NeurIPS 2021,
  December 6-14, 2021, virtual}, pages 6392--6403, 2021.

\end{thebibliography}

%%%%%%%%%%%%%%%%%%%%%%%%%%%%%%%%%%%%%%%%
\newpage
\appendix
\label{sec:Appendix}
%%%%%%%%%%%%%%%%%%%%%%%%%%%%%%%%%%%%%%%%
% 
%%%%%%%%%%%%%%%%%%%%%%%%%%%%%%%%%%%%%%%%

\section{Related works}\label{appendix:related_works}
\paragraph{NC with MSE loss.}
NC was first introduced for DNNs with cross-entropy (CE) loss, which is commonly used in classification problems \cite{papyan_2020_neural_collapse}. Since then, numerous papers discussed NC with MSE loss, which provides more opportunities for theoretical analysis, especially after the MSE loss was shown to perform on par with CE loss for classification tasks \cite{Demirkaya2020explore,hui2021evaluation}. 

Most previous works on MSE-NC adopt the so-called unconstrained features model \cite{han_2022_central_path,mixon_2020_unconstrained_features_model, tirer2022extended}. In this model, the last-layer features $\H$ are free variables that are directly optimized during training, i.e., the features do not depend on the input data or the DNN's trainable parameters. Fang\etal \cite{fang2021layer_peeled_model} also introduced a generalization of this approach called $N$-layer-peeled model, where features of the $N$-th-to-last layer are free variables, and studied the 1-layer-peeled model (equivalent to the unconstrained features model) with CE loss as a special case.

One line of research on MSE-NC in unconstrained/layer-peeled models aims to derive global minimizers of optimization problems associated with DNNs \cite{ergen2021nc_strong_duality,fang2021layer_peeled_model,tirer2022extended}. In particular, Tirer\etal \cite{tirer2022extended} showed that global minimizers of the MSE loss with regularization of both $\H$ and $\W$ exhibit NC. Moreover, Ergen \& Pilanci \cite{ergen2021nc_strong_duality} showed that NC emerges in global minimizers of optimization problems with general convex loss in the context of the 2-layer-peeled model. In comparison to our work, these contributions do not consider the training dynamics of DNNs, i.e., they do not discuss whether and how the model converges to the optimal solution. 

Another line of research on MSE-NC explicitly considers the dynamics of the unconstrained features models \cite{han_2022_central_path,mixon_2020_unconstrained_features_model}. In particular, Han\etal \cite{han_2022_central_path} considered the gradient flow of the unconstrained renormalized features along the "central path", where the classifier is assumed to take the form of the optimal least squares (OLS) solution for given features $\H$. Under this assumption, they derive a closed-form dynamics that implies NC. While they empirically show that DNNs are close to the central path in certain scenarios, they do not provide a theoretical justification for this assumption. The dynamics considered in their work is also distinct from the standard gradient flow dynamics of DNNs considered in our work. On the other hand, an earlier work by Mixon\etal \cite{mixon_2020_unconstrained_features_model} considered the gradient flow dynamics of the unconstrained features model, which is equivalent (up to rescaling) to the end-of-training dynamics \eqref{eq:GD_block_NTK_oet} that we discuss in Sections \ref{sec:gd_block_ntk} and \ref{sec:invariant}. Their work relies on the linearization of these dynamics to derive a certain subspace, which appears to be an invariant subspace of the non-linearized unconstrained features model dynamics. Then they show that minimizers of the loss from this subspace exhibit NC. We note that, in terms of our paper, assuming that the unconstrained features model dynamics follow a certain invariant subspace means making assumptions on the end-of-training invariant \eqref{eq:invariant_eot}. In comparison to these works, we make a step towards realistic DNNs dynamics by considering the standard gradient flow of DNNs simplified by Assumption \ref{assumption:ntk_h} on the NTK structure, which is supported by the extensive research on NTK alignment \cite{baratin_2021_ntk_feature_alignment_implicit_regularization,chen2020label_aware_ntk,seleznova_2022_kernel_alignment_empirical_ntk_structure,shan_2021_kernel_alignment}. In our setting, the NTK captures the dependence of the features on the training data, which is missing in the unconstrained features model. Moreover, while other works focus only on the dynamics that converge to NC, we show that DNNs with MSE loss may not exhibit NC in certain settings, and the invariant of the dynamics \eqref{eq:invariant} characterizes the difference between models that do and do not converge to NC. 

Notably, works by Poggio \& Liao \cite{poggio2020implicit} adopt a model different from the unconstrained features model to analyze gradient flow of DNNs. They consider the dynamics of homogeneous DNNs, in particular ReLU networks without biases, with normalization of the weights matrices and weights regularization. The goal of weights normalization in their model is to imitate the effects of batch normalization in DNNs training. In this model, certain fixed points of the gradient flow exhibit NC. While the approach taken in their work captures the dependence of the features on the data and the DNN's parameters, it fundamentally relies on the homogeneity of the DNN's output function. However, most DNNs that exhibit NC in practice are not homogeneous due to biases and skip-connections.

\paragraph{NC and local elasticity.} A recent extensive survey of NC literature \cite{kothapalli2022neural_collapse_review} discussed local elasticity as a possible mechanism behind the emergence of NC, which has not been sufficiently explored up until now. One of the few works in this research direction is by Zhang \etal \cite{zhang_2021_locally_elastic_sde}, who analyzed the so-called locally-elastic stochastic differential equations (SDEs) and showed the emergence of NC in their solutions. They model local elasticity of the dynamics through an effect matrix, which has only two distinct values: a larger intra-class value and a smaller inter-class value. These values characterize how much influence samples from one class have on samples from other classes in the SDEs. While the aim of their work is to imitate DNNs' training dynamics through SDEs, the authors do not provide any explicit connection between their dynamics and real gradient flow dynamics of DNNs. On the other hand, we derive our dynamics directly from the gradient flow equations and connect local elasticity to the NTK, which is a well-studied object in the deep learning theory. 

Another work by Tirer\etal \cite{tirer2022_perturbation_analysis_nc} provided a perturbation analysis of NC to study "inexact collapse". They considered a minimization problem with MSE loss, regularization of $\H$ and $\W$, and additional regularization of the distance between $\H$ and a given matrix of initial features. In the "near-collapse" setting, i.e., when the initial features are already close to collapse, they showed that the optimal features can be obtained from the initial features by a certain linear transformation with a block structure, where the intra-class effects are stronger than the inter-class ones. While this transformation matrix resembles the block-structured effect matrices in locally-elastic training dynamics, it does not originate from the gradient flow dynamics of DNNs and is not related to the NTK.

\section{Proofs}\label{appendix:proofs}
\subsection{Proof of Theorem \ref{thm:GD_block_NTK}}\label{Proof of Theorem thm:GD_block_NTK}
%%%%%%%%%%%%%%%%%%%%%%%%%%%%%%%%%%%%%%%%
% 
\begin{proof}[Proof of Theorem \ref{thm:GD_block_NTK}]
We will first derive the dynamics of ${\h}_s(\x_i^c)$, which is the $s$-th component of the last-layer features vector on sample $x_i^c\in X$ from class $c\in [C]$. Let $\w\in\mathbb{R}^P$ be the trainable parameters of the network stretched into a single vector. Then its gradient flow dynamics is given by
\begin{equation}
    \dot{\w}
    = -\nabla_{\w} \mathcal{L}(\f)
    = -\sum_{k=1}^C\sum_{i'=1}^N (\f(\X)_{ki'}-\Y_{ki'})\nabla_{\w}\f(\X)_{ki'},
\end{equation}
where $\nabla_{\w} f(X)_{k i'}\in\mathbb{R}^P$ is the component of the DNN's Jacobian corresponding to output neuron $k$ and the input sample $x_{i'}^{c'}$. Since entries of $\f(\X)$ can be written as
\begin{equation}
    \f(\X)_{ki'}
    = \sum_{s'=1}^n \W_{ks'}\H_{s'i'} + \b_k
    = \sum_{s'=1}^n \W_{ks'}\h_{s'}(\x_{i'}^{c'}) + \b_k,
\end{equation}
we obtain
\begin{equation}
    \dot{\w} = -\sum_{k=1}^C\sum_{i'=1}^N \sum_{s'=1}^n (\f(\X)_{ki'}-\Y_{ki'})\nabla_{\w}(\W_{ks'}\h_{s'}(\x_{i'}^{c'}) + \b_k).
\end{equation}
By chain rule, we have $\dot{\h}_{s}(\x_{i}^{c}) = \langle\nabla_\w\h_{s}(\x_{i}^{c}),  \dot{\w}\rangle$. Then, taking into account that 
\begin{equation}
    \langle\nabla_\w\h_{s}(\x_{i}^{c}), \nabla_{\w}(\W_{ks'}\h_{s'}(\x_{i'}^{c'}) + \b_k)\rangle
    = \W_{ks'}\langle\nabla_\w\h_{s}(\x_{i}^{c}), \nabla_{\w}\h_{s'}(\x_{i'}^{c'}) \rangle,
\end{equation}
and that $\langle\nabla_\w\h_{s}(\x_{i}^{c}), \nabla_{\w}\h_{s'}(\x_{i'}^{c'}) \rangle = \Theta_{s,s'}^h(x_{i}^{c},x_{i'}^{c'})$ by definition of $\Theta^h$,
we have
\begin{equation}
    \dot{\h}_{s}(\x_{i}^{c}) = -\sum_{k=1}^C\sum_{i'=1}^N \sum_{s'=1}^n (\f(\X)_{ki'}-\Y_{ki'})\W_{ks'}\Theta_{s,s'}^h(x_{i}^{c},x_{i'}^{c'}).
\end{equation}
Now by Assumption \ref{assumption:ntk_h} we have $\Theta^h_{s,s'} = 0$ if $s\neq s'$. Therefore, the above expression simplifies to
\begin{align*}
    \dot{\h}_{s}(\x_{i}^{c})
    &= -\sum_{i'=1}^N\Theta_{s,s}^h(x_{i}^{c},x_{i'}^{c'})\sum_{k=1}^C (\f(\X)_{ki'}-\Y_{ki'})\W_{ks}\\
    &= -\sum_{i'=1}^N [\W^\top(\W\H+\b\bo_\N^\top-\Y)]_{si'}\Theta_{s,s}^h(x_{i}^{c},x_{i'}^{c'}).
\end{align*}
To express $\dot{\H} = \big[\dot{h}_s(x_i^c) \bigr]_{s,i}\in\mathbb{R}^{n\times N}$ in matrix form, it remains to express $\Theta_{s,s}^h(x_{i}^{c},x_{i'}^{c'})$ as the $(i',i)$-th entry of some matrix. We will separate the sum into three cases: 1) $i=i'$, 2) $i\neq i'$ and $c=c'$, and 3) $c\neq c'$. According to Assumption \ref{assumption:ntk_h}, the first case corresponds to the multiple of identity $\kappa_d\id_N$. The second corresponds to the block matrix of size $m$ with zeros on the diagonal, which can be written as $\kappa_c(\Y^\top\Y-\id_N)$. The third matrix equals to $\kappa_n(\bo_\N\bo_\N^\top - \Y^\top\Y)$. Therefore we can express the dynamics of $\H$ as follows:
\begin{align*}
    \dot{\H}
    = &-[\W^\top (\W\H+\b\bo_N^\top-\Y)][\kappa_d\id + \kappa_c(\Y^\top\Y-\id) + \kappa_n(\bo_N\bo_N^\top - \Y^\top\Y)]\\
    =&- (\kappa_d-\kappa_c) \W^\top (\W\H+\b\mathbf{1}_N^\top  - \Y) \\
    & - (\kappa_c - \kappa_n) \W^\top (\W\H\Y^\top \Y + m\b\mathbf{1}_N^\top  - m\Y)\\
    & \quad\qquad - \kappa_n \W^\top (\W\H\mathbf{1}_N\mathbf{1}_N^\top  + N\b \mathbf{1}_N^\top  - \frac{N}{C}\mathbf{1}_C\mathbf{1}_N^\top ).
\end{align*}

Now we notice that $\H\Y^\top \Y/m$ is the matrix of stacked class means repeated $m$ times each and $\H\mathbf{1}_N\mathbf{1}_N^\top /N$ is a matrix of the global mean repeated $N$ times. Therefore, we have 
\begin{align*}
    \W\H\Y^\top \Y + m\b\mathbf{1}_N^\top  - m\Y &= m \R_{\textup{class}},\\
    \W\H\mathbf{1}_N\mathbf{1}_N^\top  + N\b \mathbf{1}_N^\top  - \frac{N}{C}\mathbf{1}_C\mathbf{1}_N^\top &= N \R_{\textup{global}}
\end{align*}
according to the definitions of global and class-mean residuals in \eqref{eq:R-global-mean} and \eqref{eq:R-class-mean}.

The expressions for the gradient flow dynamics of $\W$ and $\b$ follow directly from the derivatives of $f(\X)$ w.r.t. $\W$ and $\b$. This completes the proof.
\end{proof}
% 
%%%%%%%%%%%%%%%%%%%%%%%%%%%%%%%%%%%%%%%%
\subsection{Proof of Theorem \ref{thm:GD_block_NTK_decompose}}\label{Proof of Theorem thm:GD_block_NTK_decompose}
%%%%%%%%%%%%%%%%%%%%%%%%%%%%%%%%%%%%%%%%
% 
\begin{proof}[Proof of Theorem \ref{thm:GD_block_NTK_decompose}]
Recall from \eqref{eq:H_decomposition} in Section \ref{subsec:Features decomposition} that we have the following decomposition
\begin{equation*}
    \H \Q = \sqrt{m}[\H_1, \H_2], \quad \H_1 = \frac{1}{\sqrt{m}}\H\Q_1, \quad \H_2 = \frac{1}{\sqrt{m}}\H\Q_2
\end{equation*}
with orthogonal $\Q=[\Q_1,\Q_2]\in\mathbb{R}^{N\times N}$. We now artificially add $\Q\Q^\top (=\id_N)$ to the dynamics \eqref{eq:GD_block_NTK} in Theorem \ref{thm:GD_block_NTK} and obtain
\begin{equation}\label{eq:GD_block_NTK_decompose_1}
    \begin{cases}
        \dot{\H}\Q = &- (\kappa_d-\kappa_c) \W^\top (\W\H\Q+\b\mathbf{1}_N^\top \Q - Y\Q) \\
        & - (\kappa_c - \kappa_n)m \W^\top (\frac{1}{m}\W\H\Q\Q^\top \Y^\top \Y\Q + \b\mathbf{1}_N^\top \Q - Y\Q)\\
        & - \kappa_nN \W^\top (\frac{1}{N}\W\H\Q\Q^\top \mathbf{1}_N\mathbf{1}_N^\top \Q + \b\mathbf{1}_N^\top \Q - \frac{1}{C}\mathbf{1}_C\mathbf{1}_N^\top \Q)\\
        \dot{\W} = & - (\W\H\Q+\b\mathbf{1}_N^\top \Q - Y\Q)\Q^\top \H^\top  \\
        \dot{\b} = & - (\W\H\Q+\b\mathbf{1}_N^\top \Q - Y\Q)\Q^\top \mathbf{1}_N.
    \end{cases}
\end{equation}
Let us simplify the expression. Since $\Q_1  = \frac{1}{\sqrt{m}} \mathbb{I}_C \otimes \mathbf{1}_m$ and $\Q_2 = \mathbb{I}_C \otimes \tilde{\Q}_2$, we have
\begin{equation}\label{eq:GD_block_NTK_terms}
    \mathbf{1}_N^\top  \Q = \sqrt{m}[\mathbf{1}_C^\top , \mathbb{O}],\quad
    \Y\Q = \sqrt{m}[\id_C , \mathbb{O}].
\end{equation}
Plugging \eqref{eq:GD_block_NTK_terms} into \eqref{eq:GD_block_NTK_decompose_1}, we see the dynamics can be decomposed into
\begin{equation}\label{eq:GD_block_NTK_decompose_2}
    \begin{cases}
        \dot{\H}_1 = & - (\kappa_d-\kappa_c) \W^\top (\W\H_1 + \b\mathbf{1}_C^\top - \mathbb{I}_C) \\
        & - (\kappa_c - \kappa_n)m \W^\top (\W\H_1 + \b\mathbf{1}_C^\top - \mathbb{I}_C)\\
        & - \kappa_nN \W^\top (\frac{1}{C}\W\H_1\mathbf{1}_C\mathbf{1}_C^\top  + \b\mathbf{1}_C^\top  - \frac{1}{C}\mathbf{1}_C\mathbf{1}_C^\top )\\
        \dot{\H}_2 = & - (\kappa_d-\kappa_c) \W^\top \W\H_2 \\
        \dot{\W} = & - m(\W\H_1 + \b\mathbf{1}_C^\top - \mathbb{I}_C)\H_1^\top  -  m\W\H_2\H_2^\top  \\
        \dot{\b} = & - m(\W\H_1+ \b\mathbf{1}_C^\top - \mathbb{I}_C)\mathbf{1}_C.
    \end{cases}
\end{equation}
To further simplify \eqref{eq:GD_block_NTK_decompose_2}, we define the following quantities
\begin{align}
    \mu_{\textup{single}} := \kappa_d-\kappa_c,\quad
    \mu_{\textup{class}} := \mu_{\textup{single}} + m(\kappa_c-\kappa_n),\quad
    \R_1 := \W\H_1 + \b\mathbf{1}_C^\top  - \mathbb{I}_C.
\end{align}
Notice that $\mu_{\textup{single}}$ and $\mu_{\textup{class}}$ are the two largest eigenvalues of the block-structured kernel $\Theta^h_{s,s}(X)$ (see Table \ref{table:NTK_eigen_decomposition} for the eigndecomposition of a block-structured matrix), and $\R_1$ is a matrix of the stacked class-mean residuals, which is also defined in \eqref{eq:R-class-mean}. The the dynamics \eqref{eq:GD_block_NTK_decompose_2} simplifies to 
\begin{equation}\label{eq:GD_block_NTK_decompose_3}
    \begin{cases}
        \dot{\H}_1 = & - \W^\top (\mu_{\textup{class}} \R_1 + \kappa_nN (\frac{1}{C}\W\H_1\mathbf{1}_C\mathbf{1}_C^\top  + \b\mathbf{1}_C^\top  - \frac{1}{C}\mathbf{1}_C\mathbf{1}_C^\top )) \\
        \dot{\H}_2 = & - \mu_{\textup{single}} \W^\top \W\H_2 \\
        \dot{\W} = & - m(\R_1\H_1^\top  -  \W\H_2\H_2^\top)  \\
        \dot{\b} = & - m\R_1\mathbf{1}_C.
    \end{cases}
\end{equation}
It remains to simplify the expression for $\dot{\H}_1$. By using the relation 
\begin{equation}
    \frac{1}{C}\W\H_1\mathbf{1}_C\mathbf{1}_C^\top  + \b\mathbf{1}_C^\top  - \frac{1}{C}\mathbf{1}_C\mathbf{1}_C^\top  = \frac{1}{C}\R_1\mathbf{1}_C\mathbf{1}_C^\top ,
\end{equation}
we can deduce that the dynamics for $\dot{\H}_1$ in \eqref{eq:GD_block_NTK_decompose_3} can be expressed as (recalling that $N=mC$)
\begin{equation}
    \dot{\H}_1 = -\W^\top \R_1(\mu_{\textup{class}}\id + \kappa_n m \mathbf{1}_C\mathbf{1}_C^\top ).
\end{equation}

We notice that $(\id_C + \frac{\kappa_n m}{\mu_{\textup{class}}} \mathbf{1}_C\mathbf{1}_C^\top )^{-1} = \mathbb{I}_C-\alpha \mathbf{1}_C\mathbf{1}_C^\top$, where $\alpha := \frac{\kappa_n m}{\mu_{\textup{class}} + C\kappa_n m}$. Then we can derive the invariant of the training dynamics by direct computation of the time-derivative $\dot{\E}$, where
\begin{equation}
\E:= \frac{1}{m}\W^\top \W - \frac{1}{\mu_{\textup{class}}}\H_1(\mathbb{I}_C-\alpha \mathbf{1}_C\mathbf{1}_C^\top )\H_1^\top  - \dfrac{1}{\mu_{\textup{single}}}\H_2\H_2^\top
\end{equation}
Since $\dot{\E}=\mathbb{O}$, we get that the quantity $\E$ remains constant in time. This completes the proof.

\end{proof}
% 
%%%%%%%%%%%%%%%%%%%%%%%%%%%%%%%%%%%%%%%%
\subsection{Proof of Theorem \ref{thm:nc1234}}\label{Proof of Theorem thm:nc1234}
\label{subsec:Proof of Theorem thm:nc1234}
%%%%%%%%%%%%%%%%%%%%%%%%%%%%%%%%%%%%%%%%
% 
We divide the proof into two main parts: the first one shows the emergence of NC1, and the second one shows NC2-4.

\begin{proof}[\textbf{(NC1)}]
Following the analysis in Section \ref{sec:NTK of locally-elastic NNs}, the dynamics eventually enters the end of training phase (see Section \ref{sec:convergence}). Then the dynamics in Theorem \ref{thm:GD_block_NTK_decompose} simplifies to the following form:
\begin{equation}
    \begin{cases}
        \dot{\H}_1 = \mathbb{O}\\
        \dot{\H}_2 = -\mu_{\textup{single}} \W^\top \W \H_2 \\
        \dot{\W} = - m\W\H_2\H_2^\top \\
        \dot{\b} = \mathbb{O}
    \end{cases}
\end{equation}
As we note in Section \ref{sec:Dynamics of locally-elastic NNs}, this dynamics is similar to the gradient flow of the unconstrained features models and is an instance of the class of hyperbolic dynamics, which is discussed in Section \ref{sec:invariant}. During this phase the quantity
\begin{equation}
    \tilde{\E}
    := \mu_{\textup{single}} \W^\top \W - m\H_2\H_2^\top
    = m\mu_{\textup{single}}(\E+ \frac{1}{\mu_{\textup{class}}}\H_1(\id-\alpha\bo_C\bo_C^\top)\H_1^\top)
\end{equation}
does not change in time. Hence we can decouple the dynamic using the invariant as follows:
\begin{equation}\label{eq:decoupled_dyn}
    \begin{cases}
        \dot \H_2 = - \mu_{\textup{single}} (\tilde{\E} + m\H_2\H_2^\top )\H_2\\
        \dot \W = - \W(\mu_{\textup{single}} \W^\top \W - \tilde{\E} )
    \end{cases}
\end{equation}
Since $\E$ is p.s.d (or zero, as a special case), $\tilde{\E}$ is p.s.d as well, and the eigendecomposition of the invariant is given by $\tilde{\E} = \sum_k c_k v_k v_k^\top $ for some coefficients $c_k\geq 0$ and a set of orthonormal vectors $v_k \in \mathbb{R}^n$. Then we also have $\H_2 \H_2^\top = \sum_{k,l} \alpha_{kl} v_k v_l^\top $, where $\alpha_{kl}$ are symmetric (i.e. $\alpha_{kl} = \alpha_{lk}$) and $\alpha_{kk}\geq 0$ for all $k=1,\dots n$ (since $\H_2 \H_2^\top$ is symmetric and p.s.d.). Note that coefficients $c_k$ here are constant while coefficients $\alpha_{kl}$ are time-dependent. Let us then write the dynamics for $\alpha_{kl}$ using the dynamics of $\H_2\H_2^\top$:
\begin{equation}
    \dot{(\H_2\H_2^\top )} = - \tilde{\E} \H_2\H_2^\top  - \H_2\H_2^\top  \tilde{\E} - 2 (\H_2\H_2^\top )^2
\end{equation}
Then for the elements of $\alpha$ we have:
\begin{equation}
    \dot\alpha_{kl} = - \alpha_{kl} (c_k+c_l) - 2 \sum_j \alpha_{kj}\alpha_{jl}
\end{equation}
For the diagonal elements $\alpha_{kk}$, this gives:
\begin{equation}
    \dot\alpha_{kk} = - 2c_k\alpha_{kk} - 2\sum_j \alpha_{kj}^2
\end{equation}
Since $c_k \geq 0$ , $\alpha_{kk}\geq 0$ and $\alpha_{kj}^2\geq 0$, we get that 
\begin{equation}
    \alpha_{kk} \xrightarrow[t\to\infty]{} 0 \quad \forall k
\end{equation} 
And, therefore, all the non-diagonal elements also tend to zero. Thus, we get that
\begin{equation}
    \H_2\H_2^\top  \xrightarrow[t\to\infty]{} \mathbb{O}
\end{equation}
and thus
\begin{equation}\label{eq:H2_converges_zero}
    \H_2 \xrightarrow[t\to\infty]{} \mathbb{O}
\end{equation}
Now we notice that from the expression for $\H_2$ in \eqref{eq:h1_h2} it follows that $\H_2=\mathbb{O}$ implies variability collapse, since it means that all the feature vectors within the same class are equal. Indeed, $\H^{(c)}\tilde{\Q}_2 = \mathbb{O}\in\mathbb{R}^{n\times(m-1)}$ means that there is a set of $m-1$ orthogonal vectors, which are all also orthogonal to $[h_i(x^c_1),\dots, h_i(x^c_m)]$ for any $i=1,\dots, n$, where $x^c_i$ are inputs from class $c$. However, there is only one vector (up to a constant) orthogonal to all the columns of $\tilde{\Q}_2$ in $\mathbb{R}^m$ and this vector is $\mathbf{1}_m$. Therefore, $[h_i(x^c_1),\dots h_i(x^c_m)] = \gamma \mathbf{1}_m$ for some constant $\gamma$ for any $i=1,\dots, n$. Thus, we indeed have $h(x^c_1) = \dots = h(x^c_m)$, which constitutes variability collapse within classes. 
\end{proof}
\begin{proof}[\textbf{(NC2-4)}]
Set $\beta=\frac{1}{C}$. We first show that zero global feature mean implies $\b=\beta \bo_C$. At the end of training, since $\R_1= \mathbb{O}$, we have
\begin{equation}\label{eq:WH1}
    \W\H_1 + \b\bo_C^\top = \id_C
\end{equation}
On the other hand, zero global mean implies $\H_1\bo_C =C\langle h\rangle= \mathbb{O}$. Then multiplying \eqref{eq:WH1} by $\bo_C$ on the right, we get the desired expression for the biases. Given the zero global mean, we have
\begin{equation}
     \dfrac{1}{m}\W^\top \W - \dfrac{1}{\mu_{\textup{class}}}\H_1\H_1^\top  - \dfrac{1}{\mu_{\textup{single}}}\H_2\H_2^\top  
     = \E -\dfrac{\alpha m C^2}{\mu_{\textup{class}}} \langle h\rangle\langle h\rangle^\top 
     = \E
\end{equation}
By the proof of NC1, $\H_2\to \mathbb{O}$. Together with the assumption that $\E$ is proportional to the limit of $\W^\top\W$ (or zero, as a special case), we obtain 
\begin{equation}\label{eq:hhT_wTw}
    \mu_{\textup{class}} \W^\top \W - m\H_1\H_1^\top  \to \gamma\W^\top \W
\end{equation}
for some $\gamma\geq 0$. Note that since $\H_1\H_1^\top$ is p.s.d. this implies $\tilde{\lambda}_c := \mu_{\textup{class}}-\gamma\geq 0$. By multiplying the left and right with appropriate factors, we have
\begin{equation}
    \begin{cases}
        \H_1^\top(\tilde{\lambda}_c \W^\top \W - m\H_1\H_1^\top) \H_1 \to \mathbb{O}\\
        \W (\tilde{\lambda}_c \W^\top \W - m\H_1\H_1^\top) \W^\top  \to \mathbb{O}.
    \end{cases}
\end{equation}
Consequently (according to \eqref{eq:WH1})
\begin{equation}
    \begin{cases}
        \tilde{\lambda}_c (\mathbb{I}_C - \beta\mathbf{1}_C\mathbf{1}_C^\top )^2 - m(\H_1^\top  \H_1)^2 \to \mathbb{O}\\
        \tilde{\lambda}_c(\W \W^\top)^2 - (\mathbb{I}_C - \beta\mathbf{1}_C\mathbf{1}_C^\top )^2  \to \mathbb{O}
    \end{cases}
\end{equation}
Since both $\W\W^\top$ and $\H_1^\top  \H_1$ are p.s.d., we have
\begin{equation}
\begin{cases}\label{eq:hTh_wwT}
    \H_1^\top \H_1 \to \sqrt{\dfrac{\tilde{\lambda}_c}{m}} (\mathbb{I}_C - \beta\mathbf{1}_C\mathbf{1}_C^\top)\\
    \W\W^\top \to \sqrt{\dfrac{m}{\tilde{\lambda}_c}} (\mathbb{I}_C - \beta\mathbf{1}_C\mathbf{1}_C^\top).
\end{cases}
\end{equation}
To establish NC2, recall that $\H_1 = \bigl[ \langle h\rangle_1, \dots, \langle h\rangle_{C} \bigr]$ and that $\M$, as a normalized version of $\H_1$, satisfies
\begin{equation*}
    \M^\top \M
    \to \frac{1}{1-\beta}(\mathbb{I}_C - \beta\mathbf{1}_C\mathbf{1}_C^\top)
    = \frac{C}{C-1}(\mathbb{I}_C - \dfrac{1}{C}\mathbf{1}_C\mathbf{1}_C^\top).
\end{equation*}
To establish NC3, note that from \eqref{eq:hhT_wTw} and \eqref{eq:hTh_wwT} together, it follows that the limits of $\M$ and $\W^\top$ only differ by a constant multiplier.

To establish NC4, note that using NC3 we can write 
\begin{align*}
    \underset{c}{\mathrm{argmax}}\, (\W\h(\x) + \b)_c 
    &= \underset{c}{\mathrm{argmax}}\, (\W\h(\x))_c &&(\b=\beta\mathbf{1}_C)\\
    &\to \underset{c}{\mathrm{argmax}}\, (\M^\top \h(\x))_c &&(\text{NC3})\\
    &= \underset{c}{\mathrm{argmin}}\, \| \h(\x) - \langle h \rangle_c \|_2.
\end{align*}
This completes the proof.
\end{proof}

\subsection{General biases case}\label{proof:general_bias}
\begin{proof}
As in the proof of Theorem \ref{thm:nc1234}, at the end of training we have $\W\H_1 + \b\bo_C^\top = \id_C$. Moreover, since $\E=\mathbb{O}$ and $\H_2\to \mathbb{O}$, we have
\begin{equation}\label{eq:WW_HH}
     \dfrac{1}{m}\W^\top \W - \dfrac{1}{\mu_{\textup{class}}}\H_1(\id - \alpha \bo_C\bo_C^\top)\H_1^\top
     \to \mathbb{O}.
\end{equation}
Multyplying the above expression to the left by $\W$ and to the right by $\W^\top$, we obtain the general expression \eqref{eq:general_weights} for the matrix $(\W\W^\top)^2$ mentioned in the main text:
\begin{equation}
    (\W\W^\top)^2 \to \dfrac{m}{\mu_{\textup{class}}}\Bigl( \mathbb{I}_C - \alpha  \mathbf{1}_C\mathbf{1}_C^\top +(1-\alpha C)(C\b\b^\top - \b\mathbf{1}_C^\top -\mathbf{1}_C\b^\top) \Bigr).
\end{equation}
This expression implies that the rows of the weights matrix may have varying separation angles in the general biases case, i.e., there is no symmetric structure is general. However, for constant biases $\b=\beta\bo_C$, the above expression simplifies to 
\begin{equation}\label{eq:wwT_squared_general_bias}
    (\W\W^\top)^2 \to \dfrac{m}{\mu_{\textup{class}}}\Bigl( \mathbb{I}_C - \frac{1}{C}\bigl(1 - (1 - \alpha C)(1- \beta C)^2
    \bigr)\bo_C\bo_C^\top\Bigr).
\end{equation}
Since $\alpha < 1/C$ and $(1-\beta C)^2 \geq 0$, we have that $(1 - (1 - \alpha C)(1- \beta C)^2)/C\leq 1/C$. Therefore, the RHS of \eqref{eq:wwT_squared_general_bias} is always p.s.d. and has a unique p.s.d square root proportional to $\id_C - \gamma\bo_C\bo_C^\top$ for some constant $\gamma<1/C$. Denote $\rho:=(1 - (1 - \alpha C)(1- \beta C)^2)/C$, then we have $\gamma = (1-\sqrt{1-C\rho})/C$. Note that $\rho<1/C$ ensures that $\gamma$ is well defined. Then the configuration of the final weights is given by
\begin{equation}
    \W\W^\top \to \sqrt{\dfrac{m}{\mu_{\textup{class}}}}\Bigl(\id_C - \gamma \bo_C\bo_C^\top\Bigr).
\end{equation}
This means that the norms of all the weights rows are still equal, as in NC2. However, since $\gamma<1/C$ if $\beta\neq 1/C$, the angle between these rows is smaller than in the ETF structure. 

We can derive the configuration of the class means similarly by multyplying \eqref{eq:WW_HH} to the left by $\H_1^\top$ and to the right by $\H_1$. In the general biases case, we get 
\begin{equation}\label{eq:h1h1T_general}
    \H_1^\top\H_1(\id_C - \alpha \bo_C\bo_C^\top)\H_1^\top\H_1 \to \dfrac{\mu_{\textup{class}}}{m}\Bigl(\id_C - \b \bo_C^\top - \bo_C\b^\top + \|\b\|^2_2\bo_C\bo_C^\top\Bigr). 
\end{equation}
As with the weights, we see that this is not a symmetric structure in general. Thus, NC2 does not hold in the general biases case. However, for the constant biases $\b=\beta\bo_C$, the above expression simplifies to
\begin{equation}
    \H_1^\top\H_1(\id_C - \alpha \bo_C\bo_C^\top)\H_1^\top\H_1 \to \dfrac{\mu_{\textup{class}}}{m}(\id_C - \beta\bo_C\bo_C^\top)^2.
\end{equation}
Analogously to the previous derivations, we get that the unique p.s.d. square root of the RHS is given by $\id_C - \tilde{\rho}\bo_C\bo_C^\top$, where $\tilde{\rho}:=(1-|1-\beta C|)/C<1/C$ for $\beta \neq 1/C$. On the other hand, the unique p.s.d root of $\id - \alpha \bo_C\bo_C^\top$ is given by $\id_C - \phi \bo_C\bo_C^\top$, where $\phi:=(1-\sqrt{1-\alpha C})/C$. Thus, we have the following
\begin{equation}
    \sqrt{\dfrac{m}{\mu_{\textup{class}}}}\H_1^\top\H_1(\id_C - \phi\bo_C\bo_C^\top) \to \id_C - \tilde{\rho}\bo_C\bo_C^\top.
\end{equation}
Therefore, the structure of the last-layer features class means is given by
\begin{equation}\label{eq:h1h1T}
    \H_1^\top\H_1 \to \sqrt{\dfrac{\mu_{\textup{class}}}{m}}\Bigl(\id_C - \tilde{\rho}\bo_C\bo_C^\top\Bigr)\Bigl(\id_C - \dfrac{\phi}{1+\phi C}\bo_C\bo_C^\top\Bigr) =  \sqrt{\dfrac{\mu_{\textup{class}}}{m}}\Bigl(\id_C - \theta \bo_C\bo_C^\top\Bigr),
\end{equation}
where $\theta:= \tilde{\rho} + \phi/(1+\phi C) - C\tilde{\rho}\phi/(1+\phi C) < 1/C$ for $\beta \neq 1/C$. Thus, similarly to the classifier weights $\W$, the last-layer features class means form a symmetric structure with equal lengths and a separation angle smaller than in the ETF. However, the centralized class means given by $\M=\H_1(\id_C - \bo_C\bo_C^T/C)$ still form the ETF structure:
\begin{equation}
    \M^\top\M\to \sqrt{\dfrac{\mu_{\textup{class}}}{m}}\Bigl(\id_C - \frac{1}{C}\bo_C\bo_C^\top\Bigr).
\end{equation}
This holds since the component proportional to $\bo_C\bo_C^\top$ on the RHS of equation \eqref{eq:h1h1T} lies in the kernel of the ETF matrix $(\id_C - \bo_C\bo_C^\top/C)$. Thus, we conclude that NC2 holds in case of equal biases, while NC3 does not.
\end{proof}

\textbf{Remark on $\alpha \to 0$ case:} Simplifying the expressions for constants $\gamma$ and $\theta$, which define the angles in the configurations of the weights and the class means above, we get the following:
\begin{equation}
    \gamma = \dfrac{1}{C}\bigl( 1 - |1-\beta C|\sqrt{1-\alpha C}\bigr), \quad \theta = \dfrac{1}{C}\Bigl( 1 - \dfrac{|1-\beta C|}{2-\sqrt{1-\alpha C}}\Bigr).
\end{equation}
Analyzing these expressions, we find that they are equal only if $1-\alpha C = 1$, i.e. $\alpha = 0$. However, this can only hold if $\kappa_n=0$ by definition of $\alpha$, i.e., when the kernel $\Theta^h$ is zero on pairs of samples from different classes. While $\alpha \neq 0$ in general, there are certain settings where $\alpha$ approaches zero. Simplifying the expression for $\alpha$, we can get the following
\begin{equation}\label{eq:alpha_expanded}
    \alpha = \dfrac{1}{\frac{\kappa_c}{\kappa_n}(1-\frac{1}{m}) + \frac{\kappa_d}{\kappa_n}\frac{1}{m} + (C-1)}.
\end{equation}
One can see that $\alpha \to 0$ if $C\to\infty$ or when $\kappa_c/\kappa_n \to \infty$. Since the kernel $\Theta^h$ is strongly aligned with the labels in our numerical experiments, the value of $\kappa_c/\kappa_n$ is large in practice. Thus, $\alpha$ is not zero but indeed significantly smaller than $1/C$. Thus, in our numerical experiments the angles $\theta$ and $\gamma$ are close to each other. However, we note that the equality of these two angles does not imply NC3, since the value of $\theta$ characterizes the angles between the non-centralized class means.

\textbf{Remark on $\alpha \to 1/C$ case:}
If $\alpha=1/C$, the equation \eqref{eq:h1h1T_general} for the structure of the features class means with general (not equal) biases simplifies to 
\begin{equation}
    \M^\top\M \to \dfrac{\mu_{\textup{class}}}{m} \Bigl(\id_C - \frac{1}{C}\bo_C\bo_C^\top\Bigr),
\end{equation}
i.e., in this case the class means always exhibit the ETF structure, even without the assumption that all the biases are equal. Moreover, in this case $\gamma=1/C$ as well. Thus, both NC2 and NC3 hold. While by definition $\alpha < 1/C$, we can analyze the cases when it approaches $1/C$ using the expression~\eqref{eq:alpha_expanded} again. One can see that when $m\to\infty$ and $\kappa_c/\kappa_n\to 1$, we have $\alpha\to 1/C$. However, the requirement $\kappa_c/\kappa_n\to 1$ implies that the kernel $\Theta^h$ does not distinguish between pairs of samples from the same class and from different classes. Such a property of the kernel is associated with poor generalization performance and does not occur in our numerical experiments.

% 
%%%%%%%%%%%%%%%%%%%%%%%%%%%%%%%%%%%%%%%%
\section{Numerical experiments}\label{sec:appendix_experiments}
%%%%%%%%%%%%%%%%%%%%%%%%%%%%%%%%%%%%%%%%
% 

\paragraph{Implementation details} We use JAX \cite{jax2018github} and Flax (neural network library for JAX) \cite{flax2020github} to implement all the DNN architectures and the training routines. This choice of the software allows to compute the empirical NTK of any DNN architecture effortlestly and efficiently. We compute the values of kernels $\Theta$ and $\Theta^h$ on the whole training batch ($m=12$ samples per class, 120 samples in total) in case of ResNet20 and DenseNet40 to approximate the values $(\gamma_d,\gamma_c,\gamma_n)$ and $(\kappa_d,\kappa_c,\kappa_n)$, as well as the NTK alignment metrics, and compute the invariant $\E$ using these values. Since VGG11 and VGG16 architectures are much larger (over 10 million parameters) and computing their Jacobians is very memory-intensive, we use $m=4$ samples per class (i.e., 40 samples in total) to approximate the kernels of these models. We compute all the other training metrics displayed in panes a-e of Figures \ref{fig:vgg_mnist}, \ref{fig:vgg_fmnist}, \ref{fig:vgg_cifar}, \ref{fig:resnet_mnist2}, \ref{fig:resnet_fmnist}, \ref{fig:resnet_cifar}, \ref{fig:densenet_mnist}, \ref{fig:densenet_fmnist}, \ref{fig:densenet_cifar} on the whole last batch of every second training epoch for all the architectures. The test accuracy is computed on the whole test set. To produce panes f-h of the same figures, we only compute the NC metrics and the test accuracy one time after 400 epochs of training for every learning rate. We use 30 logarithmically spaced learning rates in the range $\eta \in [10^{-4},10^{0.25}]$ for ResNet20 trained on MNIST and VGG11 trained on MNIST. For all the other architecture-dataset pairs we only compute the last 20 of these learning rates to reduce the computational costs, since the smallest learning rates do not yield models with acceptable performance. %We provide the source code of our experiments in the supplementary materials of the submission. 

\paragraph{Compute} We executed the numerical experiments mainly on NVIDIA GeForce RTX 3090 Ti GPUs, each model was trained on a single GPU. In this setup, a single training run displayed in panes a-e of Figures \ref{fig:vgg_mnist}, \ref{fig:vgg_fmnist}, \ref{fig:vgg_cifar}, \ref{fig:resnet_mnist2}, \ref{fig:resnet_fmnist}, \ref{fig:resnet_cifar}, \ref{fig:densenet_mnist}, \ref{fig:densenet_fmnist}, \ref{fig:densenet_cifar} took approximately 3 hours for ResNet20, 6 hours for DenseNet40, 7 hours for VGG11, and 11 hours for VGG16. This adds up to a total of 312 hours to compute panes a-e of the figures. The computation time is mostly dedicated not to the training routine itself but to the large number of computationally-heavy metrics, which are computed every second epoch of a training run. Indeed, to approximate the values of $\Theta$ and $\Theta^h$, one needs to compute $C(C+1)+n(n+1)$ kernels on a sample of size $mC$ from the dataset, and each of the kernels requires computing a gradient with respect to numerous parameters of a DNN. Additionally, the graphs in panes f-h of the same figures take around 1.5 hours for each learning rate value for ResNet20, 3 hours for DenseNet40, and 4 hours for VGG11  and VGG16, which adds up to approximately 1350 computational hours.

\paragraph{Results} We include experiments on the following architecture-dataset pairs:
\begin{itemize}
    \item Figure \ref{fig:vgg_mnist}: VGG11 trained on MNIST
    \item Figure \ref{fig:vgg_fmnist}: VGG11 trained on FashionMNIST
    \item Figure \ref{fig:vgg_cifar}: VGG16 trained on CIFAR10
    \item Figure \ref{fig:resnet_mnist2}: ResNet20 trained on MNIST
    \item Figure \ref{fig:resnet_fmnist}: ResNet20 trained on FashionMNIST
    \item Figure \ref{fig:resnet_cifar}: ResNet20 trained on CIFAR10
    \item Figure \ref{fig:densenet_mnist}: DenseNet40 trained on MNIST
    \item Figure \ref{fig:densenet_fmnist}: DenseNet40 trained on FashionMNIST
    \item Figure \ref{fig:densenet_cifar}: DenseNet40 trained on CIFAR10
\end{itemize}

The experiments setup is described in Section \ref{sec:experiments}. Panes a-h of Figures \ref{fig:vgg_mnist}, \ref{fig:vgg_fmnist}, \ref{fig:vgg_cifar}, \ref{fig:resnet_mnist2}, \ref{fig:resnet_fmnist}, \ref{fig:resnet_cifar}, \ref{fig:densenet_mnist}, \ref{fig:densenet_fmnist}, \ref{fig:densenet_cifar} are analogous to the same panes of Figure \ref{fig:resnet_mnist}. We include additional pane i here, which displays the norms of the invariant terms corresponding to the feature matrix components $\H_1$ and $\H_2$, and the global features mean $\langle h \rangle$ at the end of training. One can see that the global features mean is relatively small in comparison with the class-means in every setup, and the "variance" term $\H_2$ is small for models that exhibit NC. We also add pane j, which displays the alignment of kernels $\Theta$ and $\Theta^h$ for every model at the end of training. One can see that the kernel alignments is typically stronger in models that exhibit NC.

\subsection{Additional examples of the NTK block structure} We include the following additional illustrative figures (analogous to Figure \ref{fig:block_structured_ntk} in the main text) that show the NTK block structure in dataset-architecture pairs covered in our experiments: 

\begin{itemize}
    \item Figure \ref{fig:vgg_mnist_ntk}: VGG11 trained on MNIST
    \item Figure \ref{fig:vgg_fmnist_ntk}: VGG11 trained on FashionMNIST
    \item Figure \ref{fig:vgg_cifar_ntk}: VGG16 trained on CIFAR10
    \item Figure \ref{fig:resnet_fmnist_ntk}: ResNet20 trained on FashionMNIST
    \item Figure \ref{fig:resnet_cifar_ntk}: ResNet20 trained on CIFAR10
    \item Figure \ref{fig:densenet_mnist_ntk}: DenseNet40 trained on MNIST
    \item Figure \ref{fig:densenet_fmnist_ntk}: DenseNet40 trained on FashionMNIST
    \item Figure \ref{fig:densenet_cifar}: DenseNet40 trained on CIFAR10
\end{itemize}

Overall, the block structure pattern is visible in the traced kernels in all the figures. As expected, the block structure is more pronounced in the kernels where the final alignment values are higher. While the norms of the "non-diagonal" components of the kernels are generally smaller than the "diagonal" components in panes c) and d), we notice that there is a large variability in the norms of the "diagonal" components in some settings. This means that different neurons of the penultimate layer and different classification heads may contribute to the kernel unequally in some settings. Moreover, certain "non-diagonal" components of the last-layer kernel may have non-negligible effect in some settings. 
We discuss how one could generalize our analysis to account for these properties of the NTK in Appendix \ref{appendix:relax}.

\subsection{Preliminary experiments with CE loss}
While CE loss is a common choice for training DNN classifiers, our theoretical analysis and the experimental results only cover DNNs trained with MSE loss. For completeness, we provide experimental results for ResNet20 trained on MNIST with CE loss in Figure \ref{fig:resnet_mnist_ce}. One can see that smaller invariant norm and higher invariant alignment correlate with NC in the figure. However, DNNs trained with CE loss overall reach better NC metrics but have much larger norm of the invariant in comparison with DNNs trained with MSE loss.  

\section{Relaxation of the NTK Block-Structure Assumption}\label{appendix:relax}

In this section, we first derive the dynamics equations of DNNs with a general block structure assumption on the last-layer kernel $\Theta^h$ (analogous to the equations presented in Theorem \ref{thm:GD_block_NTK} and Theorem \ref{thm:GD_block_NTK_decompose}). Then we discuss a possible relaxation of Assumption \ref{assumption:ntk_h}, under which our main result regarding NC in Theorem \ref{thm:nc1234} still holds.  
\subsection{Dynamics under General Block Structure Assumption}

We first formulate the most general form of the block structure assumption on $\Theta^h$ as follows:
\begin{assumption}\label{assumption:gen}
Assume that $\Theta^h:\mathcal{X}\times\mathcal{X}\to\mathbb{R}^{n\times n}$ has the following block structure
    \begin{equation}
        \Theta^h(\x,\x) = \A_d+\A_c+\A_n, \quad \Theta^h(\x^c_i,\x^c_j) = \A_c+\A_n, \quad \Theta^h(\x^c_i,\x_j^{c'}) = \A_n,
    \end{equation}
where $\A_{d,c,n}\in\mathbb{R}^{n\times n}$ are arbitrary p.s.d. matrices. Here $\x^c_i$ and $\x^c_j$ are two distinct inputs from the same class, and $\x_j^{c'}$ is an input from class $c'\neq c$. 
\end{assumption} 
This assumption means that every kernel matrix $\Theta_{k,s}^h(X), k,s\in[1,n]$ still has at most three distinct values, corresponding to the inter-class, intra-class, and the diagonal values of the kernel. However, these values are arbitrary and may depend on the choice of $k,s\in[1,n]$.

Under the general block structure assumption, the gradient flow dynamics of DNNs with MSE loss takes the following form:
\begin{equation}\label{eq:dyn_relax}
    \begin{cases}
        \dot{\H} = & - \A_d\W^\top\R + m\A_c \W^\top\R_\textup{class} + N\A_n \W^\top\R_\textup{global}\\
        \dot{\W} = & - \R \H^\top \\
        \dot{\b} = & - \R_\textup{global}\mathbf{1}_N.
    \end{cases}
\end{equation}
This is the generalized version of the dynamics presented in Theorem \ref{thm:GD_block_NTK}. Consequently, the decomposed dynamics presented in Theorem \ref{thm:GD_block_NTK_decompose} takes the following form under the general block structure assumption:
\begin{equation}\label{eq:dyn_decomposed_relax}
    \begin{cases}
        \dot{\H}_1 &= - (\A_d+m\A_c)\W^\top\R_1 -  m\A_n\W^\top\R_1 \mathbf{1}_C\mathbf{1}_C^\top\\
        \dot{\H}_2 &= - \A_d\W^\top \W\H_2 \\        
        \dot{\W} &= - m(\R_1\H_1^\top  +  \W\H_2\H_2^\top)  \\   
        \dot{\b} &= - m\R_1\mathbf{1}_C.
    \end{cases}
\end{equation}
The derivation of the above dynamics equations are identical to the proofs of Theorem \ref{thm:GD_block_NTK} and Theorem \ref{thm:GD_block_NTK_decompose} presented in Appendix \ref{appendix:proofs}. 

\paragraph{Rotation invariance} We notice that the dynamics of $(\W,\H)$ in \eqref{eq:dyn_relax} has to be rotation invariant, i.e., the equations should not be affected by a change of variables $\W\to\W\Q$, $\H\to\Q^\top\H$ for any orthogonal matrix $\Q$. This holds since the loss function only depends on the product $\W\H$, which does not change under rotation. This requirement puts conditions on the behavior of $\A_{d,c,n}$ under rotation. Indeed, assume that the rotation $\W\to\W\Q$, $\H\to\Q^\top\H$ for some $\Q$ corresponds to the following change of the kernel:
\begin{equation}
    \A_{d,c,n} \to \tilde{\A}_{d,c,n}(\Q),
\end{equation}
then the rotation invariance of the dynamics implies the following equality for any $\Q$:
\begin{align}\label{eq:rot_inv_gen}
    \Q\tilde{\A}_d(\Q)\Q^\top\W^\top\R + m\Q\tilde{\A}_c(\Q)\Q^\top\W^\top\R_\textup{class} + N\Q\tilde{\A}_n(\Q)\Q^\top\W^\top\R_\textup{global}\\
    = \A_d\W^\top\R + m\A_c \W^\top\R_\textup{class} + N\A_n \W^\top\R_\textup{global}.
\end{align}
These equations are satisfied trivially with our initial assumption, where $\A_{d,c,n}=\tilde\A_{d,c,n}(\Q)\propto\id_n$. However, as we can see, any generalized assumption should specify the behavior of the kernel under rotation, and satisfy the above equation. 

For general $\A_{d,c,n}$, the following behavior under rotation trivially satisfies the above condition: $\tilde\A_{d,c,n}(\Q) = \Q^\top\A_{d,c,n}\Q$. This behaviour of the kernel under rotation is intuitive, since it implies that the gradients of the last-layer features $h$ are rotated in the same way as the features. However, we note that gradients of parametrized functions do not in general behave this way, since the rotation of the function has to be realized by a certain change of parameters. Consider, for instance, a one-hidden-layer linear network with weights $\mathbf{V}$ in the first layer. Then we have $\H=\mathbf{V}X$, and a rotation $\H\to\Q^\top\H$ corresponds to the change of parameters $\mathbf{V}\to\Q^\top\mathbf{V}$. In this case, the kernel does not change under rotation, i.e., $\tilde\A_{d,c,n}(\Q)=\A_{d,c,n}$.

\paragraph{Dynamics invariant} We note that the dynamics in \ref{eq:dyn_decomposed_relax} does not in general have an invariant analogous to the one we identified in Theorem \ref{thm:GD_block_NTK_decompose}. Indeed, if we define a quantity $\E:= \W^\top\W - c_1\H_1\H_1^\top - c_2\H_2\H_2^\top$ for some constants $c_{1,2}\in\mathbb{R}$, and additionally assume centered global means $\H_1\bo_C=0$, we get the following expression for the derivative of $\E$:
\begin{align}\label{eq:inv_der_relax}
    \dot\E &= \Bigl(c_1(\A_d+m\A_c) - m \id_n\Bigr)\W^\top\R_1\H_1^\top - \H_1\R_1^\top\W\Bigl(c_1(\A_d+m\A_c)^\top 
    - m \id_n\Bigr) \\
    &+ \Bigl(c_2\A_d- m\id_n\Bigr)\W^\top\W\H_2\H_2^\top - \H_2\H_2^\top\W^\top\W\Bigl(c_2\A_d^\top - m\id_n\Bigr),
\end{align}
which is not equal to zero with arbitrary matrices $\A_{d,c}$.

\subsection{Neural Collapse under Relaxed Block Structure Assumption}
We now propose a relaxation of our main assumption, under which our main result regarding NC in Theorem \ref{thm:nc1234} still holds. In terms of Assumption \ref{assumption:gen} on the general block structure of $\Theta^h$, our initial Assumption \ref{assumption:ntk_h} in the main text is the special case with $\A_n = \kappa_n\id_n$, $\A_c = (\kappa_c-\kappa_n)\id_n$, $\A_d = (\kappa_d-\kappa_c) \id_n$. The relaxed assumption can be formulated as follows in terms of matrices $\A_{d,c,n}$:

\begin{assumption}\label{assumption:relax_1}
    Assume that $\A_n$ is an arbitrary p.s.d. matrix and $(\A_c,\A_d)$ satisfy the following conditions:
\begin{equation}
    \A_c = \kappa_c \mathbb{I}_n + \mathbf{N}_c, \A_d = \kappa_d \mathbb{I}_n + \mathbf{N}_d, 
\end{equation}
where $\mathbf{N}^\top_{c,d}\in \ker(\R^\top\W)$, i.e., $\mathbf{N}_{c,d}\W^\top\R=\mathbb{O}$. Further, assume that the kernel changes under rotation with an orthogonal matrix $\Q$ as follows:
\begin{equation}
    \tilde\A_{d,c,n}(\Q) = \Q^\top\A_{d,c,n}\Q.
\end{equation}
\end{assumption}
Since $\A_n$ is arbitrary, this relaxation allows arbitrary non-zero values of non-diagonal kernels $\Theta^h_{k,s}$ with $k\neq s$.
The following observations justify the consistency of the above assumption: 
\begin{itemize}
    \item Since $\W\in\mathbb{R}^{C\times n}$, $\R\in\mathbb{R}^{C\times N}$ and $N>n>C$, $\R^\top\W$ has a non-empty kernel (possibly time-dependent). 
    \item The dynamics is rotation invariant under the assumption, i.e., the equation \eqref{eq:rot_inv_gen} holds.
    \item The expression of the assumption is rotation invariant, in a sense that $\tilde\A_{d,c}(\Q) = \kappa_{d,c}\id_n + \tilde{\mathbf{N}}_{d,c}(\Q)$, where $\tilde{\mathbf{N}}^\top_{d,c}\in\ker(\R^\top\W\Q)$ for any orthogonal $\Q$.
\end{itemize}

Under the above assumption, the derivative in \ref{eq:inv_der_relax} becomes zero, so the dynamics has an invariant of the form $\E:= \W^\top\W - c_1\H_1\H_1^\top - c_2\H_2\H_2^\top$. Moreover, the statement and the proof of our main Theorem \ref{thm:nc1234} remains unchanged. Thus, DNNs satisfying the conditions of Theorem \ref{thm:nc1234} display NC under Assumption \ref{assumption:relax_1}.  

\subsection{Discussion}
The analysis of the DNNs dynamics is simplified significantly by assuming that $\Theta^h$ has a block structure. However, formulating a reasonable and consistent assumption on the NTK and its components is non-trivial. The Assumption \ref{assumption:ntk_h} that we used in the main text is justified by the empirical results but may not capture all the relevant properties of the NTK. We believe that studying DNNs' dynamics under a more general or a more reasonable assumption on the NTK is a promising future work direction. The relaxed block structure assumption proposed in this section is the first step into this direction. 

\newpage
\begin{figure}
    \centering
    \includegraphics[width=\textwidth]{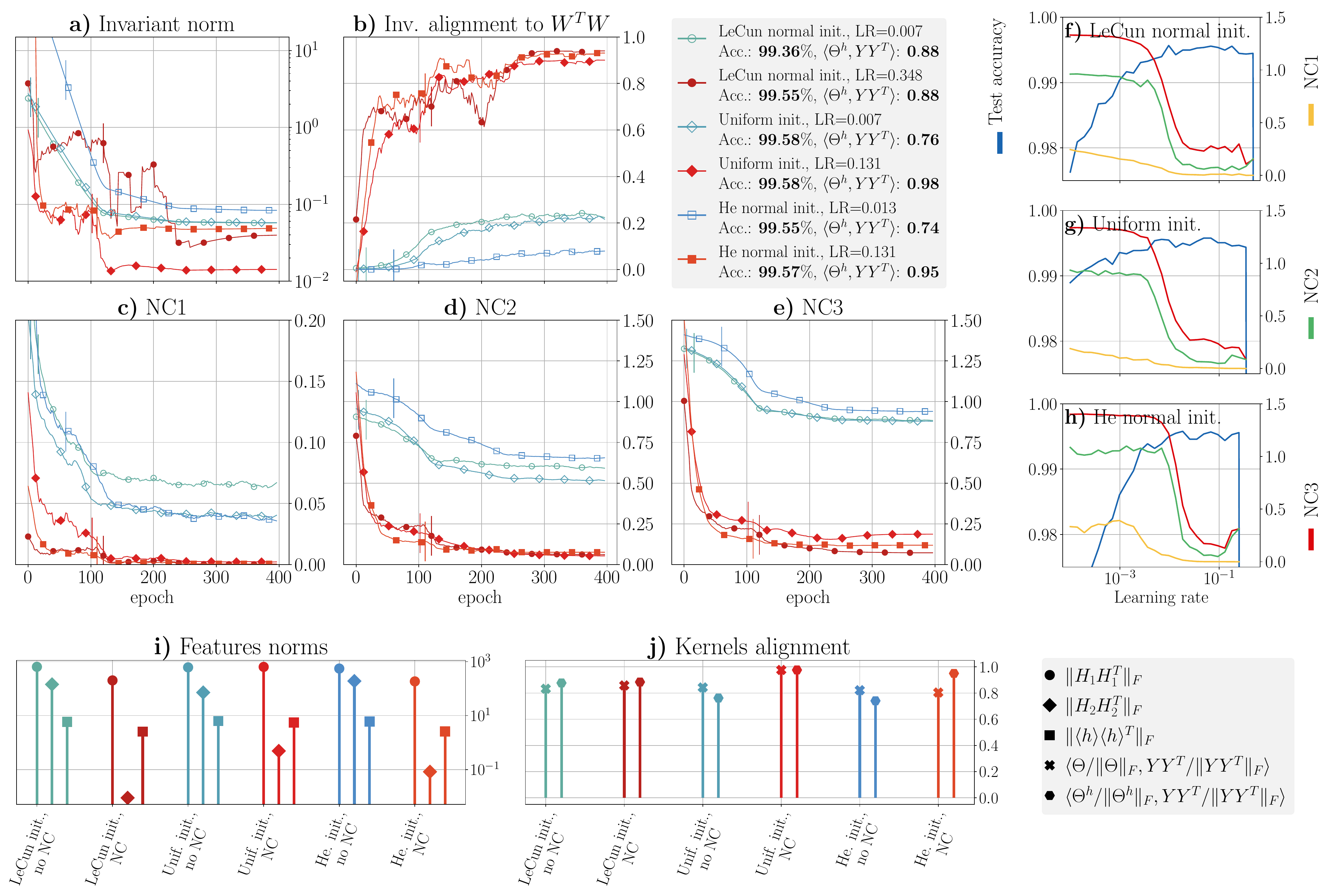}
    \caption{VGG11 trained on MNIST. See Figure \ref{fig:resnet_mnist} for the description of panes a-h. \textbf{i)} Norms of matrices $\H_1\H_1^\top$, $\H_2\H_2^\top$, and $\langle h \rangle\langle h \rangle^\top$ at the end of training. \textbf{j)} Alignment of kernels $\Theta$ and $\Theta^h$ at the end of training. The color in panes i-j is the color of the same model in panes a-e. }
    \label{fig:vgg_mnist}
\end{figure}

\begin{figure}
    \centering
    \includegraphics[width=\textwidth]{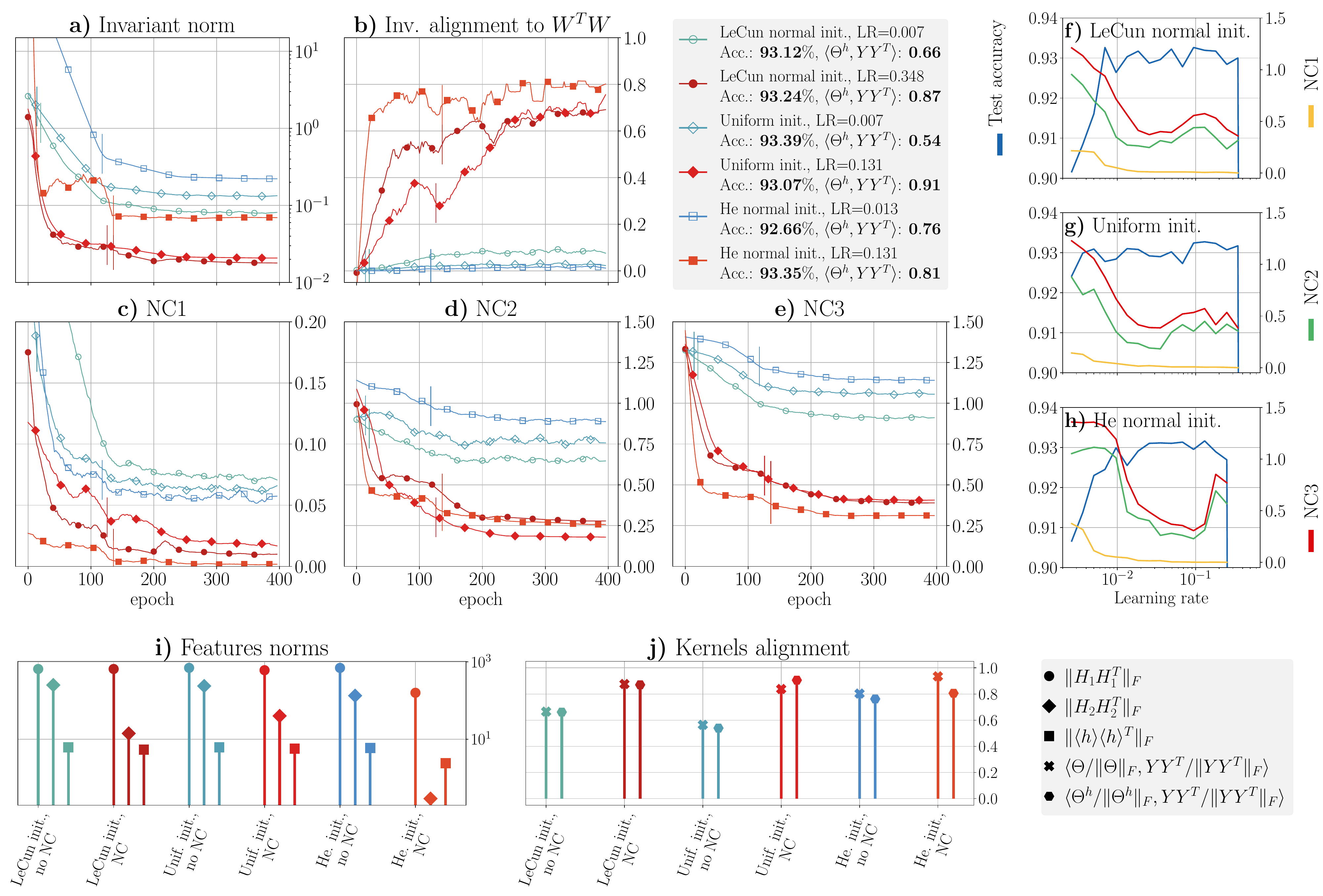}
    \caption{VGG11 trained on FashionMNIST. See Figure \ref{fig:resnet_mnist} for the description of panes a-h. \textbf{i)} Norms of matrices $\H_1\H_1^\top$, $\H_2\H_2^\top$, and $\langle h \rangle\langle h \rangle^\top$ at the end of training. \textbf{j)} Alignment of kernels $\Theta$ and $\Theta^h$ at the end of training. The color in panes i-j is the color of the same model in panes a-e. }
    \label{fig:vgg_fmnist}
\end{figure}

\begin{figure}
    \centering
    \includegraphics[width=\textwidth]{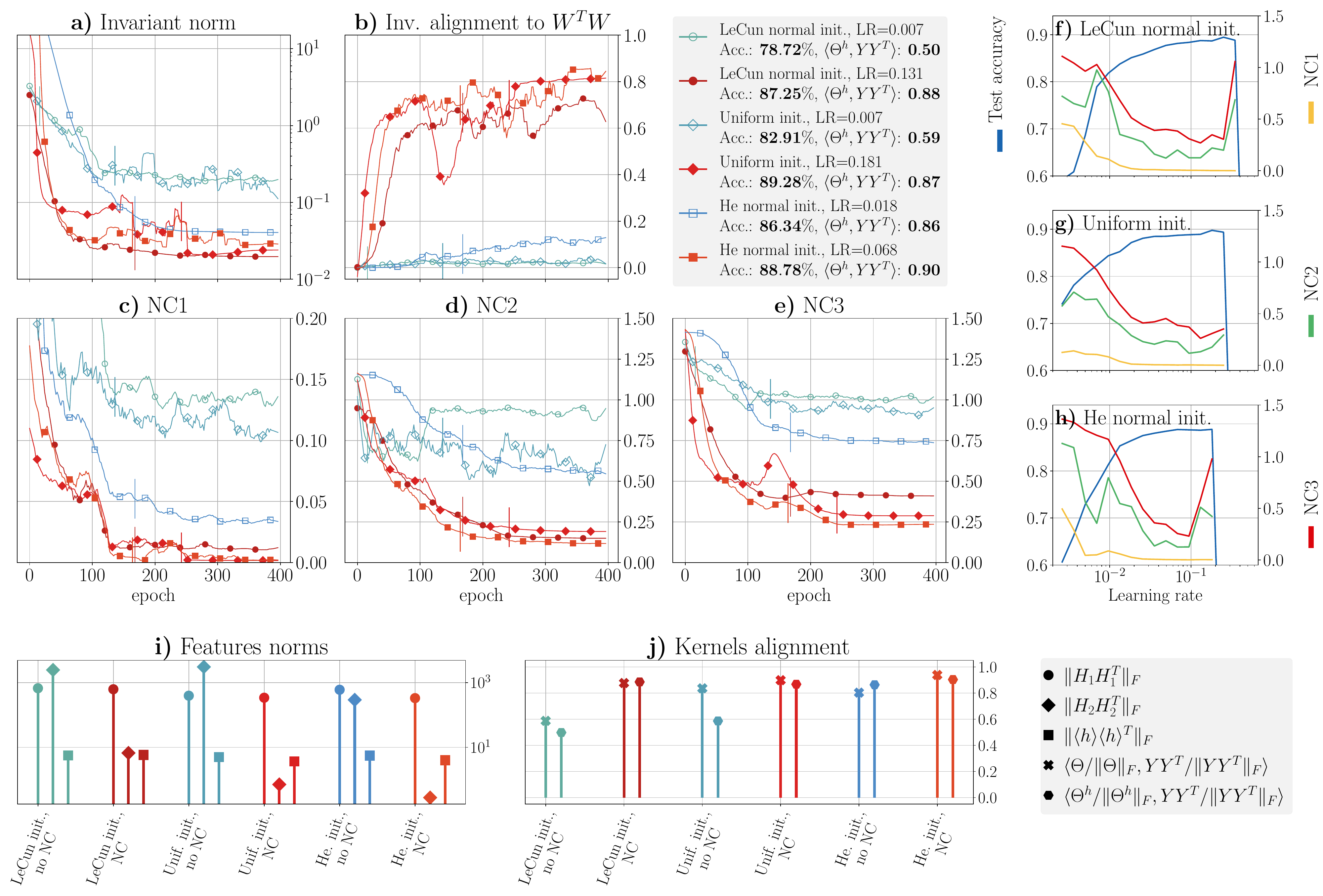}
    \caption{VGG16 trained on CIFAR10. See Figure \ref{fig:resnet_mnist} for the description of panes a-h. \textbf{i)} Norms of matrices $\H_1\H_1^\top$, $\H_2\H_2^\top$, and $\langle h \rangle\langle h \rangle^\top$ at the end of training. \textbf{j)} Alignment of kernels $\Theta$ and $\Theta^h$ at the end of training. The color in panes i-j is the color of the same model in panes a-e. }
    \label{fig:vgg_cifar}
\end{figure}

\begin{figure}
    \centering
    \includegraphics[width=\textwidth]{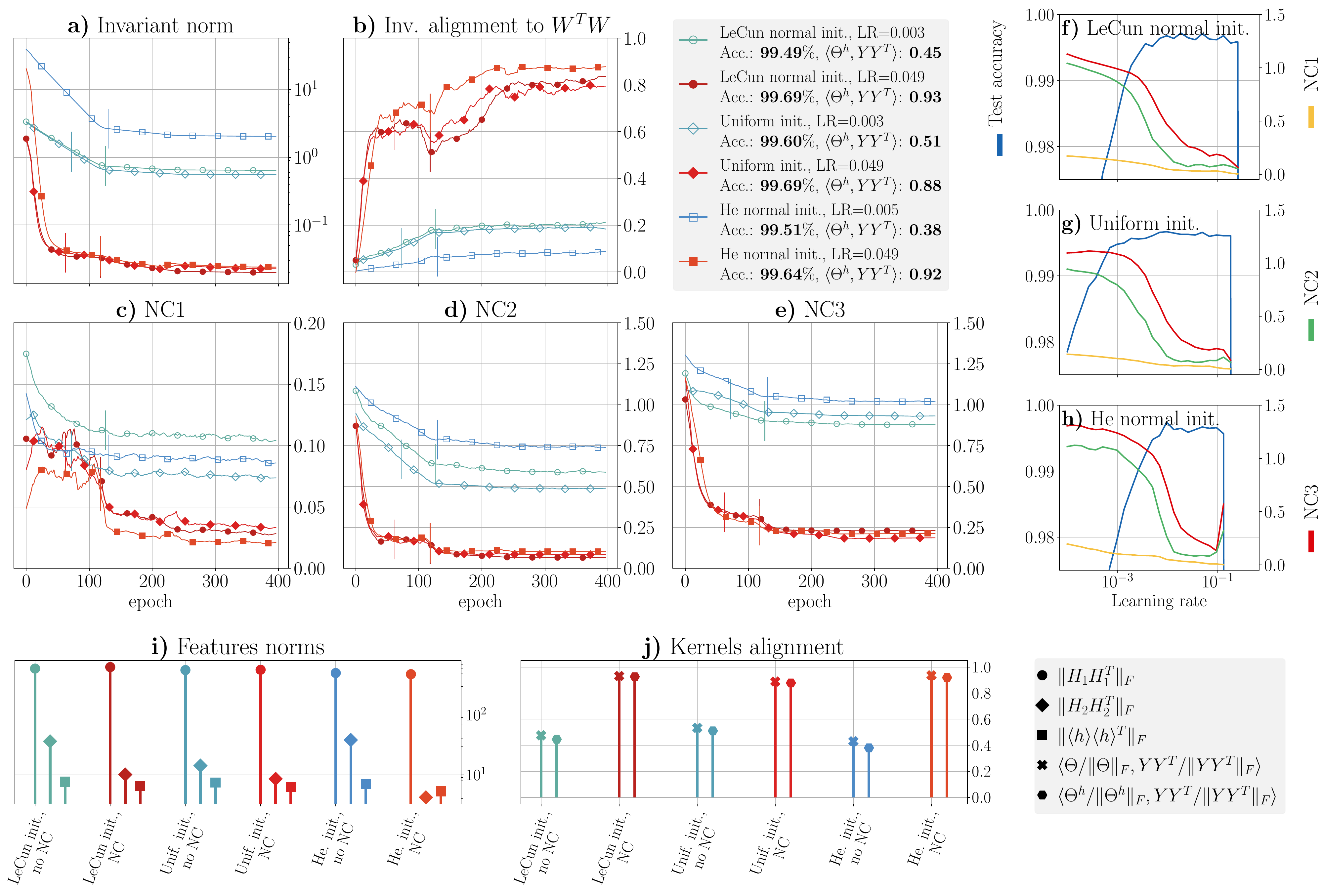}
    \caption{ResNet20 trained on MNIST. See Figure \ref{fig:resnet_mnist} for the description of panes a-h. \textbf{i)} Norms of matrices $\H_1\H_1^\top$, $\H_2\H_2^\top$, and $\langle h \rangle\langle h \rangle^\top$ at the end of training. \textbf{j)} Alignment of kernels $\Theta$ and $\Theta^h$ at the end of training. The color in panes i-j is the color of the same model in panes a-e. }
    \label{fig:resnet_mnist2}
\end{figure}

\begin{figure}
    \centering
    \includegraphics[width=\textwidth]{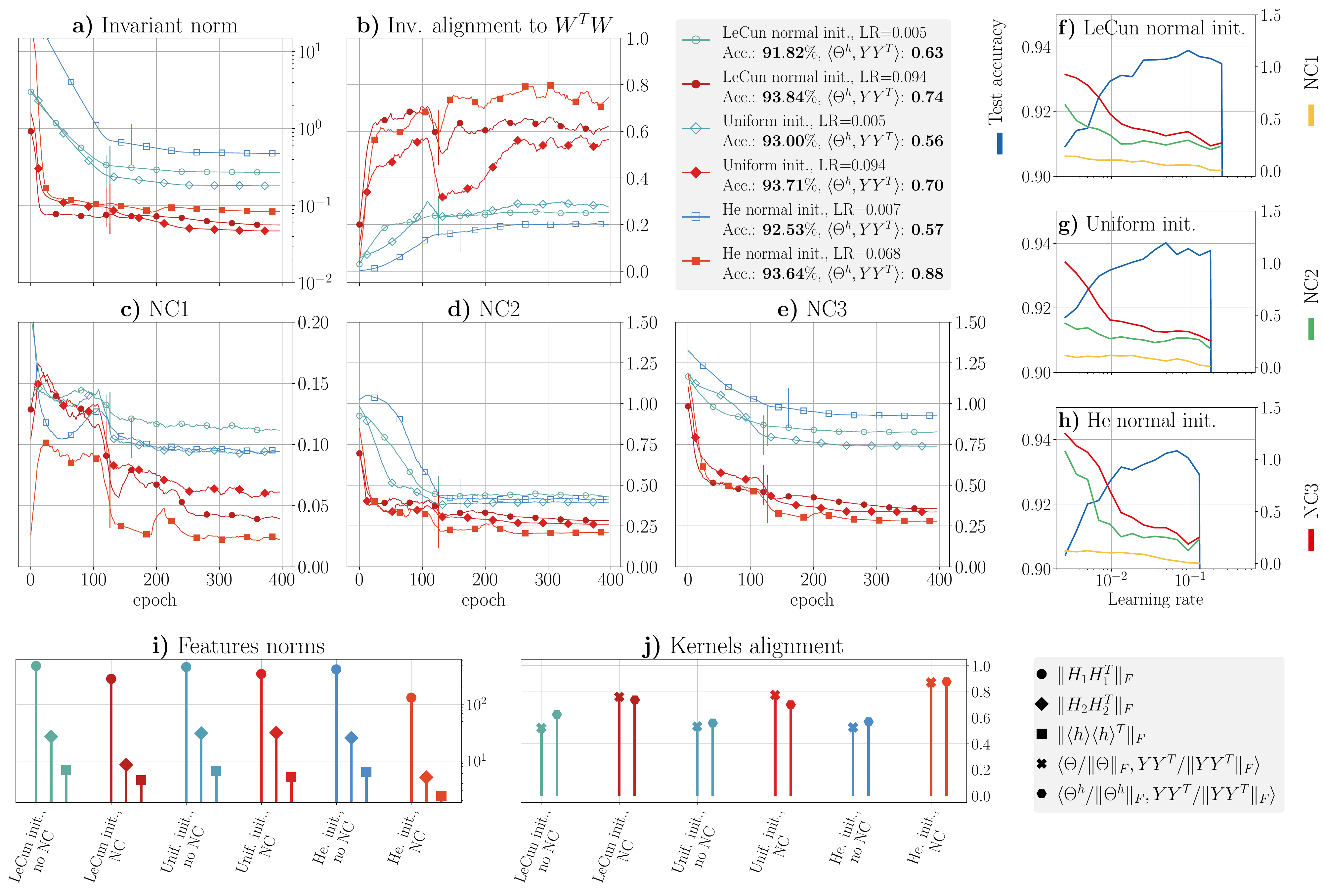}
    \caption{ResNet20 trained on FashionMNIST. See Figure \ref{fig:resnet_mnist} for the description of panes a-h. \textbf{i)} Norms of matrices $\H_1\H_1^\top$, $\H_2\H_2^\top$, and $\langle h \rangle\langle h \rangle^\top$ at the end of training. \textbf{j)} Alignment of kernels $\Theta$ and $\Theta^h$ at the end of training. The color in panes i-j is the color of the same model in panes a-e. }
    \label{fig:resnet_fmnist}
\end{figure}

\begin{figure}
    \centering
    \includegraphics[width=\textwidth]{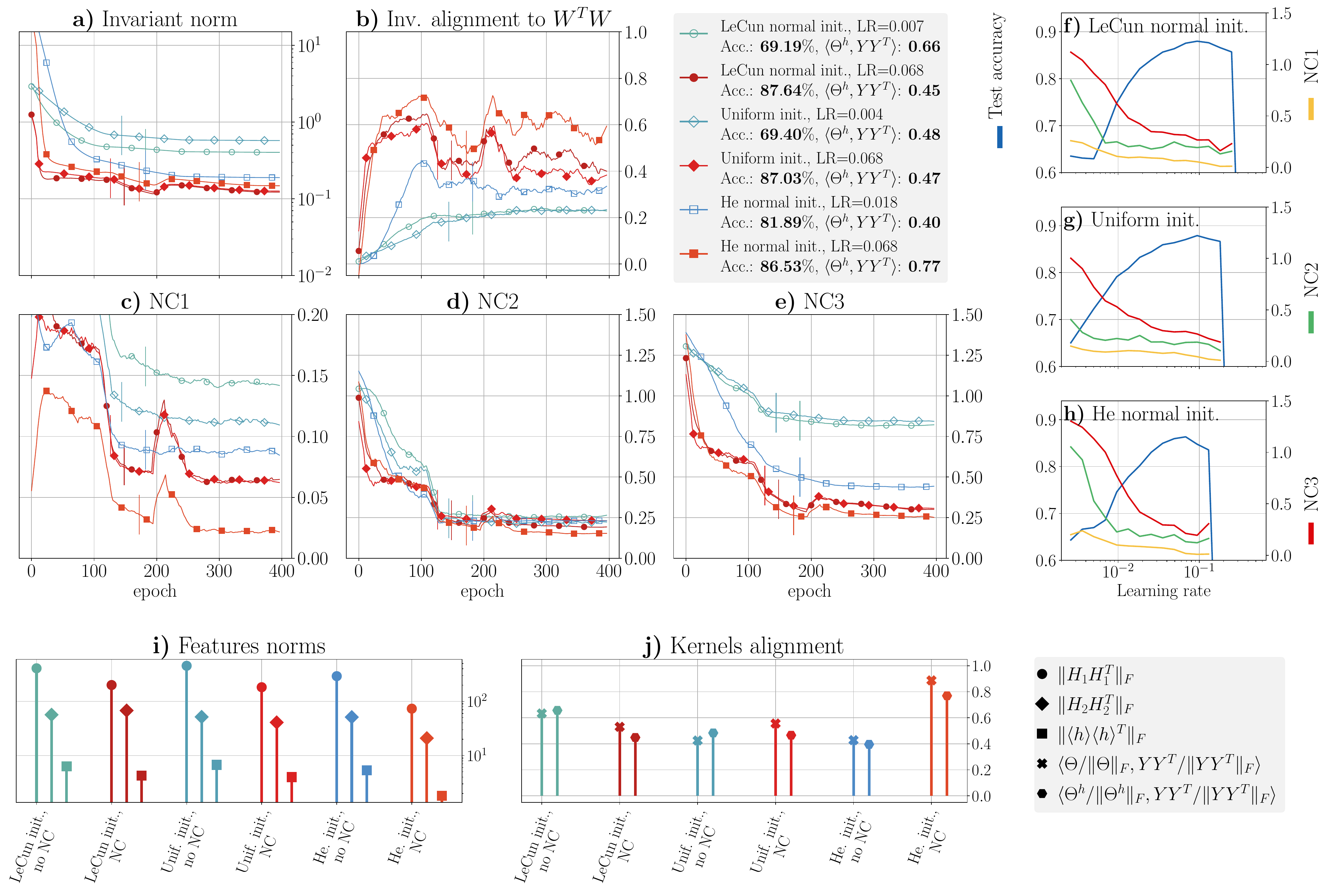}
    \caption{ResNet20 trained on CIFAR10. See Figure \ref{fig:resnet_mnist} for the description of panes a-h. \textbf{i)} Norms of matrices $\H_1\H_1^\top$, $\H_2\H_2^\top$, and $\langle h \rangle\langle h \rangle^\top$ at the end of training. \textbf{j)} Alignment of kernels $\Theta$ and $\Theta^h$ at the end of training. The color in panes i-j is the color of the same model in panes a-e. }
    \label{fig:resnet_cifar}
\end{figure}

\begin{figure}
    \centering
    \includegraphics[width=\textwidth]{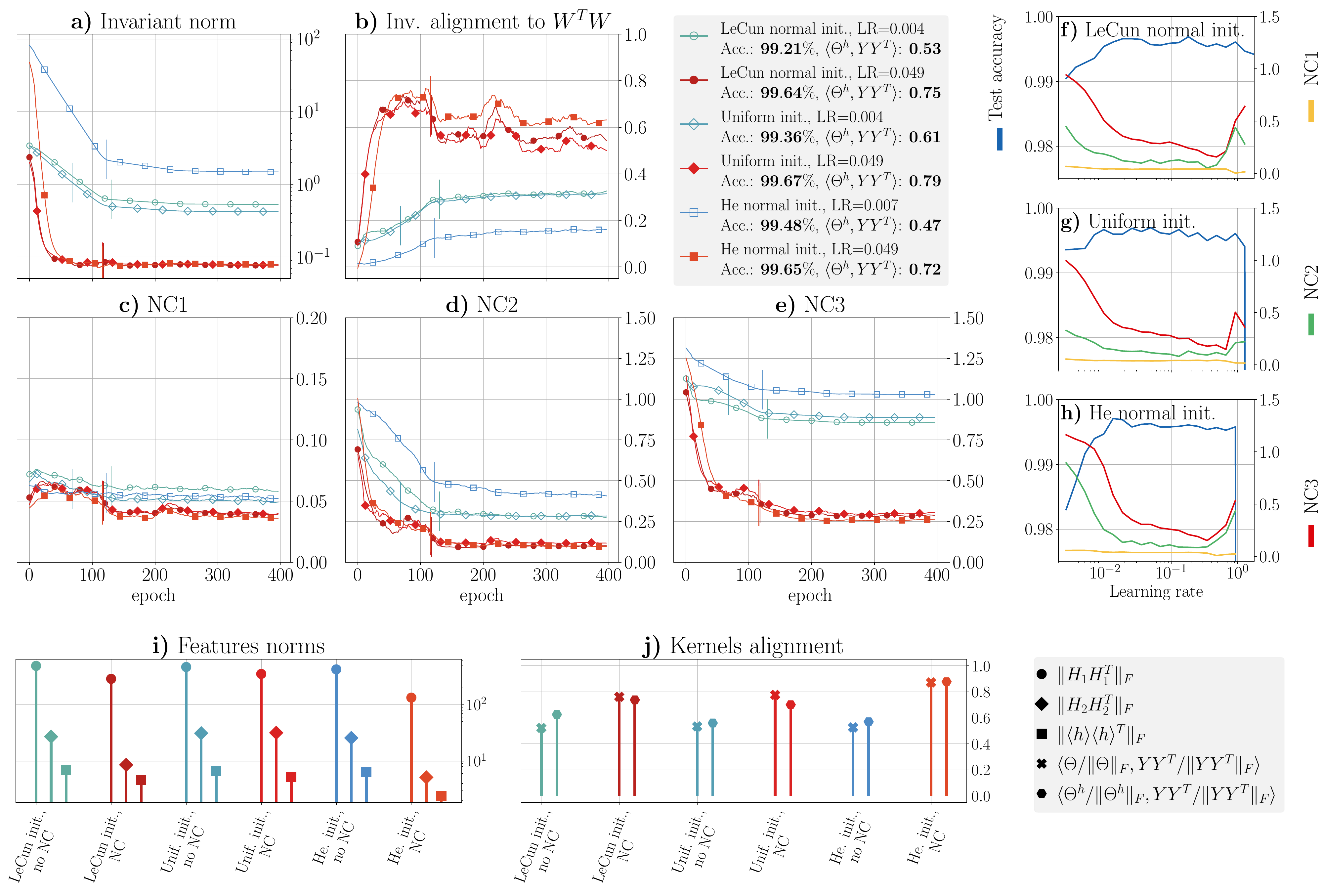}
    \caption{DenseNet40 trained on MNIST. See Figure \ref{fig:resnet_mnist} for the description of panes a-h. \textbf{i)} Norms of matrices $\H_1\H_1^\top$, $\H_2\H_2^\top$, and $\langle h \rangle\langle h \rangle^\top$ at the end of training. \textbf{j)} Alignment of kernels $\Theta$ and $\Theta^h$ at the end of training. The color in panes i-j is the color of the same model in panes a-e. }
    \label{fig:densenet_mnist}
\end{figure}

\begin{figure}
    \centering
    \includegraphics[width=\textwidth]{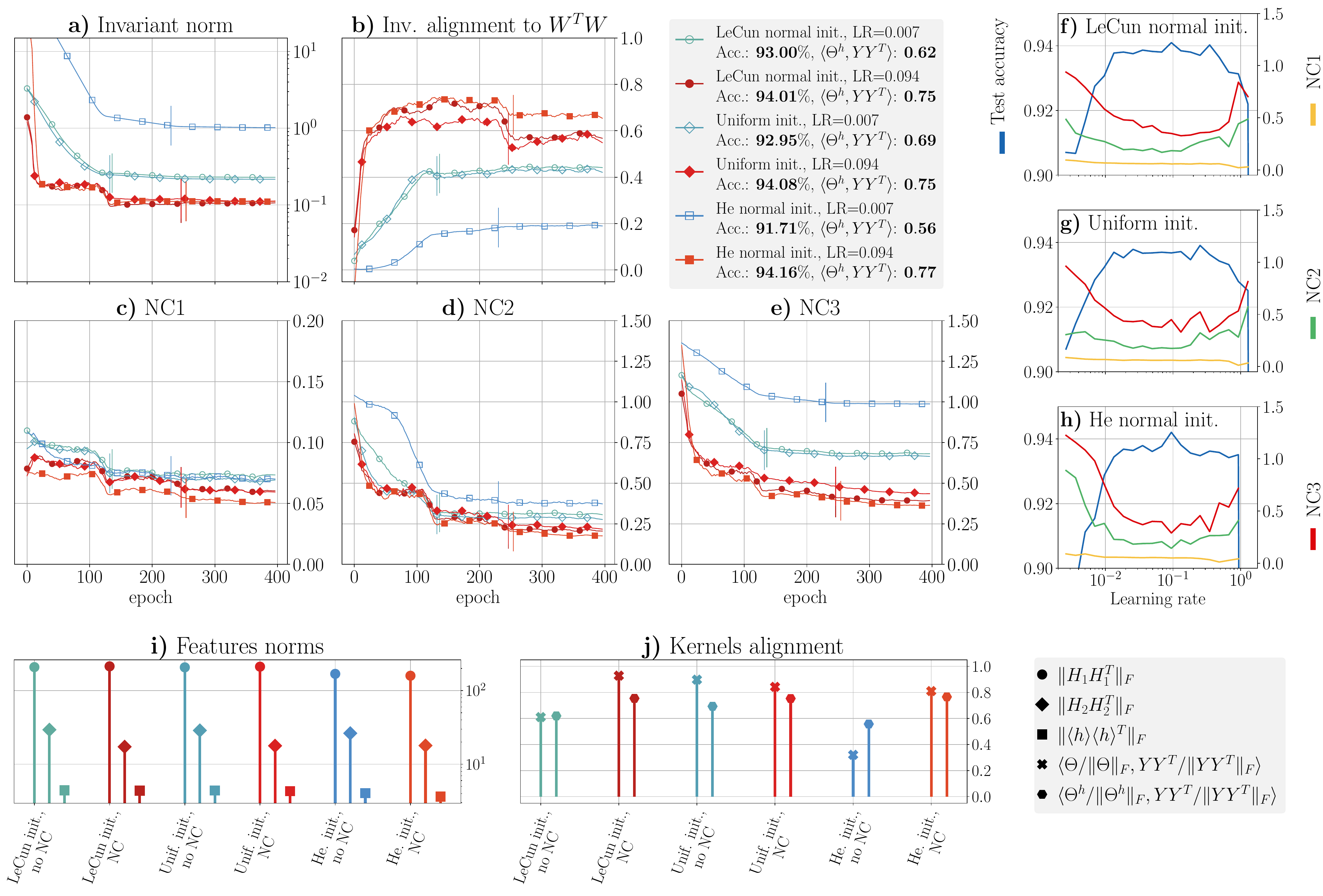}
    \caption{DenseNet40 trained on FashionMNIST. See Figure \ref{fig:resnet_mnist} for the description of panes a-h. \textbf{i)} Norms of matrices $\H_1\H_1^\top$, $\H_2\H_2^\top$, and $\langle h \rangle\langle h \rangle^\top$ at the end of training. \textbf{j)} Alignment of kernels $\Theta$ and $\Theta^h$ at the end of training. The color in panes i-j is the color of the same model in panes a-e. }
    \label{fig:densenet_fmnist}
\end{figure}

\begin{figure}
    \centering
    \includegraphics[width=\textwidth]{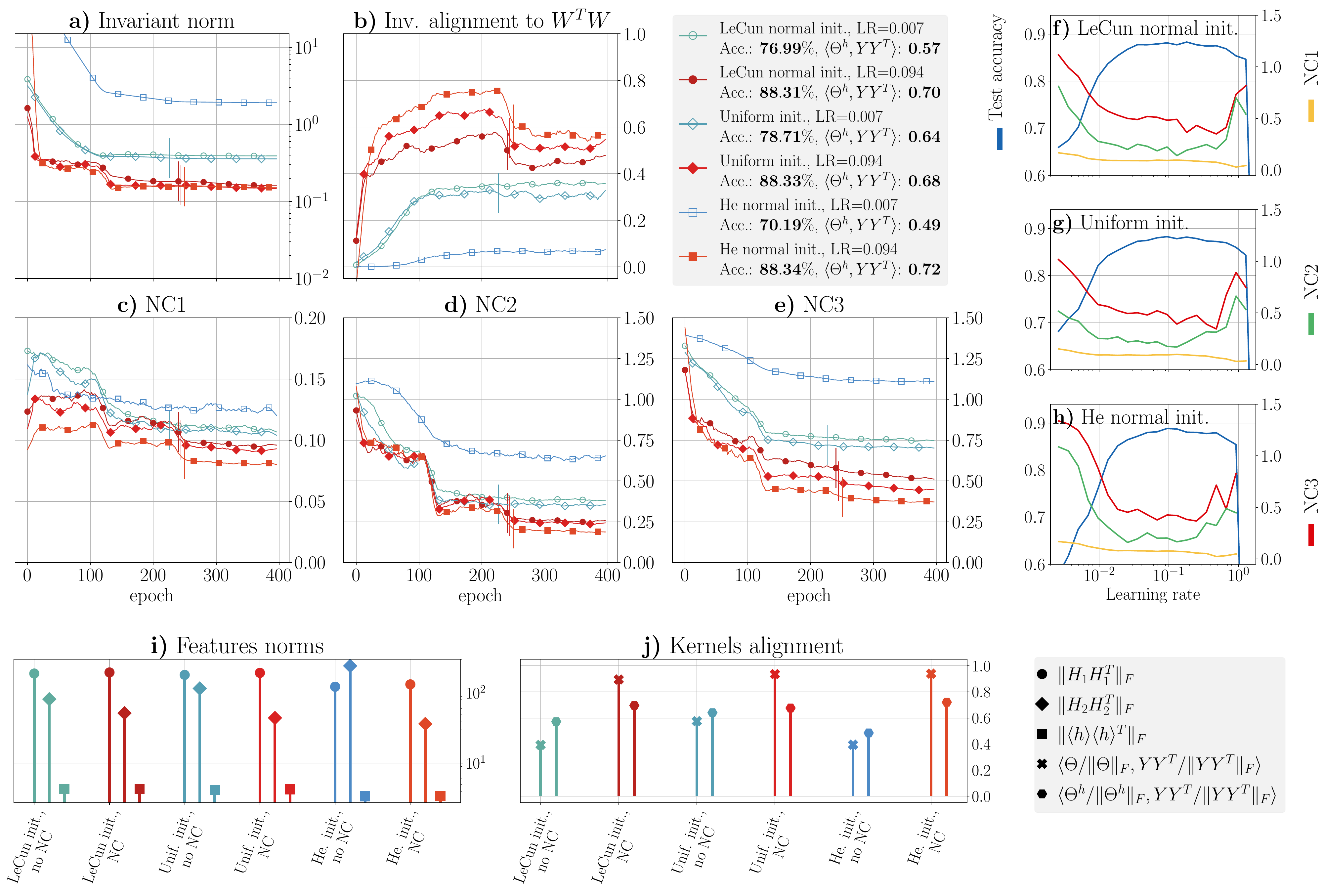}
    \caption{DenseNet40 trained on CIFAR10. See Figure \ref{fig:resnet_mnist} for the description of panes a-h. \textbf{i)} Norms of matrices $\H_1\H_1^\top$, $\H_2\H_2^\top$, and $\langle h \rangle\langle h \rangle^\top$ at the end of training. \textbf{j)} Alignment of kernels $\Theta$ and $\Theta^h$ at the end of training. The color in panes i-j is the color of the same model in panes a-e. }
    \label{fig:densenet_cifar}
\end{figure}

\begin{figure}
    \centering
    \includegraphics[width=\textwidth]{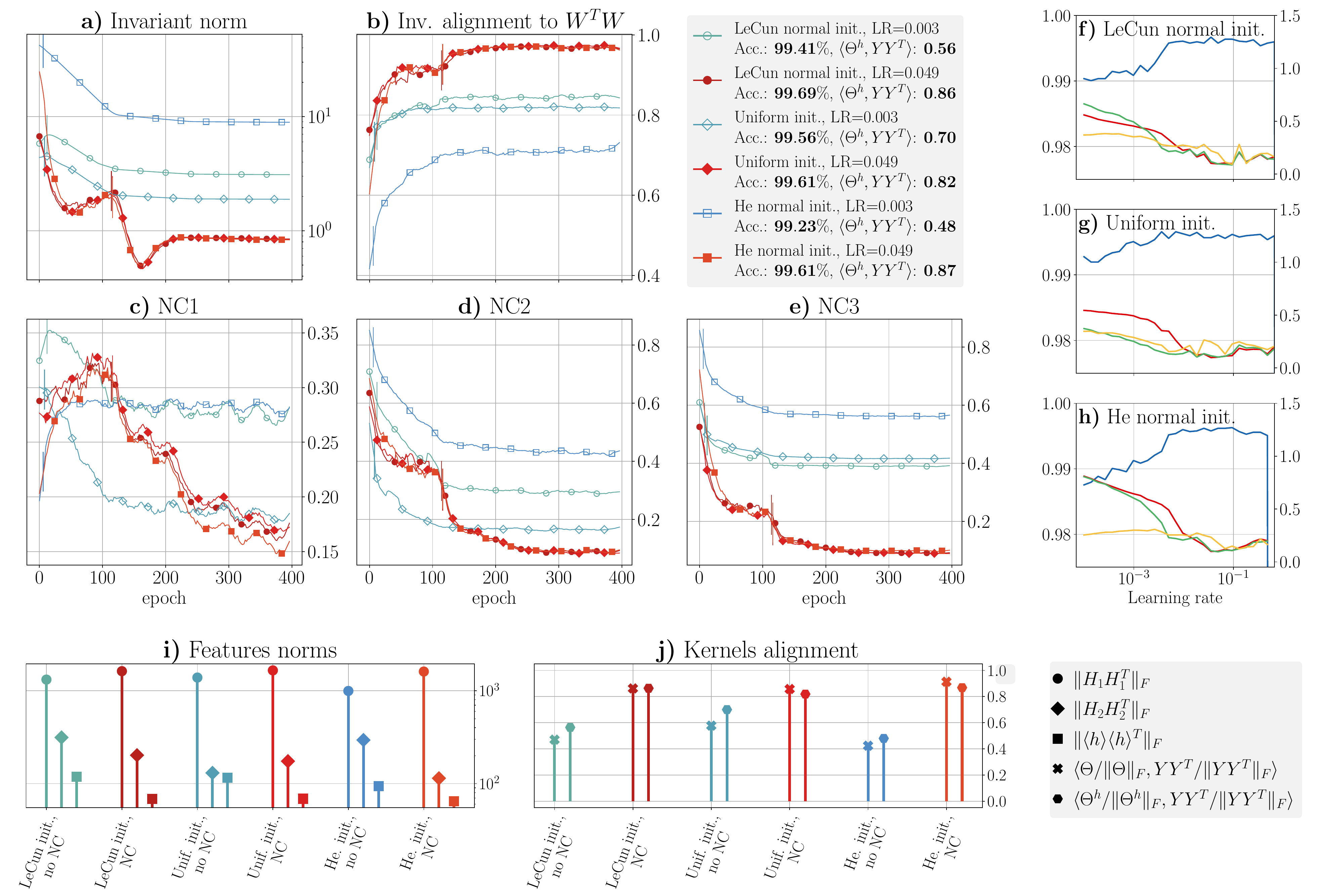}
    \caption{ResNet20 trained on MNIST with \textit{CE loss}. See Figure \ref{fig:resnet_mnist} for the description of panes a-h. \textbf{i)} Norms of matrices $\H_1\H_1^\top$, $\H_2\H_2^\top$, and $\langle h \rangle\langle h \rangle^\top$ at the end of training. \textbf{j)} Alignment of kernels $\Theta$ and $\Theta^h$ at the end of training. The color in panes i-j is the color of the same model in panes a-e. }
    \label{fig:resnet_mnist_ce}
\end{figure}

\begin{figure}
    \centering
    \includegraphics[width=\textwidth]{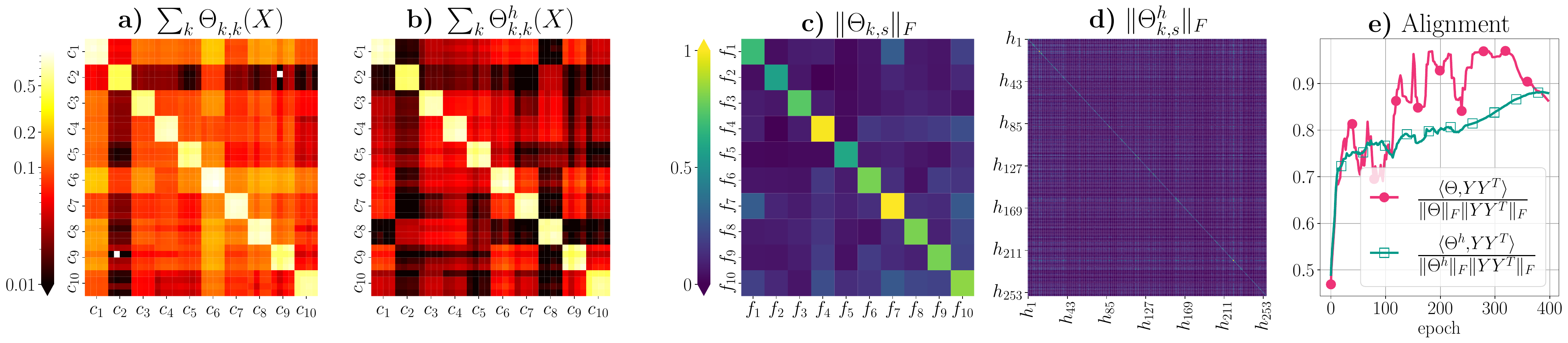}
    \caption{NTK block structure of VGG11 trained on MNIST. LeCun normal initialization, initial learning rate $0.131$. The kernel is computed on a random data subset with 4 samples from each class. See Figure \ref{fig:block_structured_ntk} for the description of panes. }
    \label{fig:vgg_mnist_ntk}
\end{figure}

\begin{figure}
    \centering
    \includegraphics[width=\textwidth]{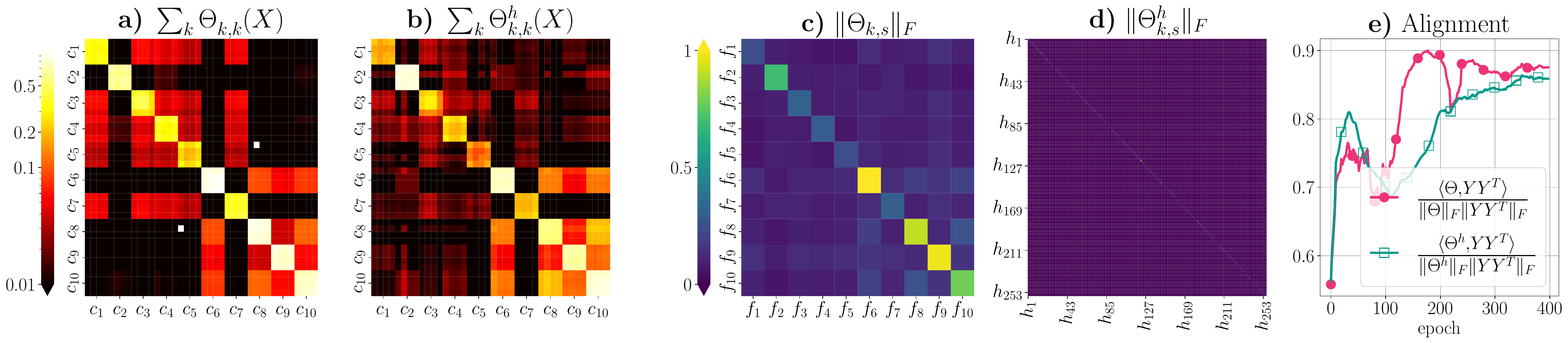}
    \caption{NTK block structure of VGG11 trained on FashionMNIST. LeCun normal initialization, initial learning rate $0.049$. The kernel is computed on a random data subset with 4 samples from each class. See Figure \ref{fig:block_structured_ntk} for the description of panes. }
    \label{fig:vgg_fmnist_ntk}
\end{figure}

\begin{figure}
    \centering
    \includegraphics[width=\textwidth]{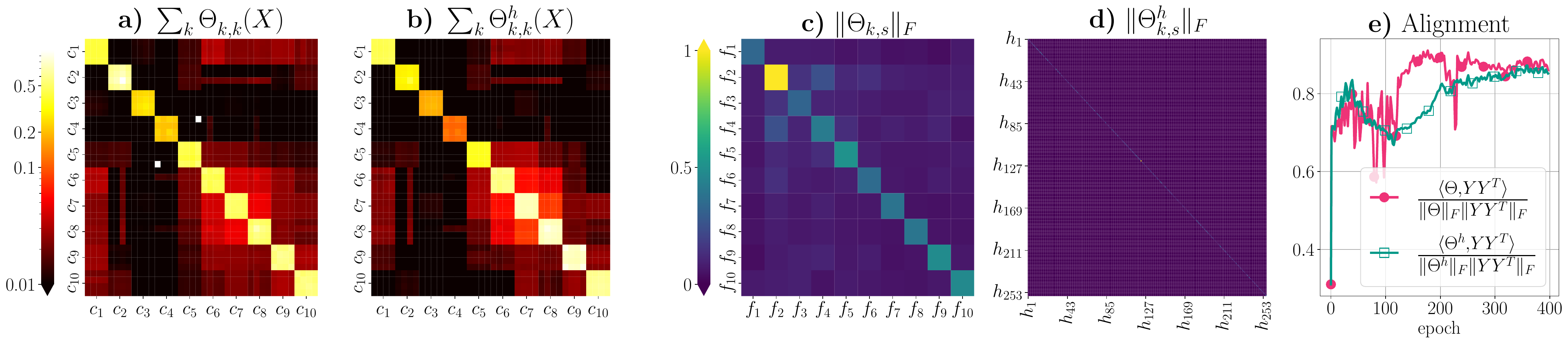}
    \caption{NTK block structure of VGG11 trained on CIFAR10. LeCun normal initialization, initial learning rate $0.131$. The kernel is computed on a random data subset with 4 samples from each class. See Figure \ref{fig:block_structured_ntk} for the description of panes. }
    \label{fig:vgg_cifar_ntk}
\end{figure}

\begin{figure}
    \centering
    \includegraphics[width=\textwidth]{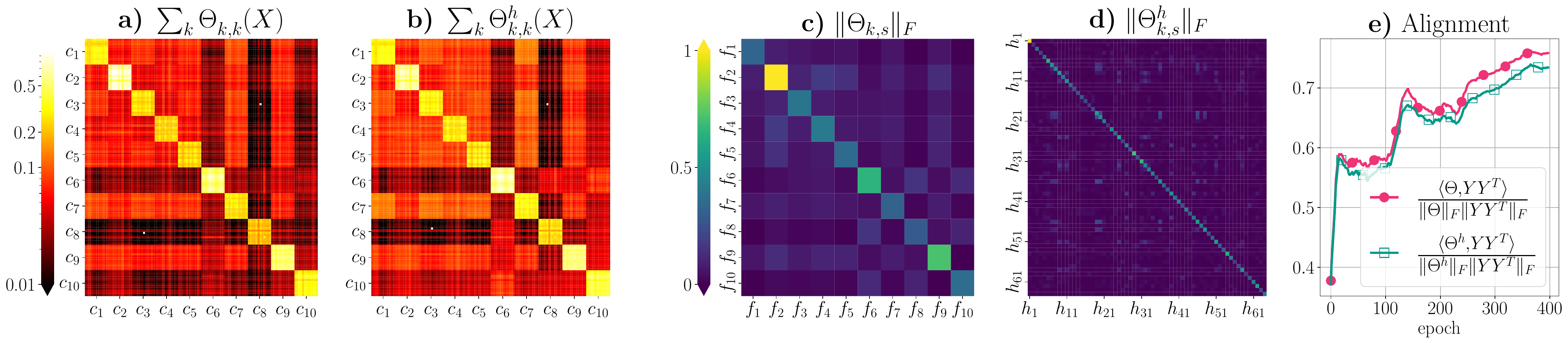}
    \caption{NTK block structure of ResNet20 trained on FashionMNIST. LeCun normal initialization, initial learning rate $0.094$. The kernel is computed on a random data subset with 12 samples from each class. See Figure \ref{fig:block_structured_ntk} for the description of panes. }
    \label{fig:resnet_fmnist_ntk}
\end{figure}

\begin{figure}
    \centering
    \includegraphics[width=\textwidth]{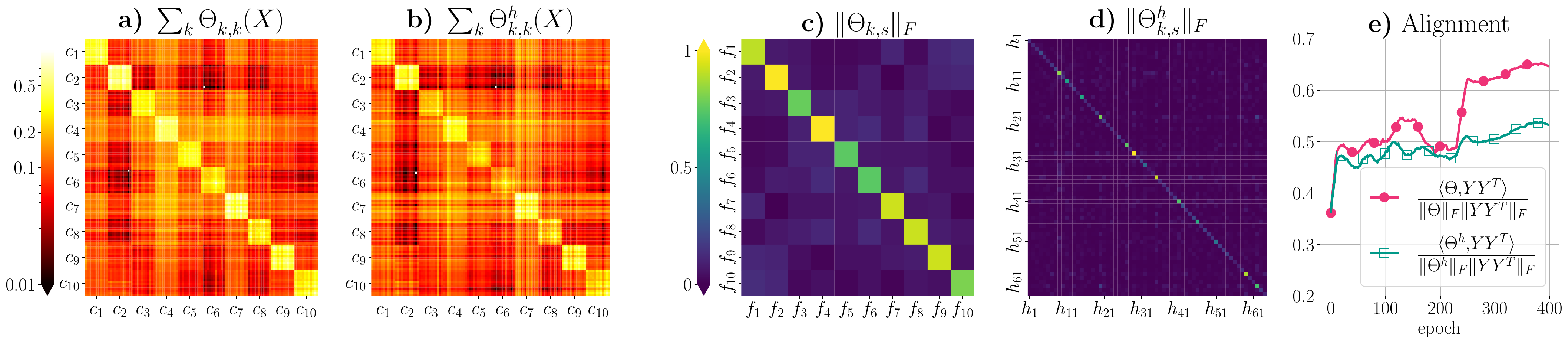}
    \caption{NTK block structure of ResNet20 trained on CIFAR10. LeCun normal initialization, initial learning rate $0.068$. The kernel is computed on a random data subset with 12 samples from each class. See Figure \ref{fig:block_structured_ntk} for the description of panes. }
    \label{fig:resnet_cifar_ntk}
\end{figure}

\begin{figure}
    \centering
    \includegraphics[width=\textwidth]{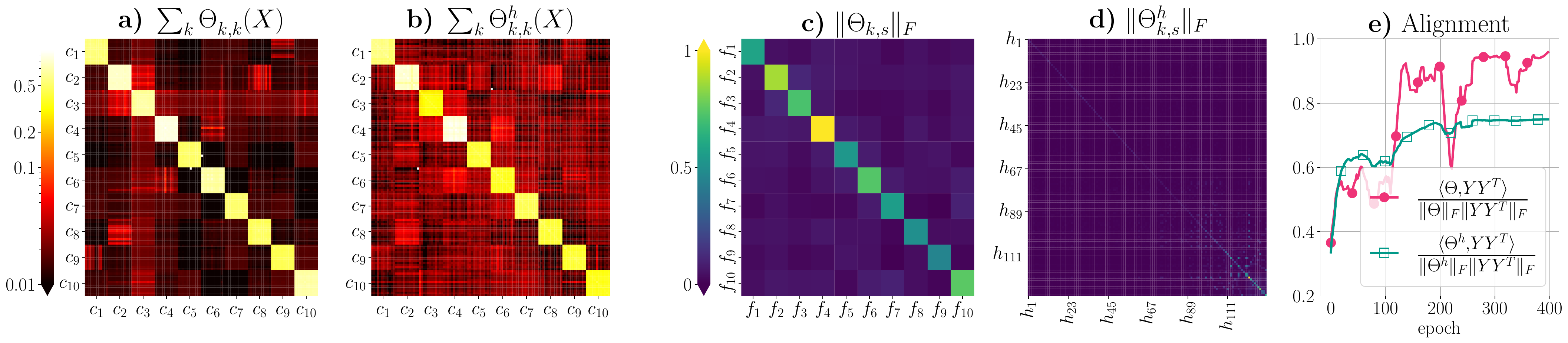}
    \caption{NTK block structure of DenseNet40 trained on MNIST. LeCun normal initialization, initial learning rate $0.049$. The kernel is computed on a random data subset with 12 samples from each class. See Figure \ref{fig:block_structured_ntk} for the description of panes. }
    \label{fig:densenet_mnist_ntk}
\end{figure}

\begin{figure}
    \centering
    \includegraphics[width=\textwidth]{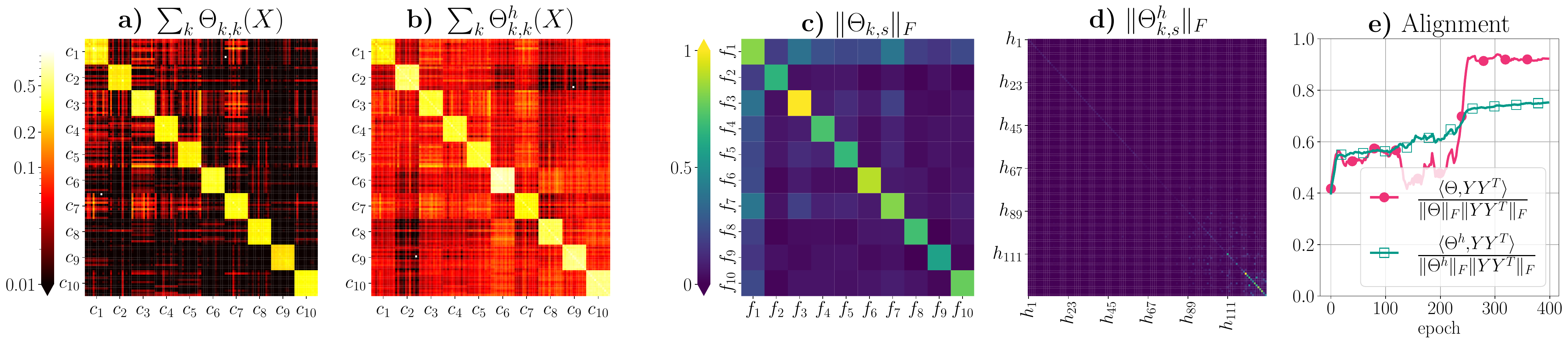}
    \caption{NTK block structure of DenseNet40 trained on FashionMNIST. LeCun normal initialization, initial learning rate $0.094$. The kernel is computed on a random data subset with 12 samples from each class. See Figure \ref{fig:block_structured_ntk} for the description of panes. }
    \label{fig:densenet_fmnist_ntk}
\end{figure}

\begin{figure}
    \centering
    \includegraphics[width=\textwidth]{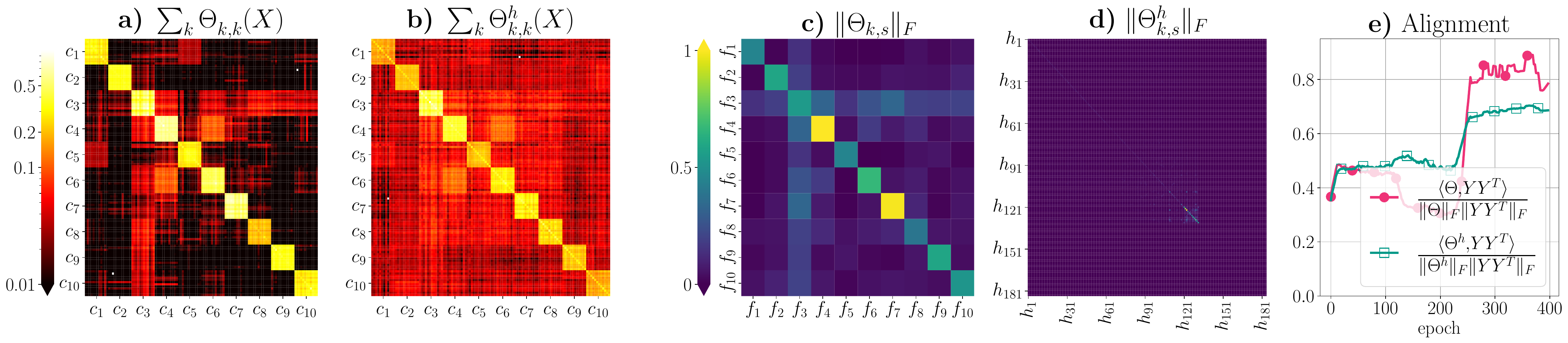}
    \caption{NTK block structure of DenseNet40 trained on CIFAR10. LeCun normal initialization, initial learning rate $0.094$. The kernel is computed on a random data subset with 12 samples from each class. See Figure \ref{fig:block_structured_ntk} for the description of panes. }
    \label{fig:densenet_cifar_ntk}
\end{figure}

\end{document}